\title[Natural Compression  for Distributed Deep Learning]{Natural Compression  for Distributed Deep Learning}
\newcommand{\NC}{\cC_{\rm nat}}
\newcommand{\Qint}{\cC_{\rm int}}
\newcommand{\ND}{\cD_{\rm nat}^{p,s}}
\newcommand{\GD}{\cD_{\rm gen}^{\cC ,p,s}}
\newcommand{\SD}{\cD_{\rm sta}^{p,s}}
\newcommand{\SDs}[1]{\cD_{\rm sta}^{p,#1}}
\newcommand{\reals}{\mathbb{R}}
\newcommand{\R}{\mathbb{R}} 
\newcommand{\cC}{{\cal C}}
\newcommand{\cD}{{\cal D}}
\newcommand{\cO}{{\cal O}}
\newcommand{\cS}{{\cal S}}
\newcommand{\eqdef}{\coloneqq}
\newcommand{\dotprod}[1]{\left< #1\right>} 
\newcommand{\norm}[1]{\left\lVert #1 \right\rVert}      
\DeclareMathOperator{\signum}{sign}     
\DeclarePairedDelimiter\ceil{\lceil}{\rceil}
\DeclarePairedDelimiter\floor{\lfloor}{\rfloor}
\newcommand{\E}[1]{{\rm E}\left[#1\right] } 
\newcommand{\Esimple}{{\rm E}} 
\newcommand{\EE}[2]{{\rm E}_{#1}\left[#2\right] } 
\newcommand{\abs}[1]{| #1 |} 
\newtheorem{assumption}{Assumption} 
\thanks{Majority of work done while SH was a Ph.D. student at KAUST.}, Abu Dhabi, UAE
\thanks{Majority of work done while LH was an MS student at Comenius University and a research intern at KAUST.}, Bratislava, Slovakia%
 \let\Ginclude@graphics\@org@Ginclude@graphics
\begin{document}

\maketitle

\begin{abstract}%
  Modern deep learning models are often trained in parallel over a collection of distributed machines to reduce training time. In such settings, communication of model updates among machines becomes a significant performance bottleneck, and various lossy update compression techniques have been proposed to alleviate this problem. In this work, we introduce a new, simple yet theoretically and practically effective compression technique: {\em natural compression ($\NC$)}. Our technique is applied individually to all entries of the to-be-compressed update vector. It works by randomized rounding to the nearest (negative or positive) power of two, which can be computed in a ``natural'' way by ignoring the mantissa. We show that compared to no compression,  $\NC$ increases the second moment of the compressed vector by not more than the tiny factor  $\nicefrac{9}{8}$, which means that the effect of $\NC$ on the convergence speed of popular training algorithms, such as distributed SGD, is negligible. However, the communications savings enabled by $\NC$ are substantial,  
leading to {\em $3$-$4\times$ improvement in overall theoretical running time}. For applications requiring more aggressive compression, we generalize $\NC$ to {\em natural dithering}, which we prove is {\em exponentially better} than the common random dithering technique. Our compression operators can be used on their own or in combination with existing operators for a more aggressive combined effect while offering new state-of-the-art theoretical and practical performance.
\end{abstract}

\begin{keywords}%
  Distibuted Optimization, Stochastic Optimization, Non-convex Optimization, Gradient Compression%
\end{keywords}

\section{Introduction}

 Modern deep learning models~\citep{resnet} are almost invariably trained in parallel or distributed environments, which is necessitated by the enormous size of the data sets and the dimension and complexity of the models required to obtain state-of-the-art performance.  Our work focuses on the \emph{data-parallel} paradigm, in which the training data is split across several workers capable of operating in parallel~\citep{bekkerman2011scaling,recht2011hogwild}.  Formally, we consider optimization problems of the form
\begin{align}
\textstyle  \min \limits_{x \in \R^d}  f(x) := \frac{1}{n} \sum \limits_{i=1}^n f_i(x)  \,, \label{eq:probR}
\end{align}
where  $x\in \R^d$ represents the parameters of the model, $n$ is the number of workers, and $f_i \colon \R^d \to \R$ is a loss function composed of data stored on worker $i$.  Typically, $f_i$ is  modelled as a function of the form $f_i(x) := \EE{\zeta \sim \cD_i}{f_\zeta(x)}$, 
where $\cD_i$ is the distribution of data stored on worker $i$, and 
$f_{\zeta} \colon \R^d \to \R$ is the loss of model $x$ on data point $\zeta$. The distributions $\cD_1, \dots, \cD_n$ can be different on every node, which means that the functions $f_1,\dots, f_n$ may have different minimizers. This framework covers i) stochastic optimization when either $n=1$ or all $\cD_i$ are identical, and ii) empirical risk minimization when $f_i(x)$ can be expressed as a finite average, i.e, $\frac{1}{m_i} \sum_{i=1}^{m_i} f_{ij}(x)$ for some $f_{ij}:\R^d\to \R$. 
\newline
{\bf Distributed Learning.}  Typically, problem \eqref{eq:probR} is solved by distributed stochastic gradient descent (SGD)~\citep{SGD}, which works as follows: stochastic gradients $g_i(x^k)$'s are computed locally and sent to a master node, which performs update aggregation $g^k = \sum_i g_i(x^k)$. The aggregated gradient $g^k$  is sent back to the workers and each performs a single step of SGD: $x^{k+1} = x^k - \frac{\eta^k}{n}  g^k$, where $\eta^k>0$ is a step size.
\newline
A key bottleneck of the above algorithm, and its many variants (e.g.,  variants utilizing mini-batching~\citep{Goyal2017:large}, importance sampling~\citep{horvath2018nonconvex}, momentum ~\citep{nesterov2013introductory}, or variance reduction~\citep{johnson2013accelerating}), is the cost of communication of the typically dense gradient vector $g_i(x^k)$, and in a parameter-sever implementation with a master node,  also the cost of broadcasting the aggregated gradient $g^k$. These are  $d$ dimensional vectors of floats, with $d$ being very large in modern deep learning. 
It is well-known~\citep{1bit,qsgd,zipml,deep,lim20183lc}
that in many practical applications with common computing architectures, communication takes much more time than computation, creating a bottleneck in the entire training system.
\newline
{\bf Communication Reduction.} Several solutions were suggested in the literature as a remedy to this problem. In one strain of work, the issue is addressed by giving each worker ``more work'' to do, which results in a better communication-to-computation ratio. For example, one may use mini-batching to construct more powerful gradient estimators~\citep{Goyal2017:large}, define local problems for each worker to be solved by a more advanced  local solver~\citep{Shamir2014:approxnewton}
Or reduce communication frequency (e.g., by communicating only once~\citep{Mann2009:parallelSGD}
or once every few iterations~\citep{Stich2018:localsgd}). An orthogonal approach to the above efforts aims to reduce the size of the communicated vectors instead~\citep{1bit,qsgd,terngrad}
 using various lossy (and often randomized) {\em compression} mechanisms, commonly known in the literature as quantization techniques. In their most basic form, these schemes decrease the \# bits used to represent floating point numbers forming the communicated $d$-dimensional vectors~\citep{Gupta:2015limited, Na2017:limitedprecision}, thus reducing the size of the communicated message by a constant factor. Another possibility is to apply randomized \emph{sparsification} masks to the gradients~\citep{Suresh2017, RDME}, 
  or to rely on coordinate/block descent updates rules, which are sparse by design~\citep{Hydra2}. 
\newline
One of the most critical considerations in the area of compression operators is the  {\em compression-variance} trade-off \citep{RDME, qsgd}.
For instance, while random dithering approaches attain up to $\cO(d^{\nicefrac{1}{2}})$ compression~\citep{1bit,qsgd,terngrad}, the most aggressive schemes reach $\cO(d)$ compression by sending a constant number of bits per iteration only~\citep{Suresh2017,RDME}.
 However, the more compression is applied, the more information is lost, and the more will the quantized vector differ from the original vector we want to communicate, increasing its statistical variance. Higher variance implies slower convergence \citep{qsgd},
 i.e., more communication rounds. So, ultimately, compression approaches offer a trade-off between the communication cost per iteration and the number of communication rounds. 
\newline
Outside of the optimization for machine learning, compression operators are very relevant to optimal quantization theory and control theory~\citep{elia2001stabilization, sun2011scalar, sun2012framework}.
\begin{figure}[!t]
\centering
\begin{minipage}{.6\textwidth}
\centering
 \includegraphics[width=0.4\textwidth]{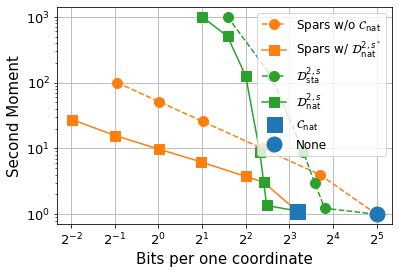}
\caption{
Communication (in bits) vs. the second moment $\omega+1$ (see Equation~\eqref{def:Q-unbiased_sec_moment}) for several state-of-the-art compressors applied to a gradient  of size $d=10^6$. Our methods ($\NC$ and $\ND$) are depicted with a square marker. For any fixed communication budget, natural dithering offers an exponential improvement on standard dithering, and when used in composition with sparsification, it offers an order of magnitude improvement.  }
\label{fig:comp_var_bits}
\end{minipage}
\hfill
\begin{minipage}{.38\textwidth}
\centering
\includegraphics[width=1\textwidth]{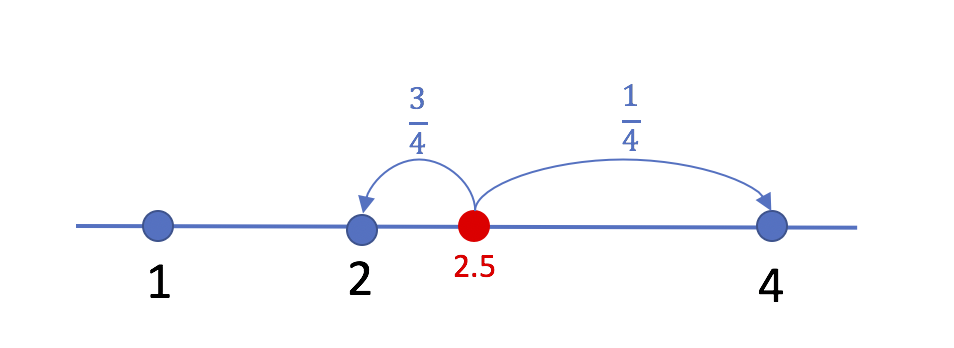}
\caption{ 
An illustration of nat.\ compression applied to $t=2.5$: $\NC(2.5) = 2$ with probability $\frac{4-2.5}{2}=0.75$, and $\NC(2.5) = 4$ with prob.\ $\frac{2.5-2}{2}=0.25$. This choice of probabilities ensures that the compression operator is unbiased, i.e., $\E{\NC(t)}\equiv t$.}
\label{fig:nur}
\end{minipage}

\end{figure}
\newline
{\bf Summary of Contributions.} The key contributions of this work are following:
\newline
\textbf{$\bullet$ New compression operators.} We construct a new  {\em ``natural compression''} operator ($\NC$; see Sec.~\ref{SEC:NAT_COMPRESSION}) based on a randomized rounding scheme in which each float of the compressed vector is rounded to a (positive or negative) power of 2. 
This compression has a provably small variance, at most $\nicefrac{1}{8}$ (see Theorem~\ref{LEM:BR_QUANT}), which implies that theoretical convergence results of SGD-type methods are essentially unaffected (see Theorem~\ref{THM:ARBSGD_SHORT}). At the same time,  substantial savings are obtained in the amount of communicated bits per iteration ($3.56\times$ less for {\em float32} and $5.82\times$ less for {\em float64}). In addition, we utilize these insights and develop a new random dithering operator---{\em natural dithering} ($\ND$; see Sec.~\ref{SEC:ND})---which is {\em exponentially better} than the very popular ``standard'' random dithering operator (see Theorem~\ref{THM:EXPON_BETTER}). We remark that $\NC$ and the identity operator arise as limits of $\ND$ and $\SD$ as $s \to \infty$, respectively.  Importantly, our new compression techniques can be {\em combined} with existing quantization and sparsification operators for a more dramatic effect as we argued before. 
\newline
\textbf{$\bullet$ State-of-the-art compression.} 
When compared to previous state-of-the-art compressors such as (any variant of) sparsification and dithering---techniques used in methods such as  Deep Gradient Compression~\citep{deep}, QSGD~\citep{qsgd} and TernGrad~\citep{terngrad}---our compression operators offer provable and often large improvements in practice, thus leading to {\em new state-of-the-art}.  In particular, given a budget on the second moment $\omega+1$ (see Equation~\eqref{def:Q-unbiased_sec_moment}) of a compression operator, which is the main factor influencing the increase in the number of communications when communication compression is applied compared to no compression,
 our compression operators offer the largest compression factor, resulting in fewest bits transmitted (see Figure~\ref{fig:comp_var_bits}). 
\newline
\textbf{$\bullet$ Lightweight \& simple low-level implementation.} We show that apart from a randomization procedure (which is inherent in all unbiased compression operators), natural compression is {\em computation-free}. Indeed, natural compression essentially amounts to the trimming of the mantissa and possibly increasing the exponent by one. This is the first compression mechanism with such a  ``natural'' compatibility with binary floating point types.
\newline
\textbf{$\bullet$ Proof-of-concept system with in-network aggregation (INA).} The recently proposed SwitchML~\citep{switchML} system alleviates the communication  bottleneck via  in-network  aggregation (INA) of gradients. Since current programmable network switches are only capable of adding integers, new update compression methods are needed which can supply outputs in an integer format. Our {\em natural compression} mechanism is the first that is provably able to operate in the SwitchML framework as it communicates integers only: the sign, plus the bits forming the exponent of a float. Moreover,  having bounded (and small) variance, it is compatible with existing distributed training methods. 
\newline
\textbf{$\bullet$ Bidirectional compression for SGD.} We provide  convergence theory for distributed SGD which allows for {\em compression both at the worker and master side}  (see Algorithm~\ref{ALG:ARBSGD}).  The compression operators compatible with our theory form a large family (operators $\cC\in \mathbb{B}(\omega)$ for some finite $\omega\geq 0$; see Definition~\ref{def:omegaquant}). This enables safe experimentation with existing and facilitates the development of new compression operators fine-tuned to specific deep learning model architectures.  Our convergence result (Theorem~\ref{ALG:ARBSGD})  applies to  smooth and non-convex functions, and our rates predict  linear speed-up with respect to the number of machines. 
\newline
\textbf{$\bullet$ Better total complexity.} Most importantly, we are the first to {\em prove} that the increase in the number of iterations caused by (a carefully designed) compression is more than compensated by the savings in communication, which leads to an overall provable speedup in training time. Read Theorem~\ref{THM:ARBSGD_SHORT}, the discussion following the theorem and Table~\ref{tab:algo-comparison_2} for more details.
To the best of our knowledge,  standard dithering (QSGD~\citep{qsgd}) is the only previously known compression technique able to achieve this with our distributed SGD with bi-directional compression. Importantly,  our natural dithering is exponentially better than standard dithering, and hence provides for state-of-the -art performance in connection with Algorithm~\ref{ALG:ARBSGD}.
\newline
\textbf{$\bullet$ Experiments.} We show that $\NC$ significantly reduces the training time compared to no compression. We provide empirical evidence in the form of scaling experiments, showing that $\NC$ does not hurt convergence when the number of workers is growing. We also show that  popular compression methods such as random sparsification and random dithering are enhanced by combination with natural compression or natural dithering (see Appendix~\ref{sec:extra_exps}). The combined compression technique reduces the number of communication rounds without any noticeable impact on convergence providing the same quality solution.

\section{Natural Compression}
\label{SEC:NAT_COMPRESSION}

\begin{figure}[t]
\centering
\begin{minipage}[b]{.95\textwidth}
        \centering
		\includegraphics[width=.8\textwidth]{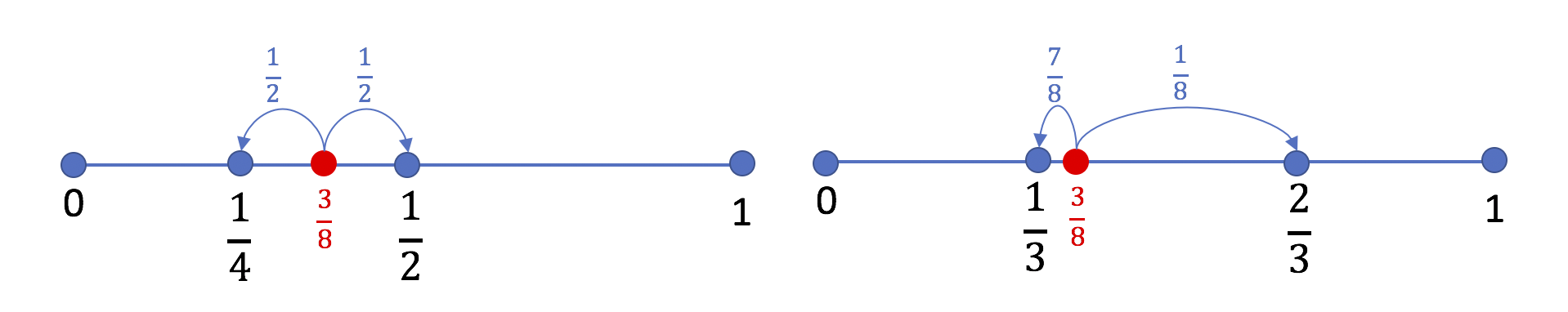}
		\caption{Randomized rounding for natural (left) and standard  (right) dithering ($s=3$ levels).}
\label{fig:rd_vs_brd}
\end{minipage} 
\\
\begin{minipage}[b]{.95\textwidth}
\centering
\includegraphics[width=1\textwidth]{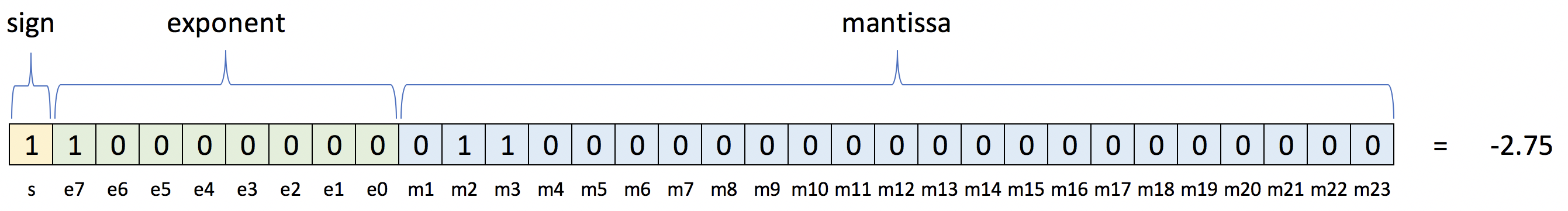}
\caption[]{IEEE 754 single-precision binary floating-point format: {\em binary$32$}.}
\label{fig:float32}
\end{minipage}
\end{figure}

We define a new  (randomized) compression technique, which we call {\em natural compression}. This is fundamentally a function mapping $t\in \R$ to a random variable $\NC(t)\in \R$. In case of vectors  $x=(x_1,\dots,x_d)\in \R^d$ we apply it in an element-wise fashion: $(\NC(x))_i = \NC(x_i)$. Natural compression $\NC$ performs a randomized logarithmic rounding of its input $t\in \R$. Given non-zero $t$, let $\alpha\in \R$ be such that $|t|=2^\alpha$ (i.e., $\alpha = \log_2 |t|$). Then $ 2^{\lfloor  \alpha \rfloor} \leq |t| = 2^\alpha \leq 2^{\lceil \alpha \rceil}$ and we round $t$ to either $\signum (t) 2^{\lfloor \alpha \rfloor}$, or to $\signum (t) 2^{\lceil \alpha \rceil}$. When $t=0$, we set $\NC(0)=0$. The probabilities are chosen so that $\NC(t)$ is an unbiased estimator of $t$, i.e., $\E{\NC(t)} = t$ for all $t$. For instance, $t=-2.75$ will be rounded to either $-4$ or $-2$ (since $-2^2 \leq -2.75 \leq -2^1$), and  $t = 0.75$ will be rounded to either   $\nicefrac{1}{2}$ or $1$ (since $2^{-1} \leq 0.75 \leq 2^0$). As a consequence, if  $t$ is an integer power of 2, then $\NC$ will leave $t$ unchanged, see Figure~\ref{fig:nur}.  
\begin{definition}[Natural compression]\label{def:NC}
Natural compression is a random function $\NC: \R \mapsto \R$ defined as follows. We set $\NC(0)=0$. If $t\neq 0$, we let
 \begin{equation}
\NC(t) \eqdef \begin{cases} \signum (t) \cdot 2^{\floor{\log_2 \abs{t}}} , \text{ with }  p(t),\\
\signum(t) \cdot  2^{\ceil{\log_2 \abs{t}}} , \text{ with } 1-p(t),
\end{cases}
\label{def:rr_vec}
 \end{equation}
 where probability $ p(t)\eqdef \frac{2^{\ceil{\log_2 \abs{t}}}-\abs{t}}{2^{\floor{\log_2 \abs{t}}}}.$
\end{definition}
Alternatively, \eqref{def:rr_vec} can be written as 
$
 \NC(t)= \signum (t) \cdot 2^{\floor{\log_2 \abs{t}}}(1+\lambda(t)),
$
  where $\lambda(t)\sim {\rm Bernoulli}(1-p(t))$; that is, $\lambda(t)=1$ with prob.\ $1 - p(t)$ and  $\lambda(t)=0$ with prob.\ $p(t)$. The key properties of any (unbiased) compression  operator are variance, ease of implementation, and compression level. We characterize the remarkably low variance of  $\NC$  and describe an (almost) effortless and {\em natural}  implementation, and the compression it offers in rest of this section.
\newline
{\bf $\NC$ has a negligible variance: $\omega=\nicefrac{1}{8}$.}  We identify natural  compression as belonging to a large class of unbiased  compression operators with bounded second moment \citep{jiang, khirirat2018distributed,diana2}, defined below. 

\begin{definition}[Compression operators]
\label{def:omegaquant} A function $\cC\colon \R^d \to \R^d$ mapping a deterministic input to a random vector is called a {\em compression operator} (on $\R^d$).  We say that $\cC$ is {\em unbiased}  and has {\em bounded second moment} ($\omega\geq 0$) if  
\begin{equation} \E{\cC(x)}=x,\quad \Esimple \norm{\cC(x)}^2 \leq (\omega+1) \norm{x}^2 \qquad \forall x\in \R^d .\label{def:Q-unbiased_sec_moment}
\end{equation}
If $\cC$ satisfies \eqref{def:Q-unbiased_sec_moment}, we will write $\cC\in \mathbb{B}(\omega)$.  
\end{definition}

Note that  $\omega = 0$ implies $\cC(x)=x$ almost surely.   It is easy to see
 that the {\em variance} of $\cC(x)\in \mathbb{B}(\omega)$ is bounded as:
$\Esimple \norm{\cC(x) - x}^2 \leq \omega \norm{x}^2 .$
If this holds, we say that ``$\cC$ has variance $\omega$''. 
The importance of $\mathbb{B}(\omega)$ stems from two observations. First, operators from this class are known to be compatible with several optimization algorithms~\citep{khirirat2018distributed,diana2}. Second, this class includes most compression operators used in practice~\citep{qsgd,terngrad,tonko,mishchenko2019distributed}.   In general, the larger $\omega$ is, the higher compression level might be achievable, and the worse impact compression has on the convergence speed. 
\newline
The main result of this section says that the natural compression operator $\NC$  has variance $\nicefrac{1}{8}$.
\begin{theorem}
\label{LEM:BR_QUANT} $\NC\in \mathbb{B}(\nicefrac{1}{8})$. 
\end{theorem}

Consider now a similar unbiased randomized rounding operator to $\NC$; but one that rounds to one of the nearest integers (as opposed to integer powers of 2). We call it $\Qint$. At first sight, this may seem like a reasonable alternative to $\NC$. However, as we show next, $\Qint$ does not have a finite second moment and is hence incompatible with existing optimization methods.

\begin{theorem} \label{PROP:INTROUDING_NEGATIVE}
There is no $\omega\geq 0$ such that $\Qint \in \mathbb{B}(\omega)$.
\end{theorem}

{\bf From 32 to 9 bits, with lightning speed.} We now explain that performing natural compression of a real number in a binary floating point format is computationally cheap. In particular, excluding the randomization step, $\NC$ amounts to simply dispensing off the mantissa in the binary representation. The most common computer format for real numbers, {\em binary$32$} (resp.\  {\em binary$64$}) of the IEEE 754 standard, represents each number with $32$ (resp.\ $64$) bits, where the first bit represents the sign, $8$ (resp.\ $11$) bits are used for the exponent, and the remaining $23$ (resp. $52$) bits  are used for the mantissa. A scalar $t\in \R$ is represented in the form $(s,e_7,e_6,\dots,e_0,m_1,m_2,\dots,m_{23})$, where $s,e_i,m_j \in \{0,1\}$ are bits, via the relationship
$
 \textstyle t  = (-1)^s \times 2^{e-127} \times (1 + m),\;
 e  = \sum \limits_{i=0}^7 e_i 2^i, \; m = \sum \limits_{j=1}^{23} m_j 2^{-j},
$
where $s$ is the {\em sign}, $e$ is the {\em exponent} and $m$ is the {\em mantissa}.
 A  {\em binary$32$} representation of  $t=-2.75$ is visualized in Figure~\ref{fig:float32}.  In this case, $s=1$, $e_7=1$, $m_2=m_3=1$ and hence $t = (-1)^s \times 2^{e-127} \times (1+m) = -1 \times 2 \times (1+2^{-2}+2^{-3}) = -2.75$.

It is clear that $0\leq m < 1$, and hence $ 2^{e-127} \leq |t| < 2^{e-126}$. Moreover, $p(t) = \frac{2^{e-126} - |t|}{2^{e-127} } = 2 - |t|2^{127-e} = 1 - m.$ Hence, natural compression of $t$ represented as {\em binary$32$} is given as follows: 
\[\NC(t) = \begin{cases}(-1)^s \times 2^{e-127},  \text{ with probability }  1 - m, \\
(-1)^s \times 2^{e-126},  \text{ with probability }  m. \\
\end{cases}\]
Observe that $(-1)^s \times 2^{e-127}$ is obtained from $t$ by setting the mantissa $m$ to zero, and keeping both the sign $s$ and exponent $e$ unchanged. Similarly,  $(-1)^s \times 2^{e-126}$ is obtained from $t$ by setting the mantissa $m$ to zero, keeping the sign $s$, and increasing the exponent by one.
Hence, {\em both values can be computed from $t$ essentially without any computation.}
\newline
{\bf Communication savings.} In summary, in case of {\rm binary$32$}, the output $\NC(t)$ of natural compression is encoded using $8$ bits in the exponent and an extra bit for the sign. {\em This is $3.56\times$ less communication.} In case of {\rm binary$64$}, we only need $11$ bits for the exponent and $1$ bit for the sign, and {\em this is $5.82\times$ less communication.}
\newline
{\bf Compatibility with other compression techniques} We start with a simple but useful observation about composition of compression operators.
\begin{theorem}
\label{THM:COMP}
If $\cC_1\in \mathbb{B}(\omega_1)$ and $\cC_2 \in \mathbb{B}(\omega_2)$, then  $\cC_1 \circ \cC_2 \in \mathbb{B}(\omega_{12})$, where $\omega_{12} = \omega_1\omega_2+\omega_1 + \omega_2$, and  $ \cC_1\circ \cC_2$ is the composition defined by $(\cC_1\circ \cC_2)(x) = \cC_1(\cC_2(x))$. 
\end{theorem}
Combining this result with Theorem.~\ref{LEM:BR_QUANT}, we observe that for any $\cC\in \mathbb{B}(\omega)$, we have $\NC \circ \cC\in \mathbb{B}(\nicefrac{9 \omega}{8}+ \nicefrac{1}{8})$. Since $\NC$ offers substantial communication savings with only a negligible effect on the variance of $\cC$, a key use for natural compression beyond applying it as the sole compression strategy is to deploy it with other effective techniques as a final compression mechanism (e.g., with sparsifiers~\citep{stich2018sparse}), boosting the performance of the system even further. However, our technique will be useful also as a post-compression mechanism for compressions that do not belong to $\mathbb{B}(\omega)$ (e.g., Top$K$ sparsifier~\citep{Alistarh2018:topk}). The same comments apply to the {\em natural dithering} operator $\ND$, defined in the next section.

\section{Natural Dithering} \label{SEC:ND}

\begin{table}[t]

\begin{center}
\small
\begin{tabular}{|c|c|c|c|c|}
\hline
Approach &$\cC_{W_i}$  &  No. iterations  &  Bits per $1$ Iter.  & Speedup  \\
 &   & $T'(\omega_W) = \cO((\omega_W+1)^\theta)$  &  $W_i \mapsto M$ & Factor  \\
\hline 
Baseline & identity  & $1$  & $32d$  & 1   \\
\bf {\color{blue} New} & {\color{blue}$\NC$ } & {\color{blue}$(\nicefrac{9}{8})^{\theta}$} &  {\color{blue}$9d$} & {\color{blue}$3.2 \times$--$3.6\times$ } \\
\hline
Sparsification & $\cS^q$ & $(\nicefrac{d}{q})^{\theta}$ & $(33 +\log_2d)q$ & $0.6\times$--$6.0\times$   \\
\bf {\color{blue} New} & ${\color{blue}\NC} \circ \cS^q$  &  {\color{blue}$(\nicefrac{9d}{8q})^{\theta}$} & {\color{blue}$(10+\log_2d)q$} & {\color{blue}$1.0\times$--$10.7\times$ } \\
\hline
Dithering & $\SDs{2^{s-1}}$  & $(1 +\kappa d^{\nicefrac{1}{r}}2^{1-s})^{\theta}$  & $31+d(2+s)$ & $1.8\times$--$15.9\times$  \\
\bf {\color{blue} New} & {\color{blue}$\ND$}  & {\color{blue}$(\nicefrac{9}{8} + \kappa d^{\frac{1}{r}}2^{1-s})^\theta$} &  {\color{blue}$31+d(2+\log_2s  )$} & {\color{blue}$4.1\times$--$16.0\times$} \\
\hline
\end{tabular}
\end{center} 
\captionsetup{font=small}
\caption{ 
The overall speedup of distributed SGD with compression on nodes via $\cC_{W_i}$ compared to the baseline variant without compression. Speed is measured by multiplying the \# of communication rounds (i.e., iterations  $T(\omega_W)$) by the bits sent from worker $i$ to master ($W_i \mapsto M$) per single iteration. We neglect $M \mapsto W_i$ communication as in practice this is often much faster (see, e.g., \citep{mishchenko2019distributed}, for other cost/speed models; see Appendix~\ref{sec:dif_regimes}). We assume {\em binary $32$ bits} representation. The relative \# of iterations sufficient to guarantee $\varepsilon$ optimality is $T'(\omega_W) \eqdef(\omega_W+1)^\theta$, where  $\theta\in (0,1]$ (see Theorem~\ref{THM:ARBSGD_SHORT}). Note that in the big $n$ regime the iteration bound $T(\omega_W)$ is better due to $\theta\approx 0$ (however, this is not very practical as $n$ is usually small), while for small $n$ we have $\theta\approx 1$. For dithering, $r=\min\{p,2\}$,  $\kappa =  \min \{1, \sqrt{d}   2^{1-s}\}$. The lower bound for the speedup factor is obtained for $\theta=1$, and the upper bound for $\theta=0$. The speedup factor $\left(\frac{T(\omega_W)\cdot\text{\# Bits}}{T(0)\cdot 32d}\right)$ was calculated for $d=10^6$, $q=0.1d$ (10\% sparsity), $p=2$ and the optimal choice of $s$ with respect to the speedup.}
\label{tab:algo-comparison_2}

\end{table}

Motivated by the natural compression introduced in Sec~\ref{SEC:NAT_COMPRESSION}, here we propose a new random dithering operator which we call {\em natural dithering}.  However, it will be useful to introduce a more general dithering operator, one generalizing both the natural and the standard dithering operators. For $1\leq p \leq +\infty$, let $\norm{x}_p$ be $p$-norm: $\|x\|_p \eqdef (\sum_i |x_i|^p)^{\nicefrac{1}{p}}$.

\begin{definition}[General dithering] The {\em general dithering} operator with respect to the $p$ norm and with $s$ levels $0 = l_s <l_{s-1} < l_{s-2} < \dots < l_{1} < l_0 = 1$, denoted $\GD$, is defined as follows. Let $x\in \R^d$.  If $x=0$, we let $\GD(x)=0$. If $x\neq 0$,  we let $y_i \eqdef |x_i|/\|x\|_p$ for all $i\in [d]$. Assuming $l_{u+1} \leq y_i \leq l_{u}$ for some $u \in \{0,1,\dots,s-1\}$, we let
$
\left(\GD(x)\right)_i = \cC(\norm{x}_p) \times \signum(x_i) \times \xi(y_i) \; , \
$
where $\cC \in \mathbb{B}(\omega)$ for some $\omega \geq 0$ and $\xi(y_i)$ is a random variable equal to $l_u$ with probability $\tfrac{y_i-l_{u+1}}{l_{u}-l_{u+1}}$, and to $l_{u+1}$ with probability $\tfrac{l_{u}-y_i}{l_{u}-l_{u+1}}$. Note that $\E{\xi(y_i)} = y_i$.
\end{definition}

Standard (random) dithering, $\SD$,  \citep{Goodall1951:randdithering, Roberts1962:randdithering} is obtained as a special case of general dithering (which is also novel) for a linear partition of the unit interval, $l_{s-1} = \nicefrac{1}{s}$, $l_{s-2} = \nicefrac{2}{s}$, \dots, $l_1 = \nicefrac{(s-1)}{s}$ and $\cC$ equal to the identity operator. $\cD_{\rm sta}^{2,s}$ operator was used in QSGD~\citep{qsgd} and $\cD_{\rm sta}^{\infty,1}$ in Terngrad~\citep{terngrad}.  {\em Natural dithering}---a novel compression operator introduced in this paper---arises as a special case of general dithering for $\cC$ being an identity operator and a binary geometric partition of the unit interval: $l_{s-1} = 2^{1-s}$, $l_{s-2} = 2^{2-s}$, \dots, $l_1 = 2^{-1}$. For the  INA application, we apply $\cC = \NC$ to have output always in powers of $2$, which would introduce extra factor of $\nicefrac{9}{8}$ in the second moment. A comparison of the $\xi$ operators for the standard and natural  dithering with $s=3$ levels applied to $t=\nicefrac{3}{8}$ can be found in Figure~\ref{fig:rd_vs_brd}.
When $\GD$ is used to compress gradients, each worker communicates the norm ($1$ float), vector of signs ($d$ bits) and efficient encoding of the effective levels for each entry $i=1,2,\dots,d$. 
Note that $\ND$ is essentially an application of $\NC$ to all normalized entries of $x$, with two differences: i) we can also communicate the compressed norm $\|x\|_p$, ii) in $\NC$ the interval  $[0,2^{1-s}]$ is subdivided further, to machine precision, and in this sense  {\em $\ND$ can be seen as a limited precision variant of $\NC$.} As is the case with $\NC$, the mantissa is ignored, and one communicates exponents only. The norm compression is particularly useful on the master side since multiplication by a naturally compressed norm is just summation of the exponents.
\newline
The main result of this section establishes natural dithering as belonging to the class $\mathbb{B}(\omega)$:
\begin{theorem}
\label{THM:ND}
 $\ND \in \mathbb{B}(\omega)$, where   
$
\omega = \nicefrac{1}{8}   +  d^{\nicefrac{1}{r}} 2^{1-s} \min\left\{1, d^{\nicefrac{1}{r} }   2^{1-s}\right\},
$
and  $r = \min\{p,2\}$. 
\end{theorem} 

\begin{figure}[t]
    \centering
    \begin{minipage}[b]{1\linewidth}
    \captionsetup{width=1\linewidth}
    \centering
    \includegraphics[width=0.3\textwidth]{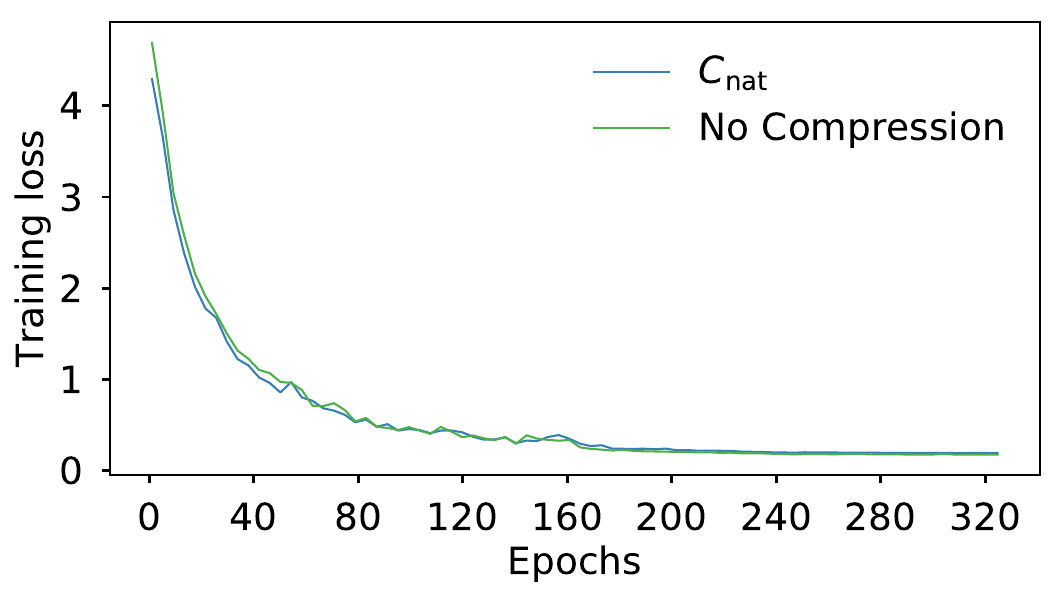}
    \includegraphics[width=0.32\textwidth]{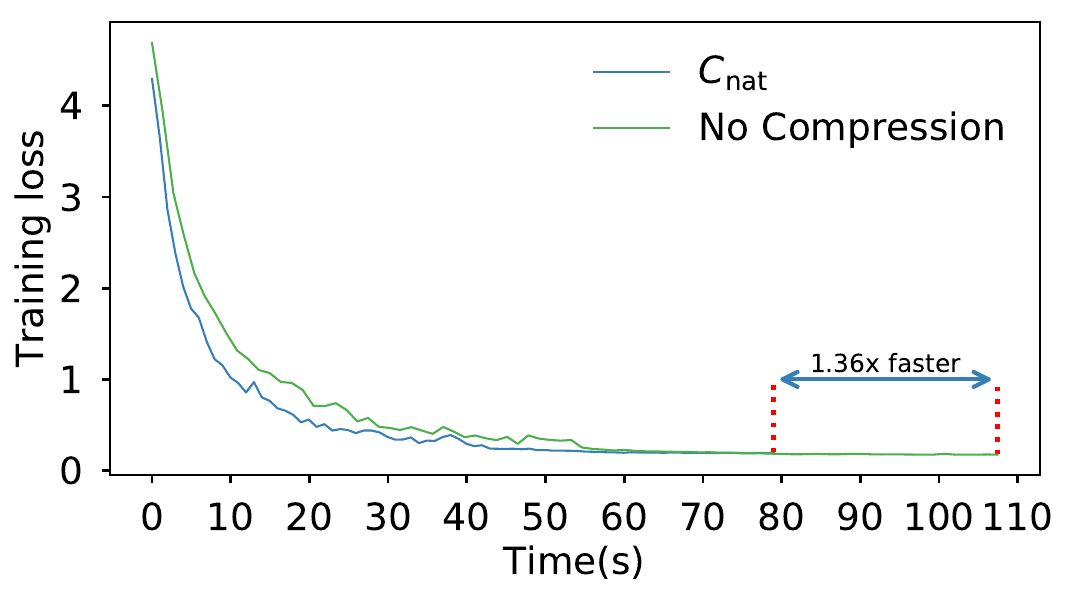}
    \includegraphics[width=0.32\textwidth]{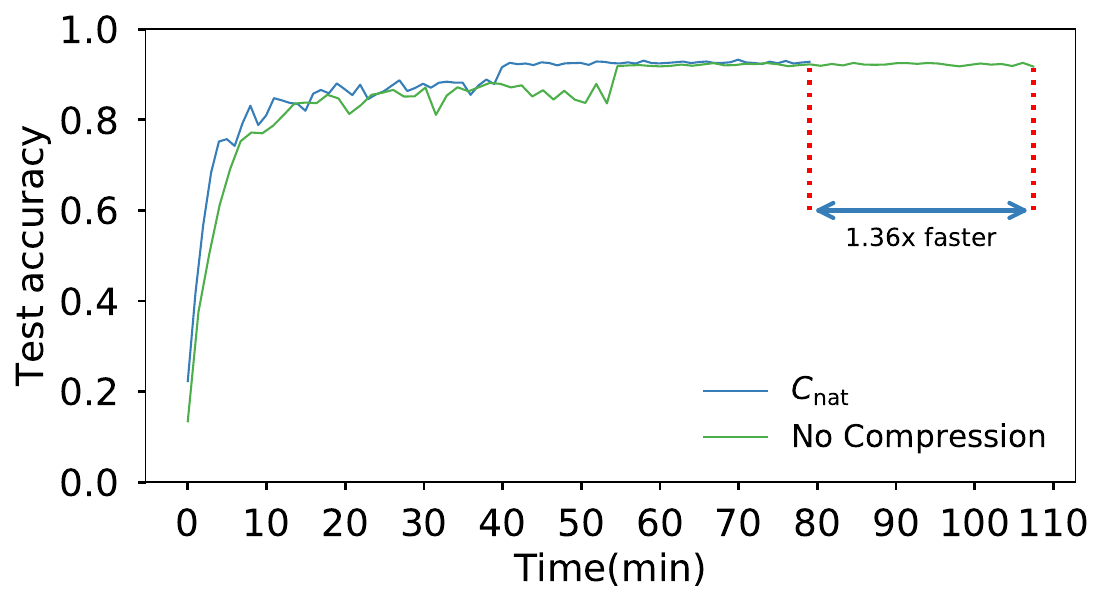} \\
    \includegraphics[width=0.32\textwidth]{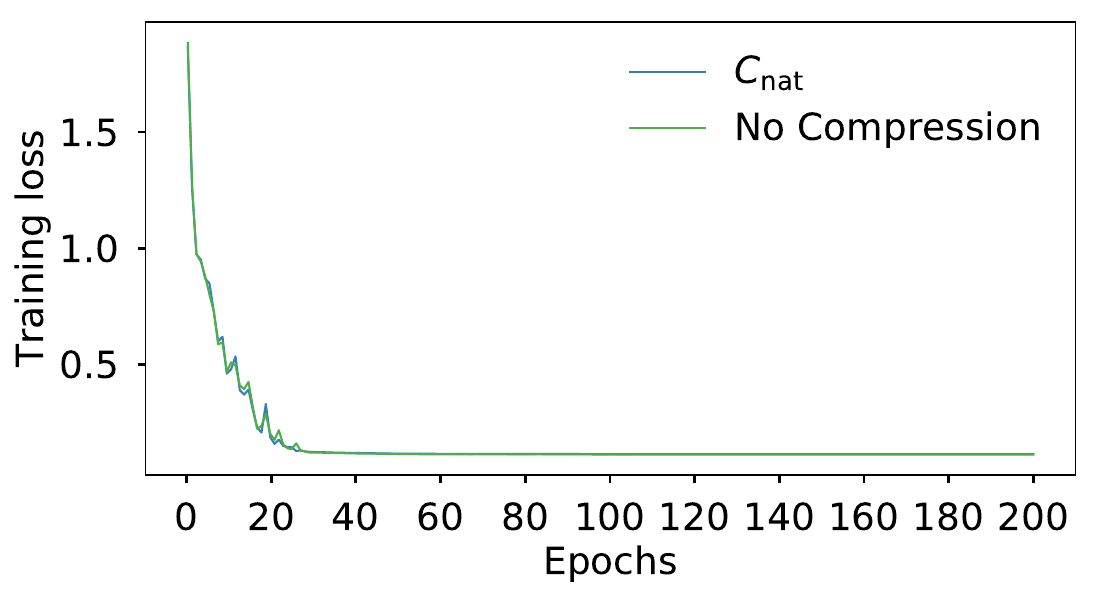}
     \includegraphics[width=0.32\textwidth]{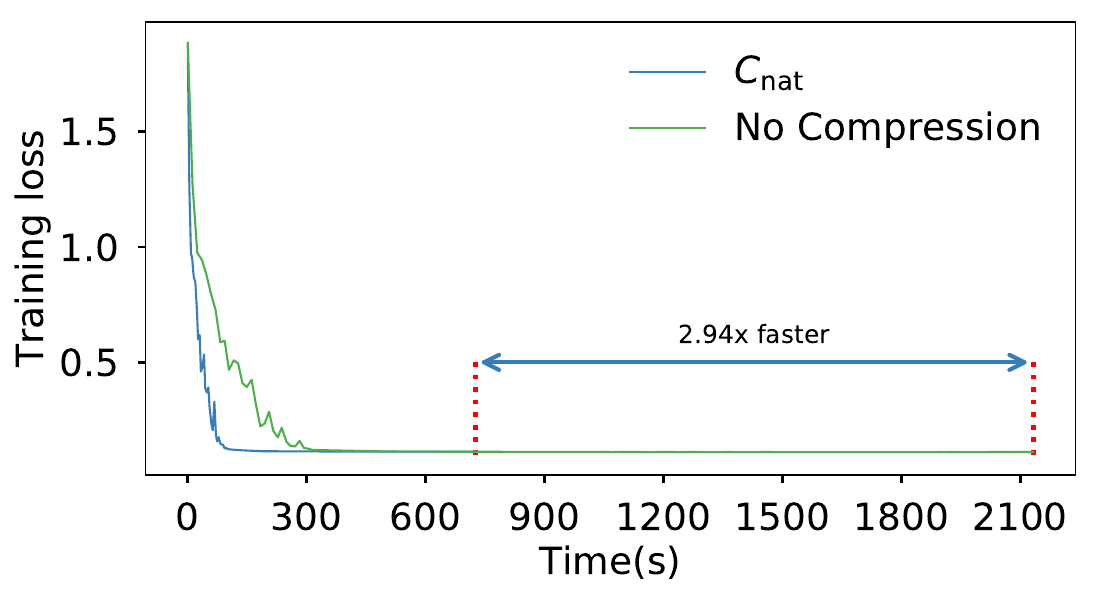}
      \includegraphics[width=0.32\textwidth]{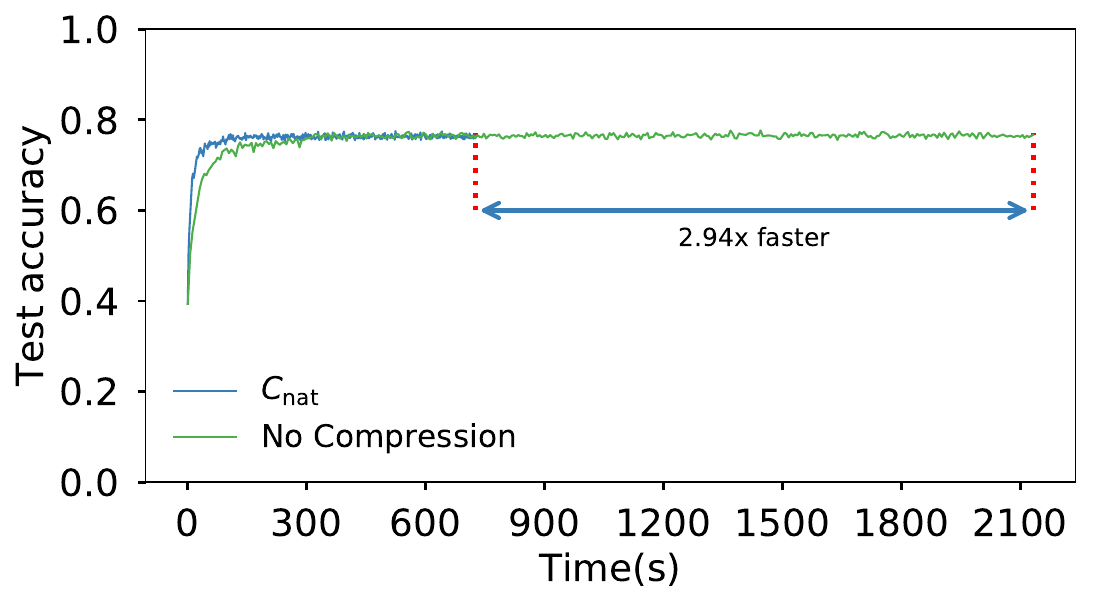}
    \caption{Train Loss and Test Accuracy of ResNet110 and Alexnet on CIFAR10.  Speed-up is displayed with respect to time to execute fixed number of epochs, 320 and 200, respectively. }
     \label{fig:resnet20}
    \end{minipage}
\end{figure}
\begin{figure}[t]
    \centering
    \captionsetup{width=\linewidth}
    \includegraphics[width=0.7\linewidth]{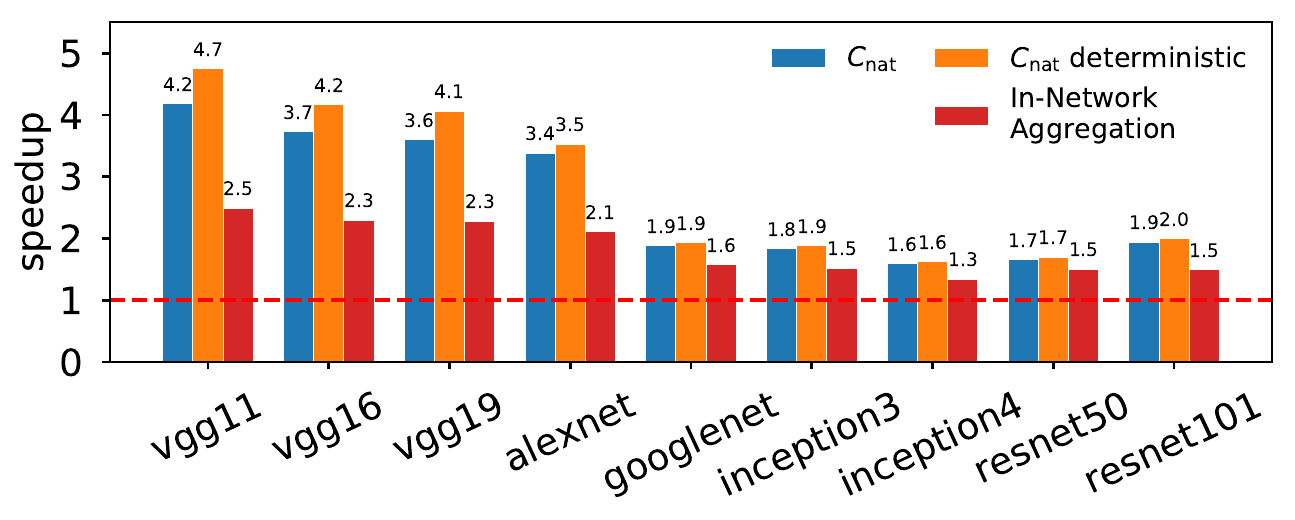}
    \caption{Training throughput speedup. $\NC$ deterministic rounds numbers to the nearest power of 2.}
    \label{fig:imagenet_speedup}
\end{figure}

To illustrate the strength of this result, we now compare natural dithering $\ND$ to  standard dithering $\SD$ and show that {\em natural dithering is exponentially better than  standard dithering.}  In particular, for the same level of variance, 
$\ND$ uses only $s$ levels while $\SDs{u}$ uses $u=2^{s-1}$ levels. Note also that  the levels used by $\ND$ form a {\em subset} of the levels used by $\SD$ (see Figure~\ref{fig:brrd_with_rd}). We also confirm this empirically (see Appendix~\ref{sec:emp_variance}).

\begin{theorem}\label{THM:EXPON_BETTER}
Fixing $s$, natural dithering $\ND$  has $\cO(2^{s-1}/s)$ times smaller variance than standard dithering $\SD$. Fixing $\omega$, if $u = 2^{s-1}$, then $\SDs{u} \in \mathbb{B}(\omega)$ implies that $\ND\in \mathbb{B}(\nicefrac{9}{8}(\omega+1) - 1)$.
\end{theorem}
\vspace{-1.em}
\section{Distributed SGD } \label{SEC:SGD}
There are several stochastic gradient-type methods~\citep{SGD, bubeck2015convex,ghadimi2013stochastic,mishchenko2019distributed} for solving \eqref{eq:probR} that are compatible with compression operators  $\cC\in \mathbb{B}(\omega)$, and hence also with our natural compression ($\NC$) and natural dithering ($\ND$) techniques. However, as none of them support compression at the master node
we propose a distributed SGD algorithm that allows for {\em bidirectional compression} (Algorithm~\ref{ALG:ARBSGD} in Appendix~\ref{sec:algorithm}). We note that there are two concurrent papers to ours (all appeared online in the same month and year) proposing the use of bidirectional compression, albeit in conjunction with different underlying algorithms, such as SGD with  error feedback or local updates~\citep{DoubleSqueeze2019, zheng2019communication}.  Since we instead focus on vanilla distributed SGD with bidirectional compression, the algorithmic part of our paper is complementary to theirs. Moreover, our key contribution---the highly efficient natural compression and dithering compressors---can be used within their algorithms as well, which expands their impact further.
\newline
 We assume repeated access to unbiased stochastic gradients $g_i(x^k)$ with bounded variance $\sigma_i^2$  for every worker $i$. We also assume {\em node similarity} represented by constant $\zeta_i^2$,  and that $f$ is $L$-smooth (gradient is $L$-Lipschitz). Formal definitions as well as detailed explanation of Algorithm~\ref{ALG:ARBSGD}  can be found in Appendix~\ref{sec:gen_sgd}.  We denote $\zeta^2 = \frac{1}{n} \sum_{i=1}^n \zeta_i^2$, $\sigma^2 = \frac{1}{n} \sum_{i=1}^n \sigma_i^2$ and 
\begin{equation}
\begin{split}
\alpha = \tfrac{(\omega_M+1)(\omega_W+1)}{n}\sigma^2 + \tfrac{(\omega_M+1)\omega_W}{n} \zeta^2,  \quad
\beta = 1+ \omega_M + \tfrac{(\omega_M+1)\omega_W}{n}  \,,
\end{split}
\label{eq:alpha_beta_out}
\end{equation}
where $\cC_M \in \mathbb{B}(\omega_M)$ is the compression operator used by the master node,  $\cC_{W_i} \in \mathbb{B}(\omega_{W_i})$ are the compression operators used by the workers and $\omega_W \eqdef \max_{i \in [n]}\omega_{W_i}$. 

\begin{figure}[t]
  \centering
\includegraphics[width=0.245\textwidth]{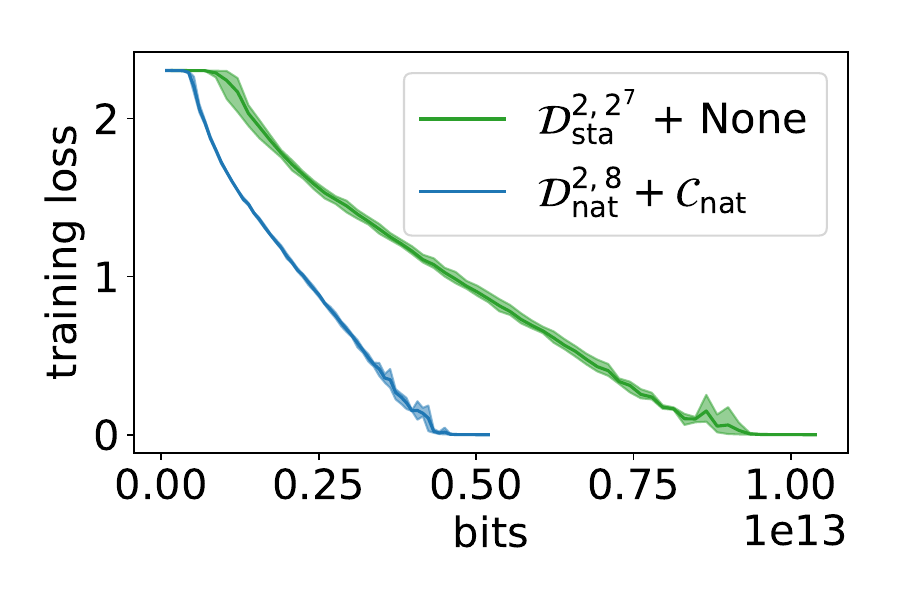}
\includegraphics[width=0.245\textwidth]{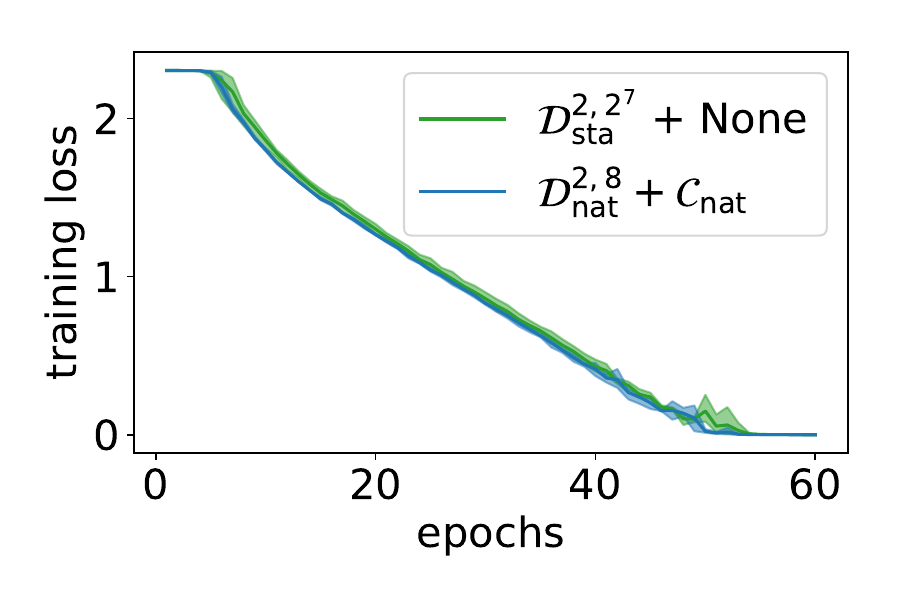}  
\includegraphics[width=0.245\textwidth]{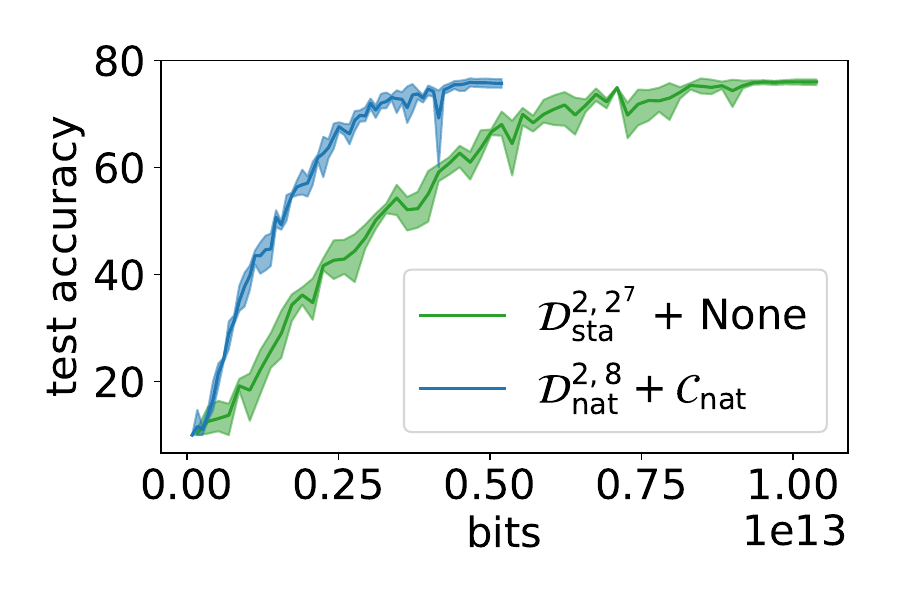}
\includegraphics[width=0.245\textwidth]{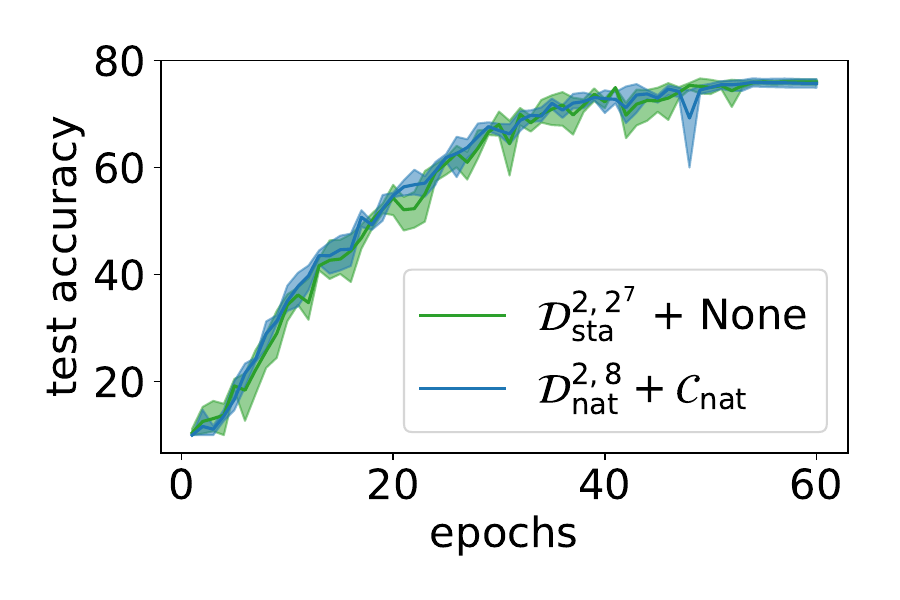}
\caption{Train loss and test Aacuracy of VGG11 on CIFAR10.  Green line: $\cC_{W_i} =\cD_{\rm sta}^{2,2^7}$, $\cC_M = $ identity. Blue line: $\cC_{W_i} =\cD_{\rm nat}^{2,8}$, $\cC_M = \NC$. }
\setlength{\textfloatsep}{0pt}
\label{fig:dith_cifar}

\end{figure}
\begin{figure}[t]
  \centering
  \begin{minipage}[b]{0.67\linewidth}
        \centering
        \captionsetup{width=1\linewidth}
        \includegraphics[width=0.95\linewidth]{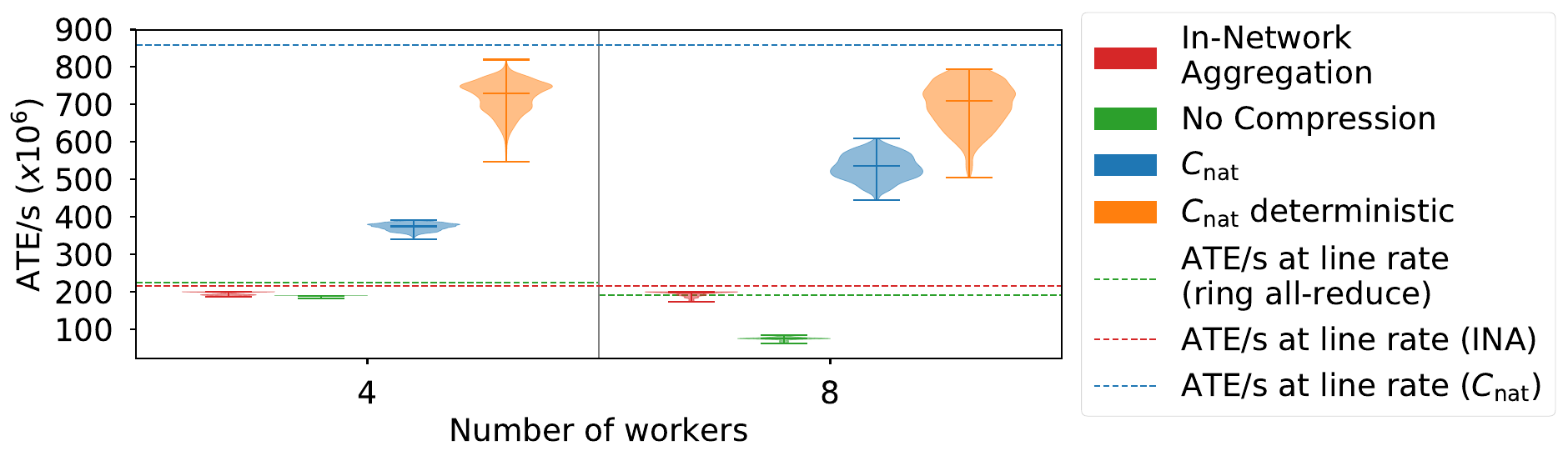}
        \setlength{\belowcaptionskip}{-8pt}
        \caption{Violin plot of Aggregated Tensor Elements (ATE) per second. Dashed lines denote the maximum ATE/s under line rate.}
        \label{fig:microbenchmark}
    \end{minipage}%
    \hfill
    \begin{minipage}[b]{0.3\linewidth}
        \centering
        \captionsetup{width=1\linewidth}
        \includegraphics[width=0.95\linewidth]{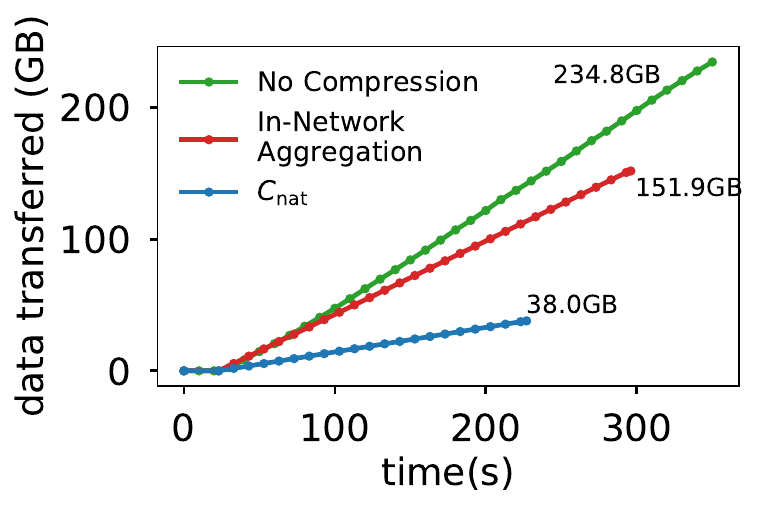}
        \caption{Accumulated transmission size of 1 worker (CIFAR10 on AlexNet).}
        \label{fig:transmission}
  \end{minipage}
\end{figure}

\begin{theorem}
 \label{THM:ARBSGD_SHORT}
 Let  $\cC_M \in\mathbb{B}(\omega_M)$, $\cC_{W_i} \in \mathbb{B}(\omega_{W_i})$ and $\eta^k \equiv \eta \in (0,\nicefrac{2}{\beta L})$,
where $\alpha, \beta$ are as in \eqref{eq:alpha_beta_out}. 
 If $a$ is picked uniformly at random from $\{0,1, \cdots, T-1\}$, then 
\begin{equation}\label{eq:main_thm_SGD}
\E{\|\nabla f(x^a)\|^2}
\leq 
\tfrac{2(f(x^0)-f(x^{\star}))}{\eta (2-\beta L \eta) T} + \tfrac{\alpha L \eta }{2-\beta L \eta},
\end{equation}
where $x^{\star}$ is an opt.\ solution of \eqref{eq:probR}.  In particular, if we fix any $\varepsilon>0$ and choose $\eta = \tfrac{ \varepsilon}{L(\alpha + \varepsilon \beta)}$ and $T\geq \nicefrac{2L(f(x^0)-f(x^{\star}))(\alpha + \varepsilon \beta)}{\varepsilon^2} $, then $\E{\|\nabla f(x^a)\|^2}
\leq \varepsilon$ .
\end{theorem}

The above theorem has some interesting consequences. First, notice that \eqref{eq:main_thm_SGD} posits a $\cO(\nicefrac{1}{T})$ convergence of the gradient norm to the value $\tfrac{\alpha L \eta }{2-\beta L \eta}$, which depends linearly on $\alpha$. In view of \eqref{eq:alpha_beta_out}, the more compression we perform, the larger this value becomes. More interestingly, assume now that the same compression operator is used at each worker: $\cC_W =\cC_{W_i}$. Let $\cC_W \in \mathbb{B}(\omega_W)$ and  $\cC_M \in \mathbb{B}(\omega_M)$ be the compression on master side. Then,  $T(\omega_M, \omega_W) \eqdef 2L(f(x^0)-f(x^{\star}))\varepsilon^{-2}(\alpha + \varepsilon \beta)$  is its iteration complexity.  In the special case of equal data on all nodes, i.e., $\zeta=0$, we get $\alpha =  \nicefrac{(\omega_M+1)(\omega_W+1) \sigma^2}{n}$ and $\beta = (\omega_M+1)\left(1+\nicefrac{\omega_W}{n}\right)$. If no compression is used, then $\omega_W = \omega_M = 0$ and $\alpha +\varepsilon \beta = \nicefrac{\sigma^2}{n} +\varepsilon$. So, the {\em relative slowdown} of Algorithm~\ref{ALG:ARBSGD} used {\em with} compression compared to Algorithm~\ref{ALG:ARBSGD} used {\em without} compression is given by
\begin{eqnarray*}
  \tfrac{T(\omega_M, \omega_W)}{T(0,0)}  
= (\omega_M+1)  \nicefrac{\left(\tfrac{(\omega_W+1)\sigma^2}{n} +(1+\nicefrac{\omega_W}{n})\varepsilon\right)}{\left(\nicefrac{\sigma^2}{n} +\varepsilon\right)} 
 \in  \left( \omega_M+1, (\omega_M+1)(\omega_W+1) \right].
 \end{eqnarray*}
The upper bound is achieved for $n=1$ (or for any $n$ and $\varepsilon\to 0$), and the lower bound is achieved in the limit as $n\to \infty$. So, {\em the slowdown caused by compression on worker side decreases with $n$.} More importantly, {\em the savings in communication due to compression can outweigh the iteration slowdown, which leads to an overall speedup!} See Table~\ref{tab:algo-comparison_2} for the computation of the overall worker to master speedup achieved by our compression techniques (also see Appendix~\ref{sec:dif_regimes} for additional similar comparisons under different cost/speed models). Notice that, however, standard sparsification does {\em not} necessarily improve the overall running time --- it can make it worse. Our methods have the desirable property of significantly uplifting the minimal speedup comparing to their ``non-natural'' version. The minimal speedup is more important as usually the number of nodes $n$ is not large. 

\section{Experiments}
\label{sec:Exp}
\begin{figure}[t]
    \centering
    \begin{minipage}[b]{0.495\linewidth}
        \captionsetup{width=1\linewidth}
        \centering
    	\includegraphics[width=0.49\textwidth]{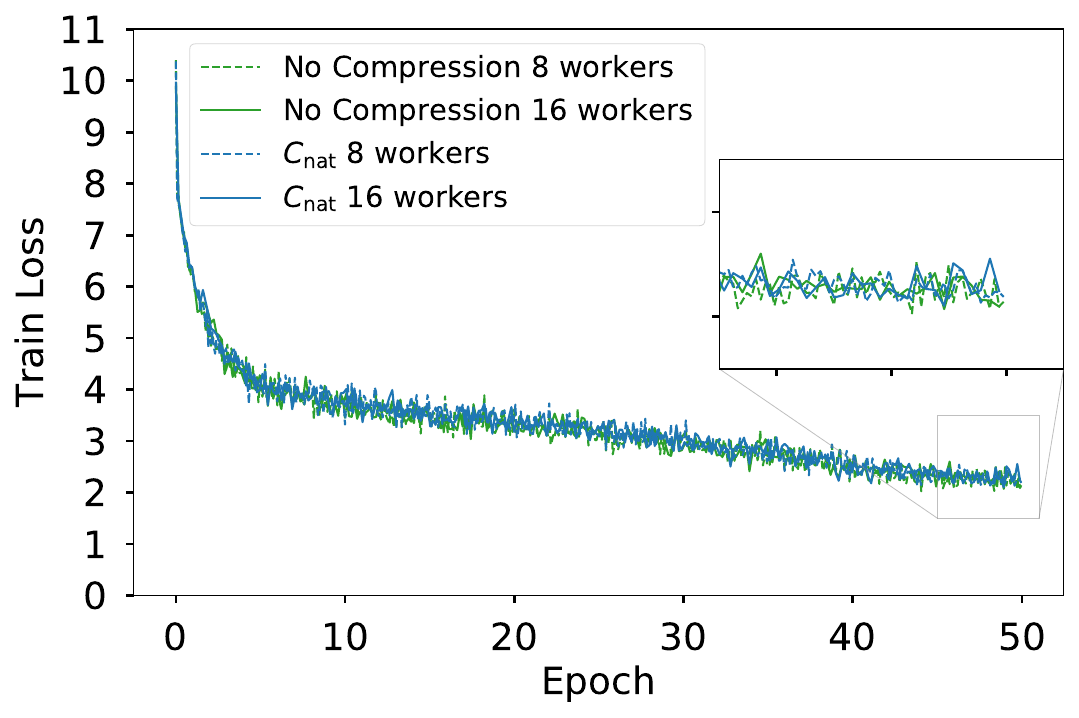}
    	\includegraphics[width=0.49\textwidth]{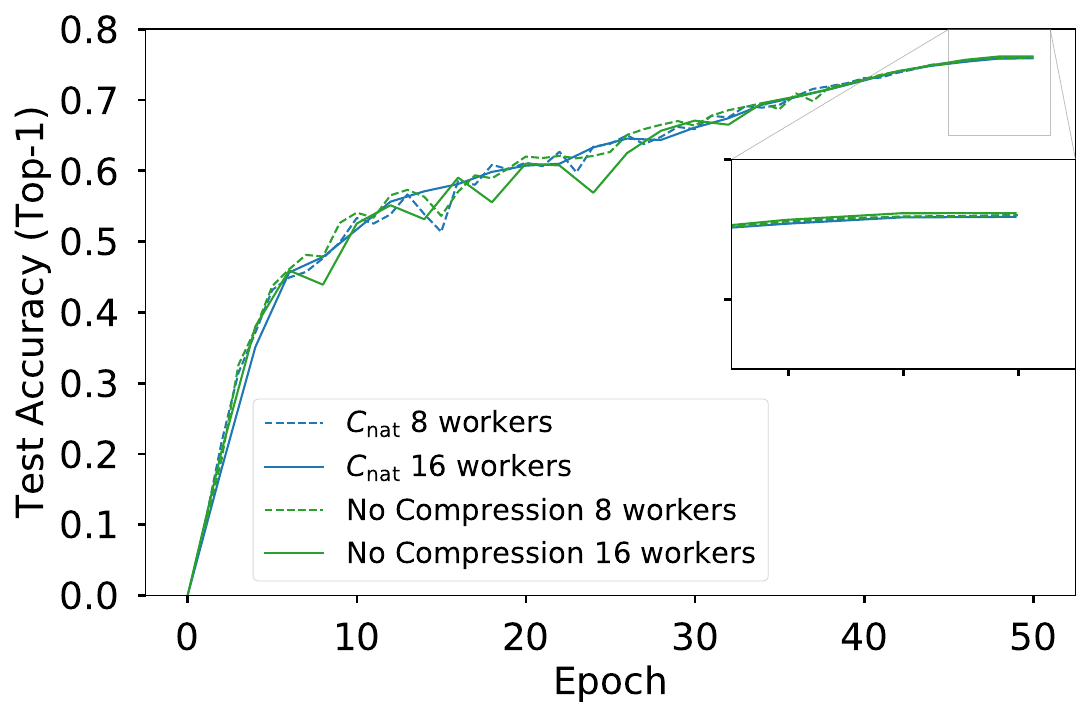}
    	\caption{ResNet50 on ImageNet}
    	\label{fig:nvidia-imagenet}
    \end{minipage}
    \begin{minipage}[b]{0.495\linewidth}
        \captionsetup{width=1\linewidth}
        \centering
        \includegraphics[width=0.49\textwidth]{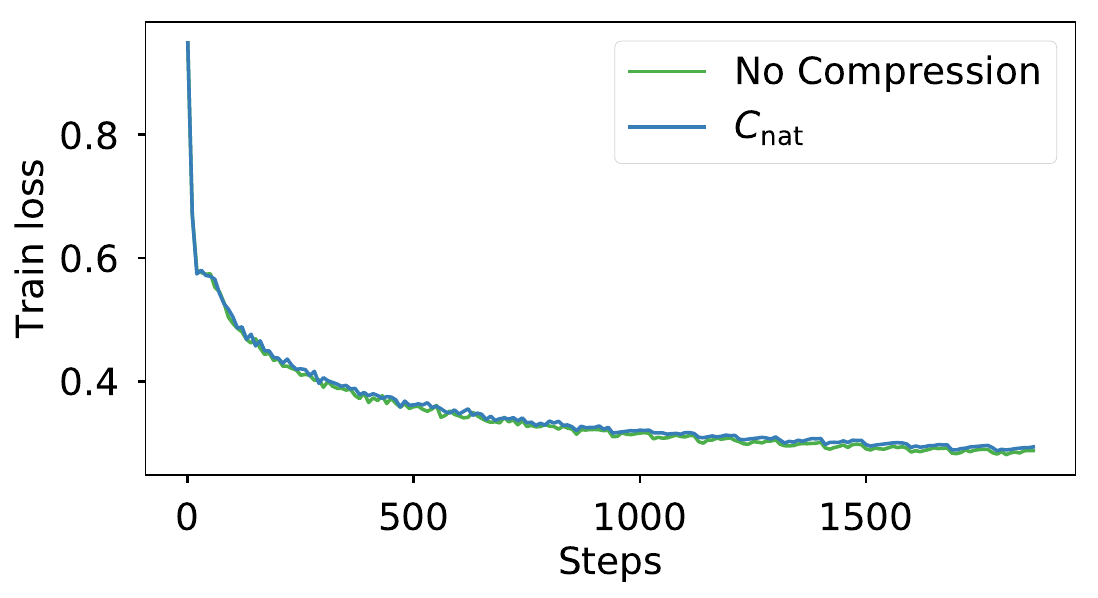}
        \includegraphics[width=0.49\textwidth]{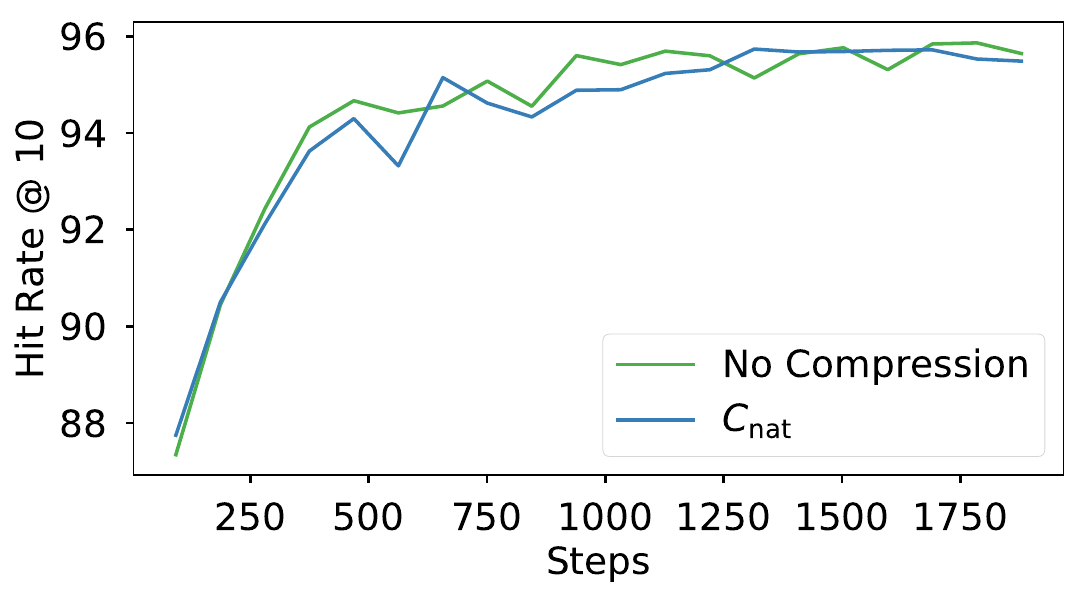}
        \caption{NCF on MovieLens-20M}
        \label{fig:ncf}
    \end{minipage}%
\end{figure}
To showcase properties of our approach in practice, we built a proof-of-concept system and provide evaluation results.  We illustrate convergence behavior, training throughput speedup, and transmitted data reduction. Experimental setup and implementation details are presented in Appendix~\ref{sec:exp_setup}.
\newline
{ \bf Results.} We first elaborate the microbenchmark experiments of aggregated tensor elements (ATE) per second.  We collect time measurements for aggregating 200 tensors with the size of 100MB,  and present violin plots which show the median,  min,  and max values among workers.
Figure~\ref{fig:microbenchmark} shows the result where we vary the number of workers between 4 and 8.
The performance difference observed for the case of $\NC$,  along with the similar performance for $\NC$ deterministic indicate that the overhead of doing stochastic rounding at the aggregator is a bottleneck.
\newline
We then illustrate the convergence behavior by training ResNet110 and AlexNet models on CIFAR10.
Figure~\ref{fig:resnet20} shows the train loss and test accuracy over time.
We note that natural compression lowers training time by $\sim26\%$ for ResNet110 ($2.89 \times$ more comparing to only $9\%$ decrease for QSGD  for the same setup, see \citep{qsgd} (Table 1)) and $66\%$ for AlexNet, compared to using no compression, while the accuracy matches the results in~\citep{resnet} without any loss of final accuracy using the same hyperparameters setting, and training loss is not affected by compression. 
In addition, combining $\NC$ with other compression operators, we can see no effect in convergence, but significant reduction in communication, e.g.,  $16\times$ fewer levels for $\ND$ w.r.t.\ $\SD$; see Figure~\ref{fig:dith_cifar} and for other compressions see Figure~\ref{fig:bin_comparison_cifar} and \ref{fig:bin_comparison_mnist} in Appendix.
\newline
To further demonstrate the convergence behavior of $\NC$, we run experiments which conform to the ImageNet Large Scale Visual Recognition Challenge (ILSVRC).
 \begin{table}[t]
    \caption{Top$K$ and Top$K$-$\NC$ comparison. The first column displays the level of sparsity K and for the other columns, we report (the relative \# of comm. bits comparing to no compression, the final test accuracy).}
    \label{tab:topk_cnat_1}
      \centering
        \begin{tabular}{|c|c|c|}
		\hline
		K  & w/o $\NC$ & \textbf{w/} $\NC$                      \\ 				\hline
		$0.39\%$ & ($0.78\%, 93.72 \pm 0.07\%$)         & ($\mathbf{0.48}\%, \mathbf{93.93}\pm 0.26\%$) 		\\ \hline
		$1.56\%$ & ($3.12\%, 94.21\pm0.06\%$)           & ($\mathbf{1.95}\%, \mathbf{94.47}\pm0.13\%$) 		\\ \hline
		$6.25\%$ & ($12.5\%, 93.93\pm0.53\%$)           & ($\mathbf{7.81}\%, \mathbf{94.13}\pm0.26\%$)		\\ \hline
		\end{tabular} 
\end{table}
\begin{table}[t]
    \caption{Top$K$-$\NC$ vs. several state-of-the-art (SOTA) compressors. The first column lists SOTA methods, the second relative number of comm. bits comparing to no compression (SGD), the fourth displays the level of sparsity K for which the number of comm. bits matches SOTA method for the given row and the third and fifth display the final test accuracy for SOTA and Top$K$-$\NC$, respectively.}
    \label{tab:topk_cnat_2}
      \centering
       \begin{tabular}{|c|c|c|c|c|}
\hline
SOTA           & Relat. Comm. & SOTA Perf. & K & Top$K$-$\NC$ Perf. \\ \hline
SGD            & $100\%$               & $93.69 \pm 0.32\% $ & N/A                        & N/A              \\ \hline
SignSGD        & $3.12\% $             & $93.47 \pm 0.29 \%$ & $2.5\%                     $ & $\textbf{94.28} \pm 0.12 \%$  \\ \hline
EF-SignSGD     & $3.12\%$              & $93.81 \pm 0.06\%$  & $2.5\%                     $ & $\textbf{94.28} \pm 0.12\%$   \\ \hline
PowerSGD-Rank2 & $0.74\% $             & $\textbf{94.26} \pm 0.22\%$  & $0.6\%                     $ & $94.23 \pm 0.01\%$   \\ \hline
PowerSDG-Rank7 & $2.36\%$              & $94.24 \pm 0.09\%$  & $1.9\%                     $ &$\textbf{94.25} \pm 0.10\%$   \\ \hline
\end{tabular}
\end{table}

We also train Neural Collaborative Filtering (NCF) \citep{ncf} on MovieLens-20M Dataset using $\NC$ and compare its convergence to no compression.
We follow publicly available benchmark\footnote{ \url{https://github.com/NVIDIA/DeepLearningExamples/tree/master/TensorFlow/Classification/RN50v1.5} and \url{https://github.com/NVIDIA/DeepLearningExamples/tree/master/PyTorch/Recommendation/NCF}} and apply $\NC$ on it without modifying any hyperparameter (detailed in Appendix~\ref{sec:hyper-imagenet} and \ref{sec:hyper-ncf}).
As shown in Figure~\ref{fig:nvidia-imagenet} and \ref{fig:ncf}, $\NC$ does not incur any accuracy loss even if applied on 16-worker distributed tasks.
Next, we report the speedup measured in average training throughput while training benchmark CNN models on Imagenet dataset for one epoch.
The throughput is calculated as the total number of images processed divided by the time elapsed.
Figure~\ref{fig:imagenet_speedup} shows the speedup normalized by the training throughput of the baseline, that is, TensorFlow + Horovod using the NCCL communication library.
We further break down the speedup by showing the relative speedup of In-Network Aggregation, which performs no compression but reduces the volume of data transferred (shown in Figure\ref{fig:transmission}).
We also show the effects of deterministic rounding on throughput. Because deterministic rounding does not generate random numbers, it provides some additional speedups. However, it may affect convergence. These results represent potential speedups in case the overheads of randomization were low, for instance, when using pre-computed random numbers.
We observe that the \textit{communication-intensive} models (VGG, AlexNet) benefit more from quantization as compared to the \textit{computation-intensive} models (GoogleNet, Inception, ResNet). These observations are consistent with prior work~\citep{qsgd}. 
To quantify the data reduction benefits of natural compression, we measure the total volume of data transferred during training.
Figure~\ref{fig:transmission} shows that data transferred grows linearly over time, as expected.
Natural compression saves $84\%$ of data, which greatly reduces communication time.
Figure~\ref{fig:nvidia-imagenet} also shows weak scaling for training ResNet50 on ImageNet, indicating that $\NC$ in itself does not have a negative effect on weak scaling.
Further details are presented in Appendix~\ref{sec:extra_exps}.
\newline
Next, we show that $\NC$ can be composed with other compression operators and hence can further reduce the communicated data volume without a drop in model performance.
We compose $\NC$ with Top$K$~\citep{Alistarh2018:topk}, meaning that we apply $\NC$ to the gradients whose absolute values are among the largest $K\%$ ones.
Note that, we need to send gradient values along with their indices, and we cannot apply $\NC$ on the indices.
Thus, the Top$K$-$\NC$ compressor achieves $\nicefrac{K}{4}+K=0.625*(2K)$ communicated data ratio --- a $37.5\%$ saving compared to the original Top$K$ compressor.
We evaluate Top$K$-$\NC$ following Table 4 in~\citep{NEURIPS2019_powersgd}.
We modify the official repo\footnote{\url{https://github.com/epfml/powersgd}} to use Top$K$-$\NC$ and train small cifar version of ResNet18 model on the CIFAR10 dataset, all hyperparameters are kept the same as in the repo.
In Table~\ref{tab:topk_cnat_1}, we compare several levels of plain Top$K$ compressions with Top$K$-$\NC$ and we show that Top$K$-$\NC$ achieves better testing accuracy compared to the baseline with the same sparsity K while communicating significantly less bits.
With the ability to tune K, we concluded an experiment where we compare Top$K$-$\NC$ with several state-of-the-art compressors~\citep{karimireddy19a,zheng2019communication,pmlr-v80-bernstein18a,NEURIPS2019_powersgd} by setting K so that the transmission amount are the same. In Table~\ref{tab:topk_cnat_2}, we show that with the same communication budget, Top$K$-$\NC$ performs better than sign based methods and equally well compared to PowerSGD.
It's important to mention that we simply use the same hyperparameter settings as SGD, while both sign based methods require extra tuning to converge.
Finally, we note that improvement of Top$K$-$\NC$ with respect to plain Top$K$ is limited to $37.5\%$ because of sending indices that can't be compressed. To alleviate this, we exploit recently proposed OmniReduce~\citep{OmniReduce}. It partitions data into blocks and only sends non-zero data blocks while it greatly reduces the overhead of transmitting indices and works well especially with block-based compression methods. We integrate $\NC$ into OmniReduce and run microbenchmark experiments on $8$ machines. Figure~\ref{fig:omnireduce} shows $\NC$ further improves Omnireduce and achieves $30$x speedup compared to NCCL when the tensor has $1\%$ of non-zero elements.

\begin{figure}[t]
    \centering
      \captionsetup{width=1\linewidth}
      \centering
      \includegraphics[width=0.55\textwidth]{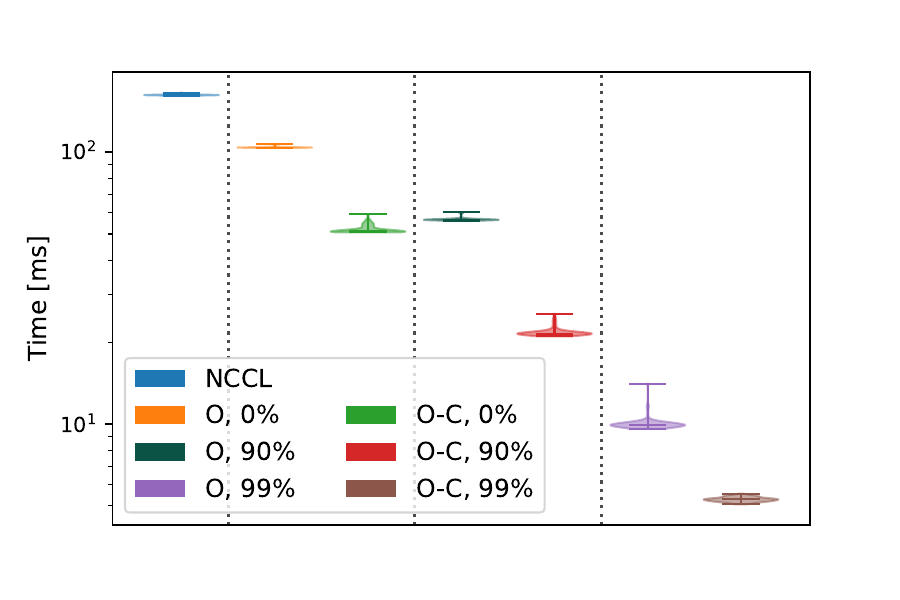}
      \caption{Violin plot of 100 invocations of 100MB random-generated tensor aggregation among 8 workers. The legend labels show "System, density(\%)", while O refers to OmniReduce and O-C refers to $\NC$ on top of OmniReduce. Note that NCCL performance does not depend on tensor density.}
      \label{fig:omnireduce}
\end{figure}

\section{Conclusions and Future Work}

We have propose two new  compression operators: natural compression $\NC$ and natural dithering $\ND$. Moreover, we have developed a general theory for SGD with arbitrary bi-directional compression, which allowed us to show that our new compression operators  bring substantial savings in the amount of communication per iteration with minimal or unnoticeable increase in the number of communications when compared to their  ``non-natural'' variants, e.g., $\ND$ vs. $\SDs{u}$, resulting in a faster overall running time. Moreover, our compression techniques are compatible with other compression techniques, and this combination leads to a theoretically superior method. Our experiments corroborate these theoretical predictions and we indeed observe  speedups on  a variety of tasks.  Lastly, there are a number of directions which we did not explore, including very large scale experiments,  different usage of our compression techniques beyond centralized parallel optimization and beyond SGD. We plan to examine these in future work.
\clearpage
\bibliography{literature}

\clearpage
\appendix

\part*{Appendix}
\setlength{\footskip}{20pt}

For easy navigation through the Paper and the Appendices, we provide a table of contents.

\tableofcontents

\newpage

\section{Extra Experiments} \label{sec:extra_exps}

\subsection{Convergence Tests on CIFAR 10}
In order to validate that $\NC$ does not incur any loss in performance, we trained various DNNs on the Tensorflow CNN Benchmark\footnote{\url{https://github.com/tensorflow/benchmarks}} on the CIFAR 10 dataset with and without $\NC$ for the same number of epochs, and compared the test set accuracy, and training loss. As mentioned earlier, the baseline for comparison is the default NCCL setting. We didn't tune the hyperparameters.
In all of the experiments, we used Batch Normalization, but no Dropout was used.

Looking into Figures~\ref{fig:densenet}, \ref{fig:alexnet} and \ref{fig:resnet}, one can see that $\NC$ achieves significant speed-up without incurring any accuracy loss. As expected, the communication intensive AlexNet (62.5 M parameters) benefits more from the compression than the computation intensive ResNets (< 1.7 M parameters) and DenseNet40 (1 M parameters).

\begin{figure}[t]
\centering
\hfill
 \includegraphics[width=0.32\textwidth]{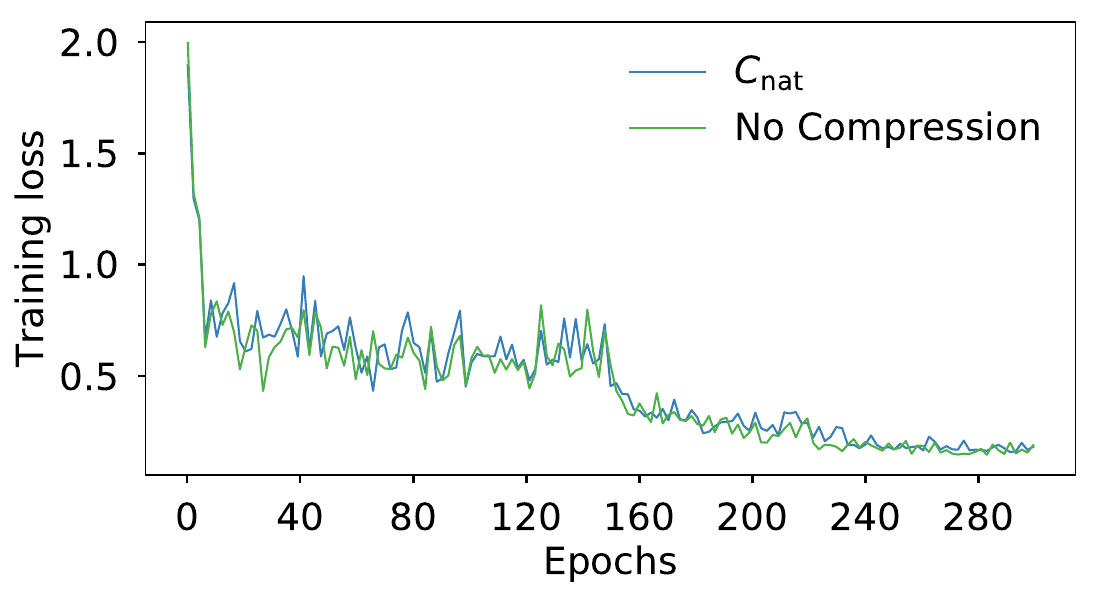}
    \includegraphics[width=0.32\textwidth]{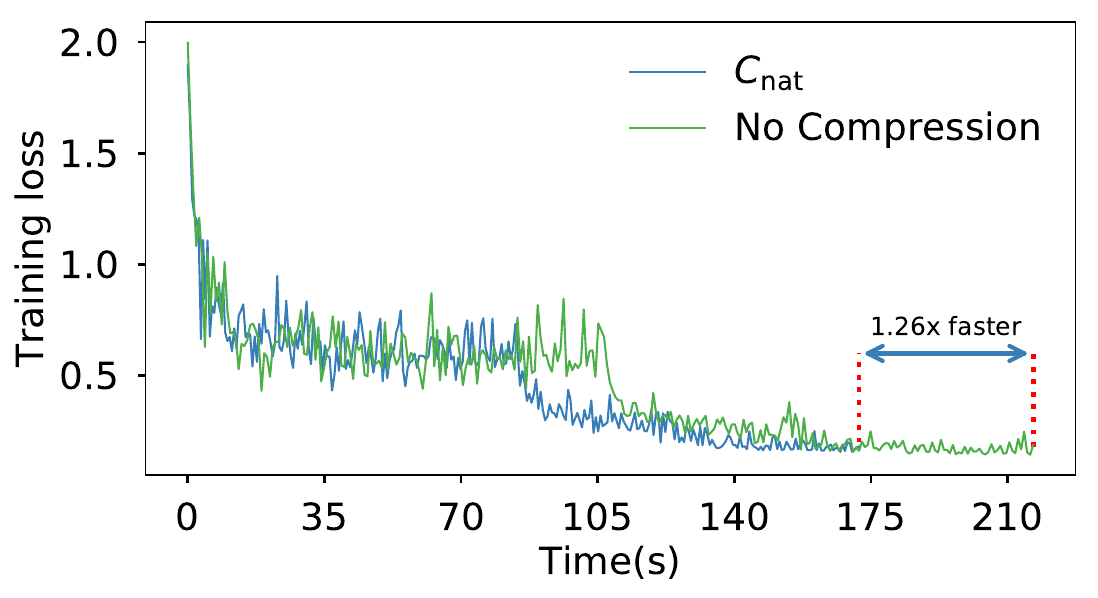}
    \includegraphics[width=0.32\textwidth]{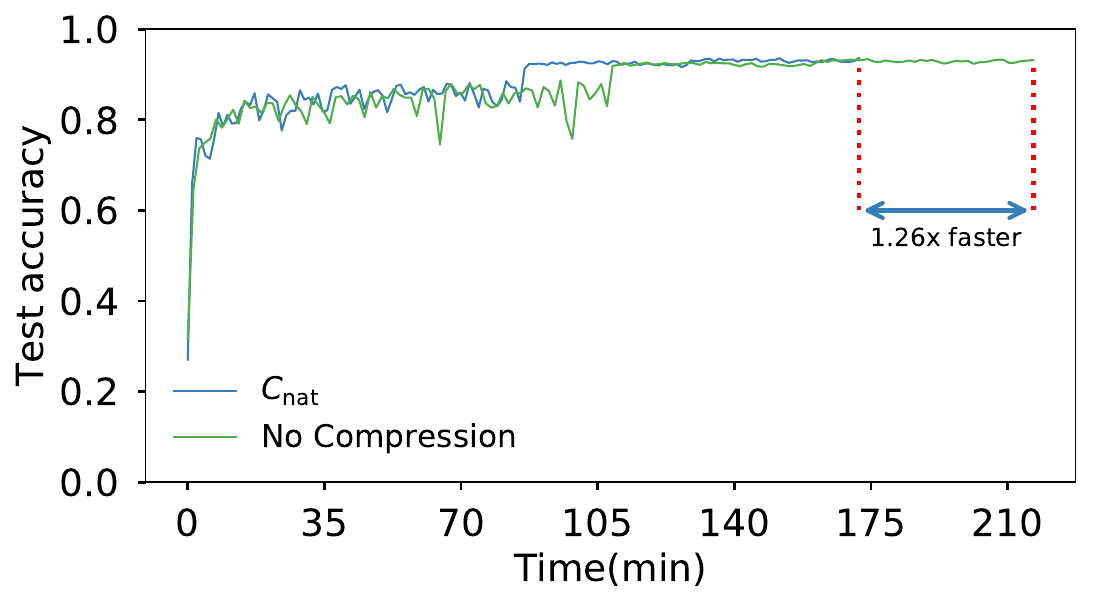}
\caption{DenseNet40 ($k=12$)}
\label{fig:densenet}
\end{figure}

\begin{figure}[t]
\centering
\hfill
\includegraphics[width=0.32\textwidth]{dist_plots-intml/new_plots/train_loss_epochs/alexnet_256_cifar10_trainloss.pdf}
    \includegraphics[width=0.32\textwidth]{dist_plots-intml/new_plots/train_loss_time/alexnet_256_cifar10_trainloss_time.pdf}
    \includegraphics[width=0.32\textwidth]{dist_plots-intml/new_plots/test_acc/alexnet_256_cifar10_acculoss.pdf} 
\\
   \includegraphics[width=0.32\textwidth]{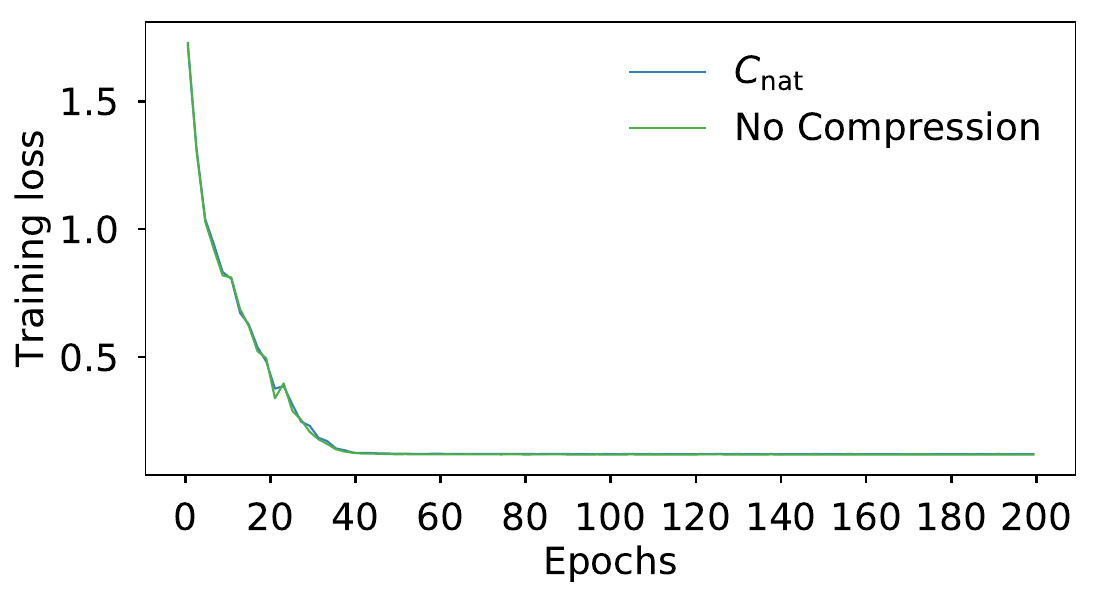}
    \includegraphics[width=0.32\textwidth]{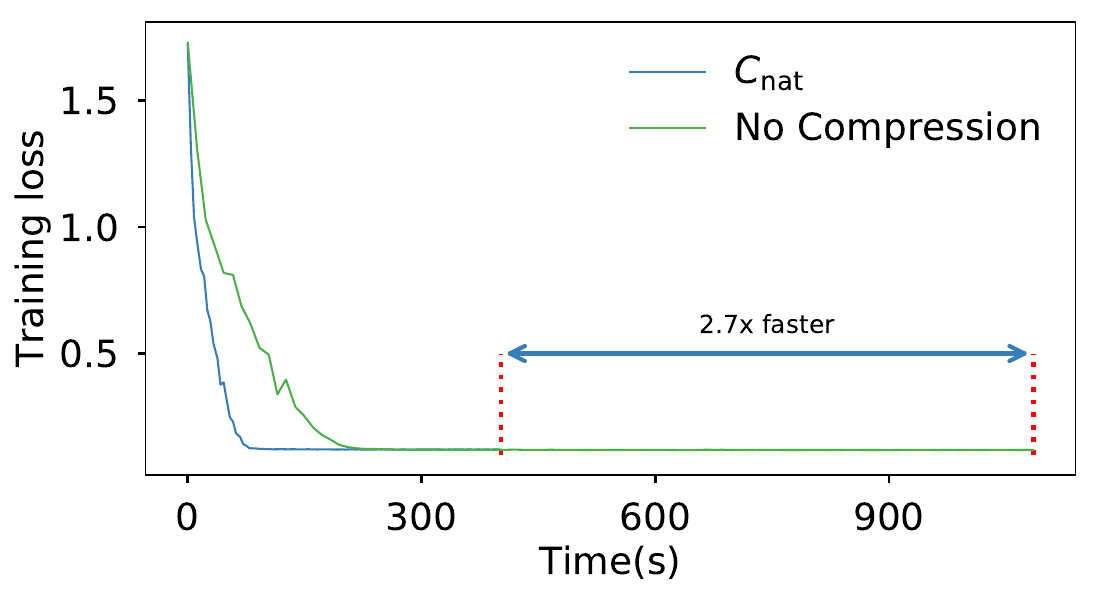}
    \includegraphics[width=0.32\textwidth]{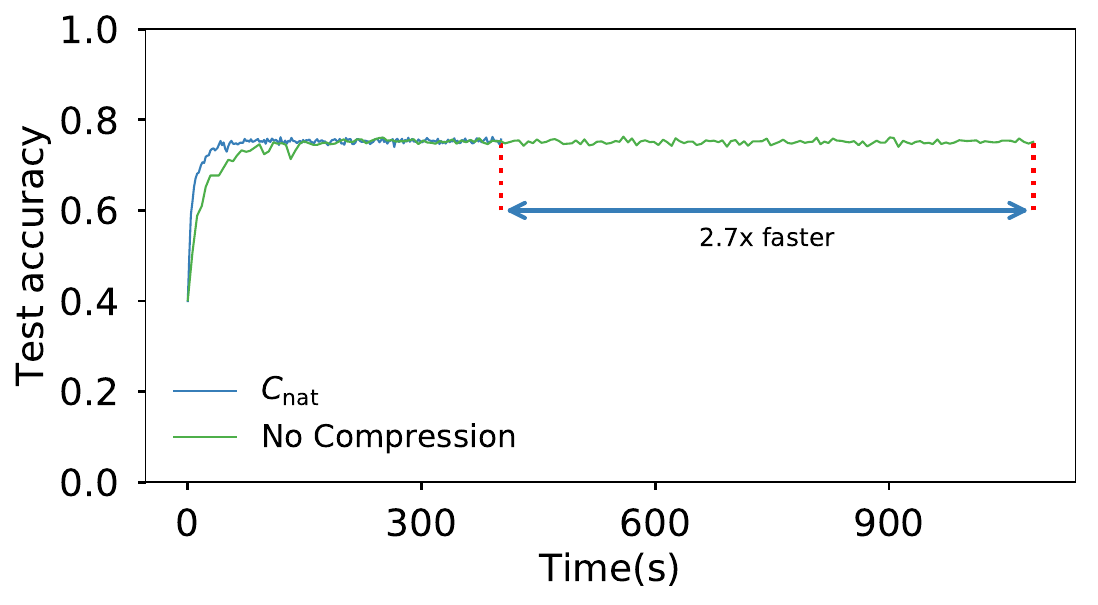}
    \\
   \includegraphics[width=0.32\textwidth]{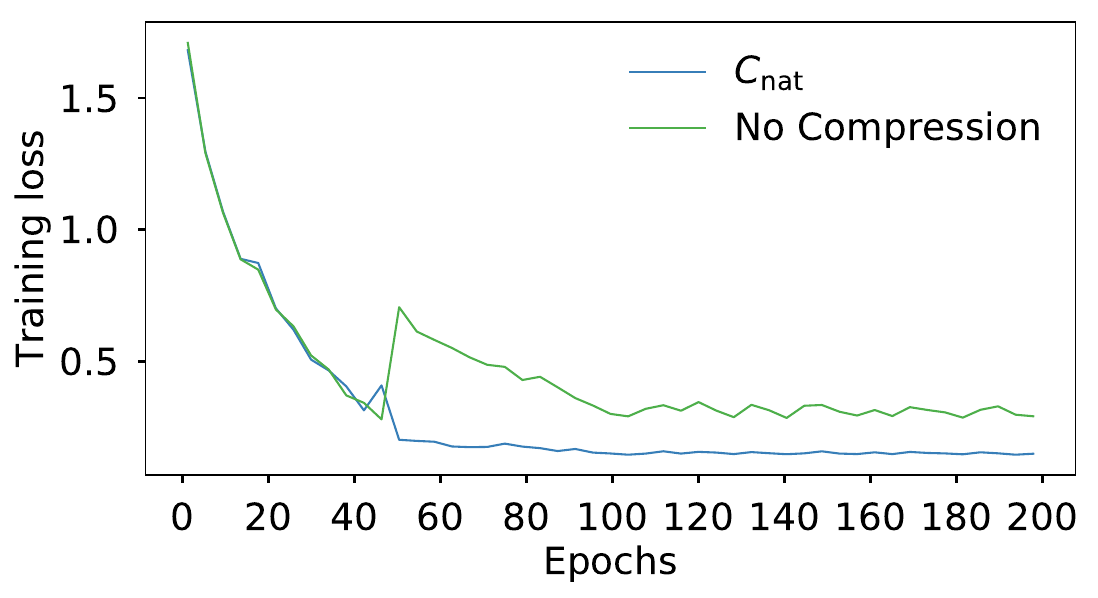}
    \includegraphics[width=0.32\textwidth]{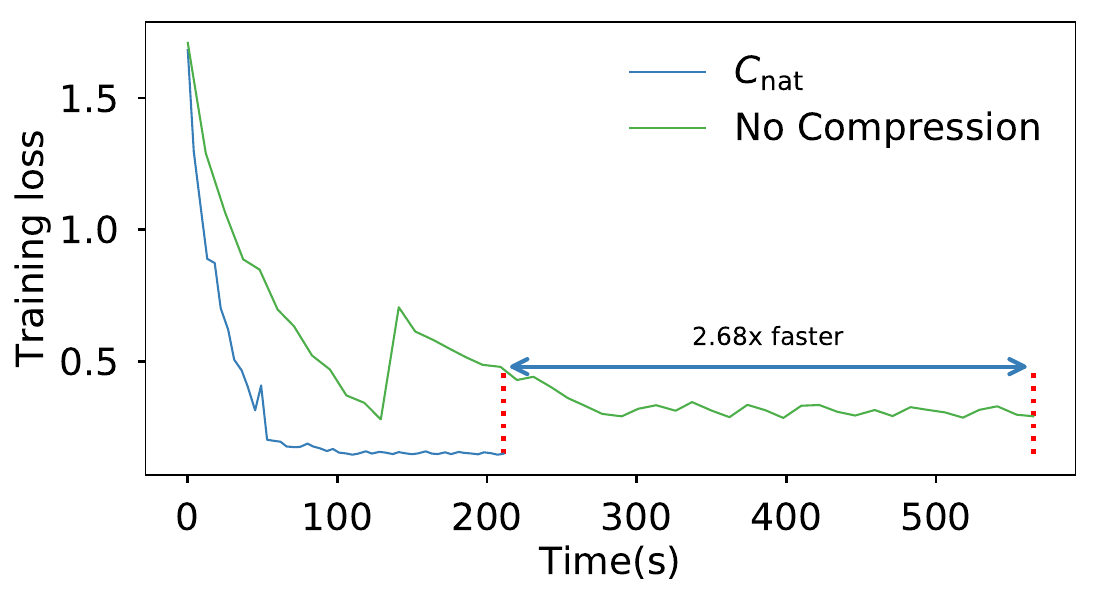}
    \includegraphics[width=0.32\textwidth]{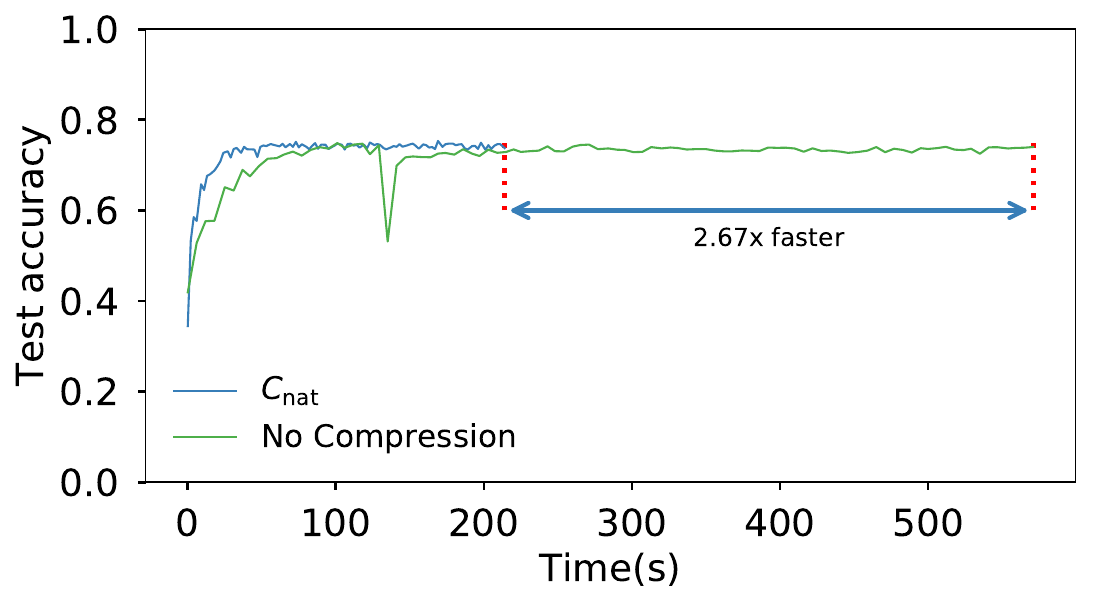}
\caption{AlexNet (Batch size: 256, 512 and 1024)}
\label{fig:alexnet}
\end{figure}

\begin{figure}[t]
\centering
\hfill
\includegraphics[width=0.32\textwidth]{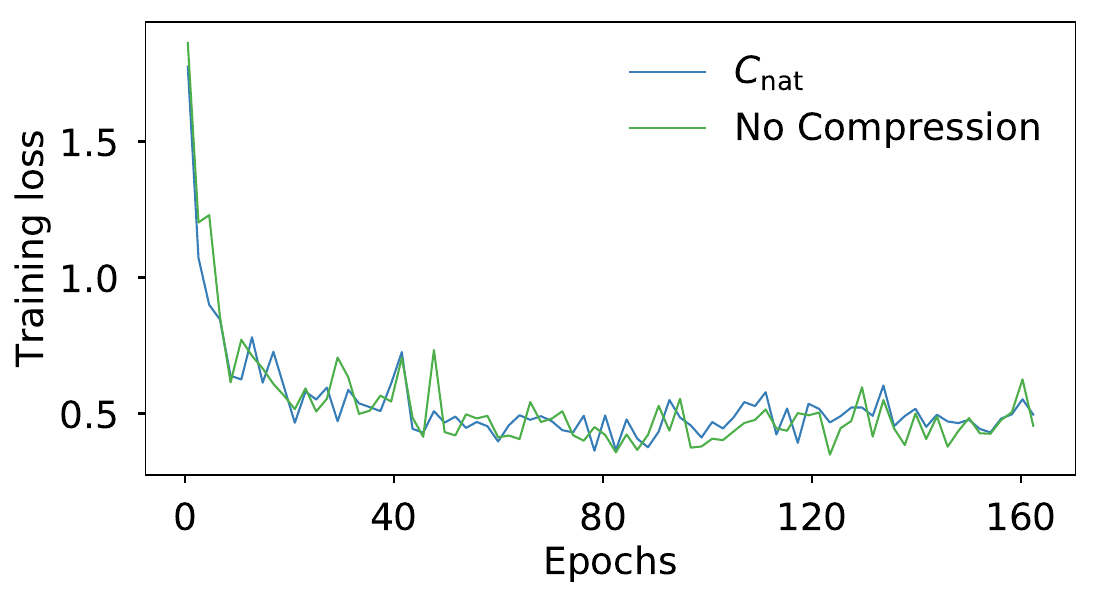}
    \includegraphics[width=0.32\textwidth]{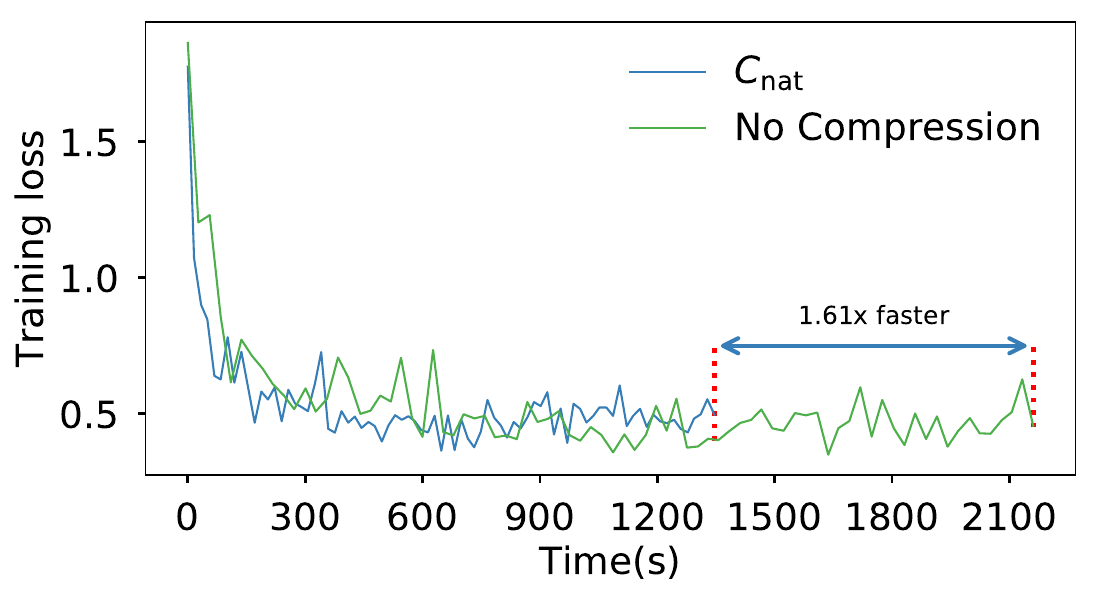}
    \includegraphics[width=0.32\textwidth]{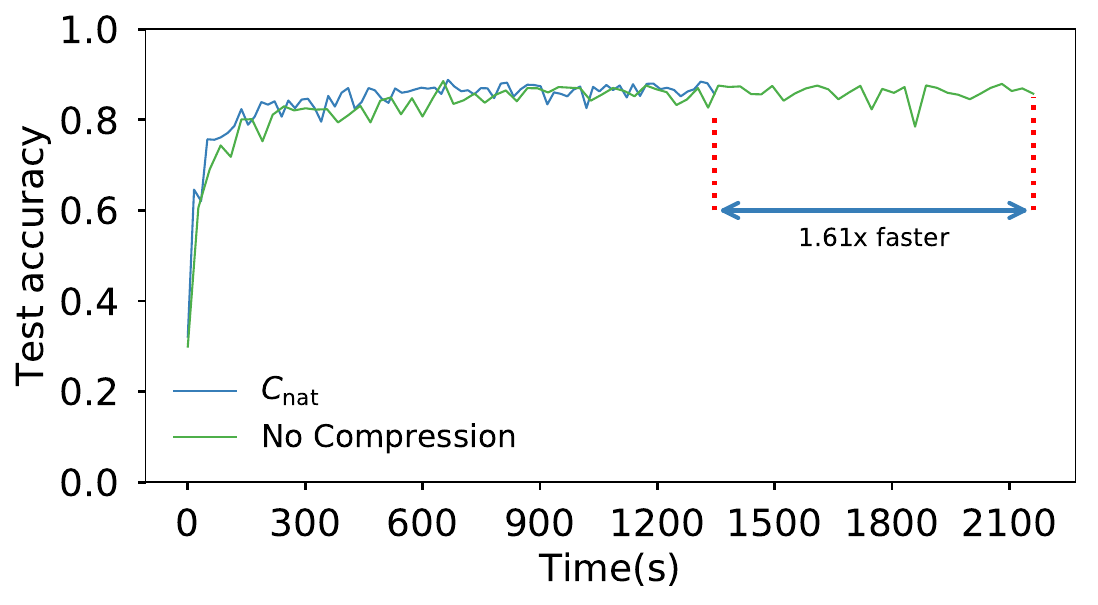} 
\\
   \includegraphics[width=0.32\textwidth]{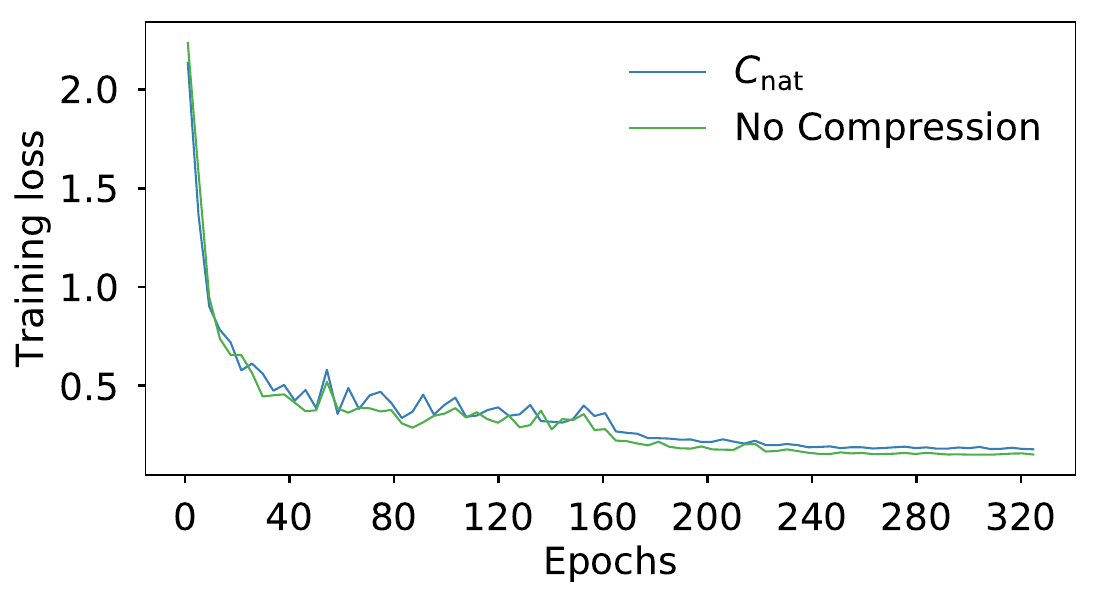}
    \includegraphics[width=0.32\textwidth]{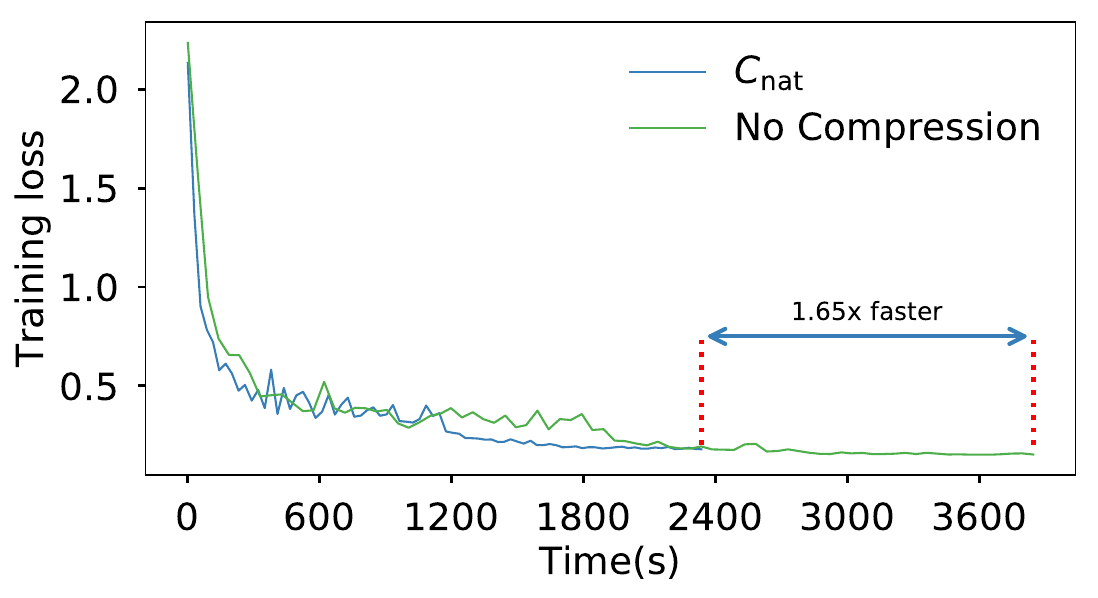}
    \includegraphics[width=0.32\textwidth]{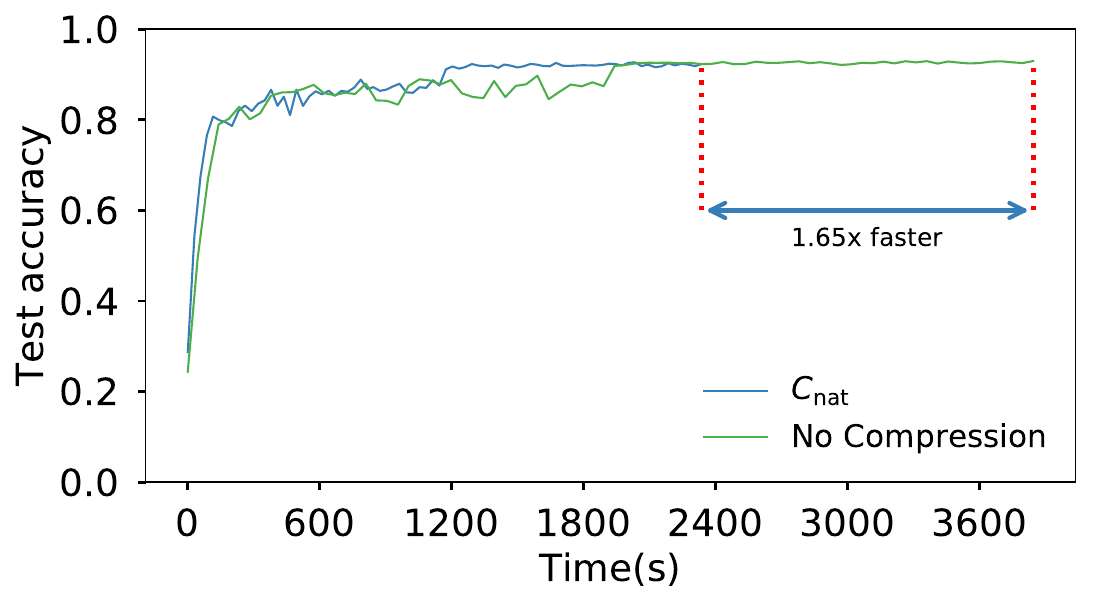}
    \\
   \includegraphics[width=0.32\textwidth]{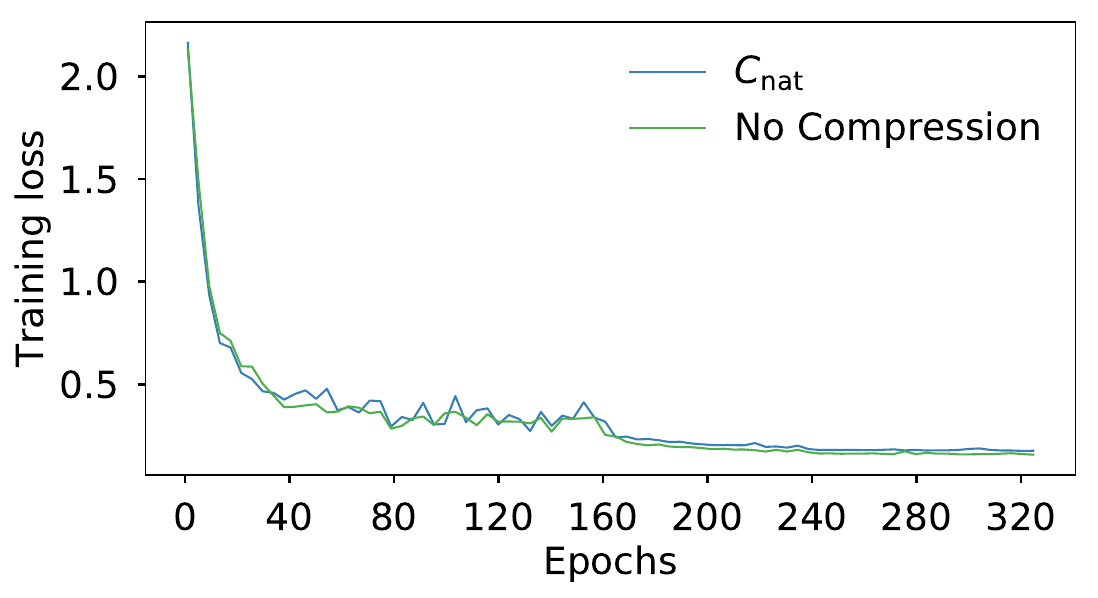}
    \includegraphics[width=0.32\textwidth]{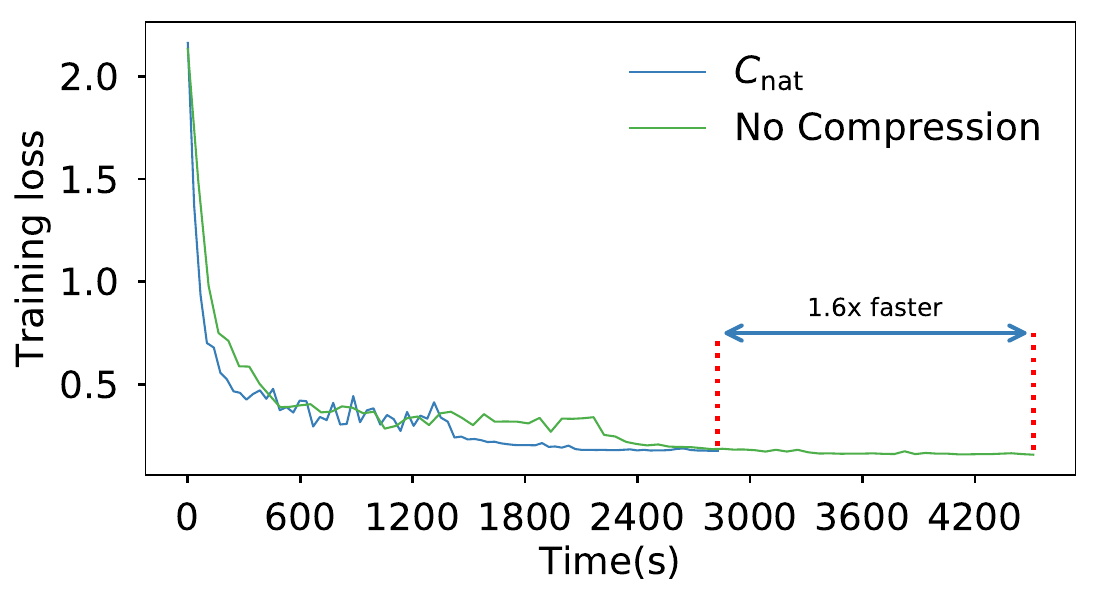}
    \includegraphics[width=0.32\textwidth]{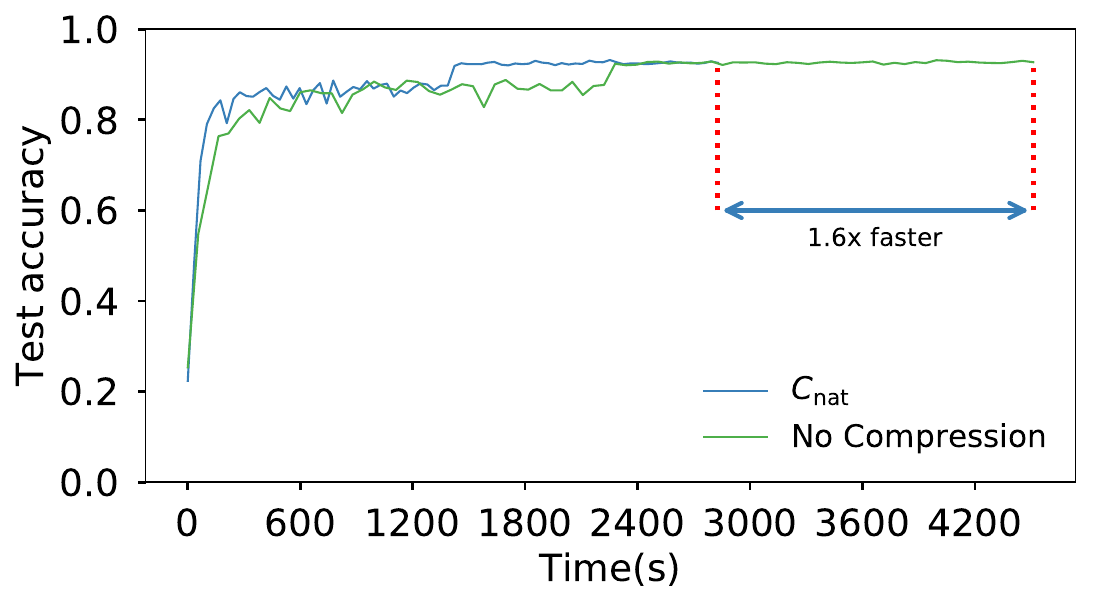}
\caption{ResNet (\#layers: 20, 44 and 56)}
\label{fig:resnet}
\end{figure}

\subsection{ $\ND$ vs. $\SDs{u}$: Empirical Variance}
\label{sec:emp_variance}

In this section, we perform experiments to confirm that  $\ND$ level selection brings not just theoretical but also practical performance speedup in comparison to $\SDs{u}$. We measure the empirical variance of $\SDs{u}$ and $\ND$. For $\ND$, we do not compress the norm, so we can compare just variance introduced by level selection.  Our experimental setup is the following. We first generate a random vector $x$ of size $d=10^5$, with independent entries with Gaussian distribution of zero mean and unit variance  (we tried other distributions, the results were similar, thus we report just this one) and then we measure normalized {\em empirical variance} \[\omega(x)\eqdef \frac{\norm{\cC(x)-x}^2}{\norm{x}^2}.\] We provide boxplots, each for 100 randomly generated vectors $x$ using the above procedure. We perform this for $p=1$, $p=2$ and $p=\infty$. We report our findings in Figure~\ref{fig:bin_comparison_var}, Figure~\ref{fig:bin_comparison_var_xx} and Figure~\ref{fig:row_c}. These experimental results support our theoretical findings.

\subsubsection{$\ND$ has exponentially better variance}

In Figure~\ref{fig:bin_comparison_var}, we compare $\ND$ and $\SDs{u}$ for $u=s$, i.e., we use the same number of levels for both compression strategies. In each of the three plots we generated vectors $x$ with a different norm. We find that natural dithering  has dramatically smaller variance, as predicted by Theorem~\ref{THM:EXPON_BETTER}.

\begin{figure}[h]
\centering
\includegraphics[width=0.32\textwidth]{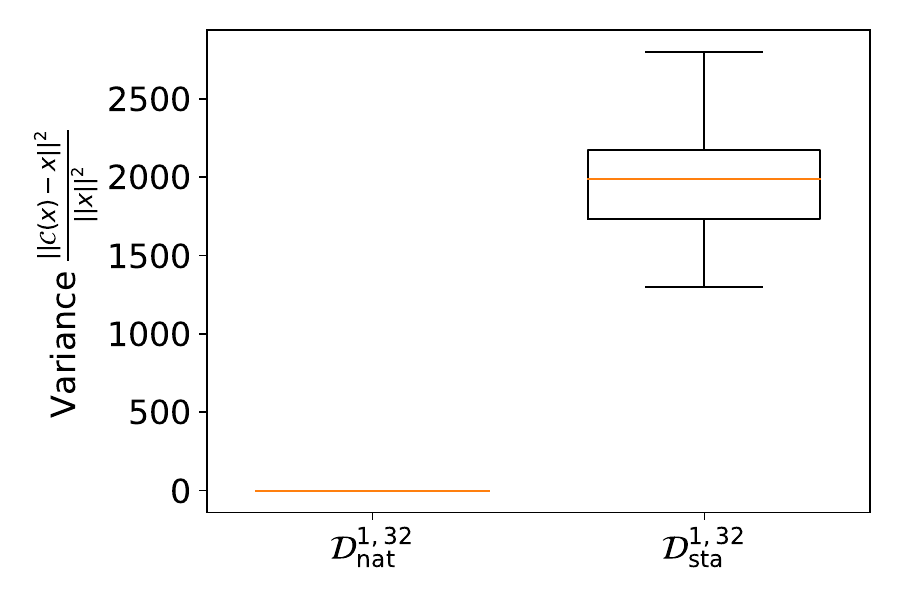}
\includegraphics[width=0.32\textwidth]{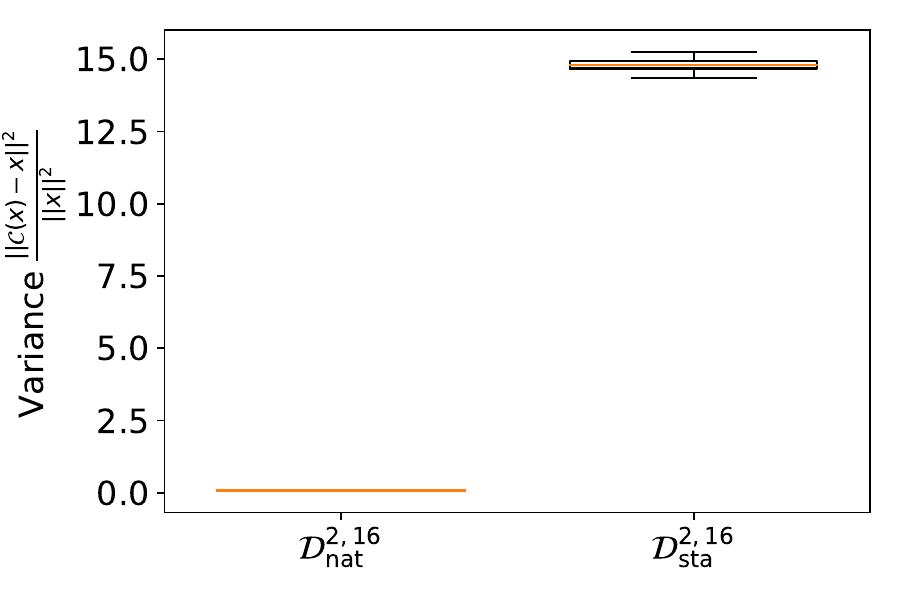}
\includegraphics[width=0.32\textwidth]{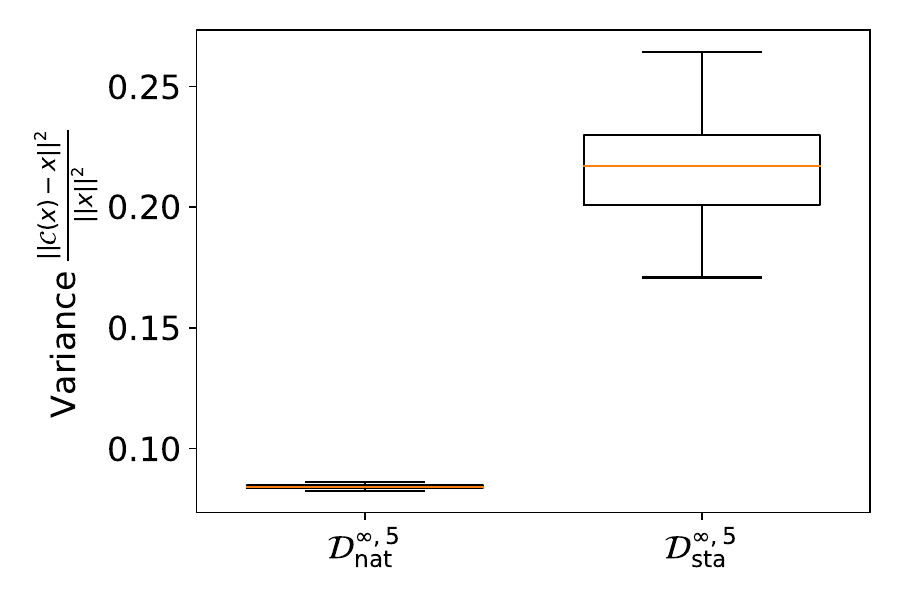}
\caption{$\ND$ vs. $\SDs{u}$ with $u = s$. }
\label{fig:bin_comparison_var}
\end{figure}

\subsubsection{$\ND$ needs exponentially less levels to achieve the same variance} 

In Figure~\ref{fig:bin_comparison_var_xx}, we set the number of levels for $\SDs{u}$ to  $u=2^{s-1}$. That is, we give standard dithering an exponential advantage in terms of the number of levels (which also means that it will need more bits for communication). We now study the effect of this change on the variance. We observe that the empirical variance  is essentially the same for both, as predicted by Theorem~\ref{THM:EXPON_BETTER}. 

\begin{figure}[h]
\centering
\includegraphics[width=0.32\textwidth]{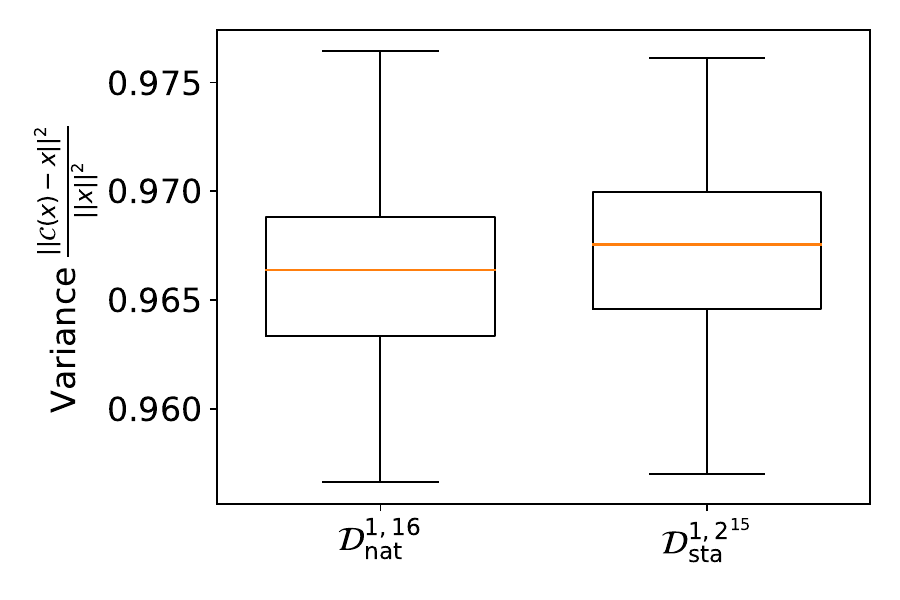}
\includegraphics[width=0.32\textwidth]{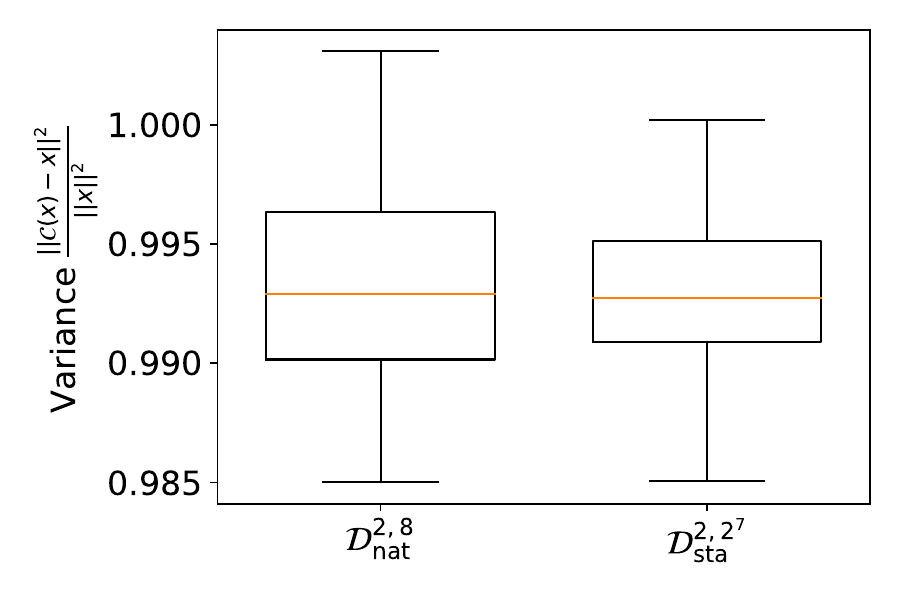}
\includegraphics[width=0.32\textwidth]{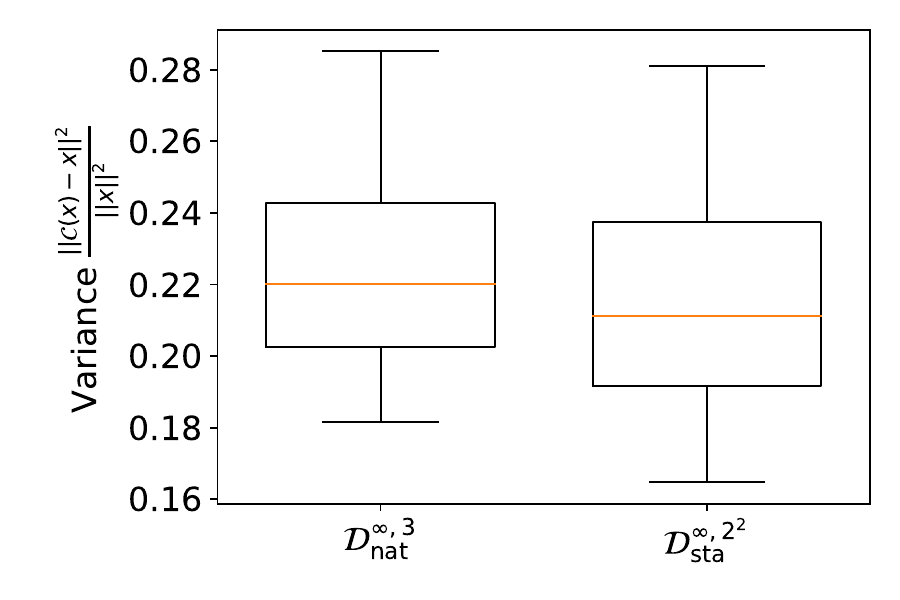}
\caption{$\ND$ vs. $\SDs{u}$ with $u = 2^{s-1}$.}
\label{fig:bin_comparison_var_xx}
\end{figure}

\subsubsection{$\SD$ can outperform $\ND$ in the big $s$ regime} 

We now remark on the situation when the number of levels $s$ is chosen to be very large (see Figure~\ref{fig:row_c}). While this is not  a practical setting as it does not provide sufficient compression, it will serve as an illustration of a fundamental theoretical difference between $\SD$ and $\ND$ in the $s\to \infty$ limit which we want to highlight.  Note that while $\SD$ converges to the identity operator as $s\to \infty$, which enjoys zero variance, $\ND$ converges to $\NC$ instead, with variance that can't reduce below $\omega=\nicefrac{1}{8}$. Hence, for large enough $s$, one would expect, based on our theory,  the variance of $\ND$ to be around $\nicefrac{1}{8}$, while the variance of $\SD$ to be closer to zero. In particular, this means that $\SD$ {\em can}, in a practically  meaningless regime, outperform $\ND$. In Figure~\ref{fig:row_c} we choose $p=\infty$ and  $s=32$ (this is large). Note that, as expected, the empirical variance of both compression techniques is small, and that, indeed, $\SD$ outperforms 
 $\ND$.

\begin{figure}[h]
\centering
\includegraphics[width=0.32\textwidth]{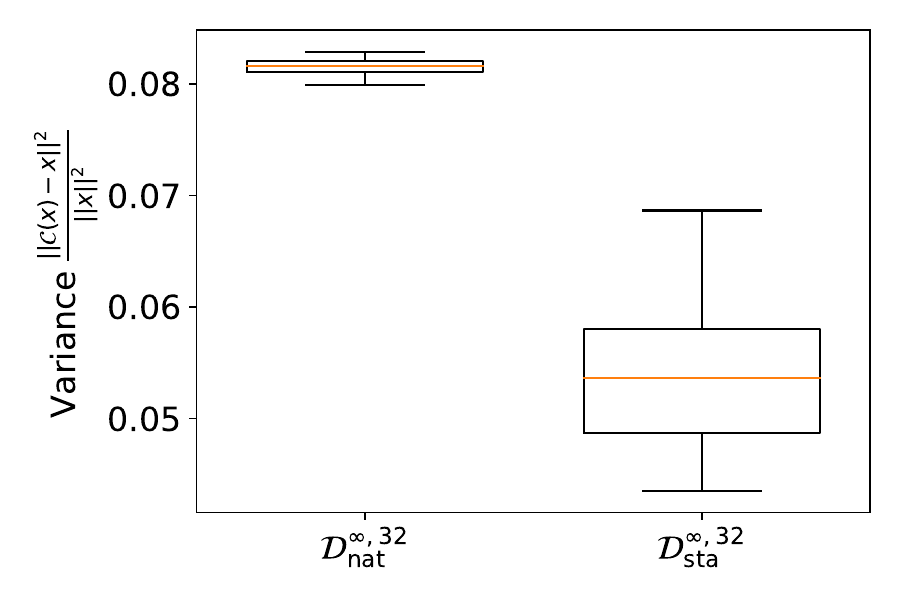}
\caption{When $p=\infty$ and $s$ is very large, the empirical variance of  $\SD$ can be  smaller than that of $\ND$. However, in this case, the variance of $\ND$ is already negligible.}
\label{fig:row_c}
\end{figure}

\subsubsection{Compressing gradients} We  also performed identical to those reported above, but with a different generation technique of the vectors $x$.  In particular, instead of a synthetic Gaussian generation, we used gradients generated by our optimization procedure as applied to the problem of training several deep learning models. Our results were essentially the same as the ones reported above, and hence we do not include them.

\subsection{Different Compression Operators}
\label{sec:extra_exp}

We report additional experiments where we compare our compression operator to previously proposed ones. These results are based on a Python implementation of our methods running in PyTorch as this enabled a rapid direct comparisons against the prior methods. We compare against no compression, random sparsification, and random dithering methods. We compare on MNIST and CIFAR10 datasets. For MNIST, we use a two-layer fully connected neural network with RELU activation function. For CIFAR10, we use VGG11 with one fully connected layer as the classifier. We run these experiments with $4$ workers and batch size $32$ for MNIST and $64$ for CIFAR10. The results are averages over 3 runs.

We tune the step size for SGD for a given ``non-natural'' compression. Then we use the same step size for the ``natural'' method. Step sizes and parameters are listed alongside the results.

Figures~\ref{fig:bin_comparison_cifar} and \ref{fig:bin_comparison_mnist} illustrate the results. Each row contains four plots that illustrate, left to right, (1) the test accuracy vs. the volume of data transmitted from workers to master (measured in bits), (2) the test accuracy over training epochs, (3) the training loss vs. the volume of data transmitted, and (4) the training loss over training epochs.

One can see that in terms of epochs, we obtain almost the same result in terms of training loss and test accuracy, sometimes even better. On the other hand, our approach has a huge impact on the number of bits transmitted from workers to master, which is the main speedup factor together with the speedup in aggregation if we use In-Network Aggregation (INA). Moreover, with INA we compress updates also from master to nodes, hence we send also fewer bits. These factors together bring significant speedup improvements, as illustrated in Figure~\ref{fig:imagenet_speedup}, which {\bf strongly suggests} similar {\bf speed-up in training time} as observed for $\NC$, see e.g. Section~\ref{sec:Exp}.

\newpage

\begin{figure}[H]
\centering
\hfill
\subfigure[No Additional compression, step size $0.1$.]
{
\includegraphics[width=0.245\textwidth]{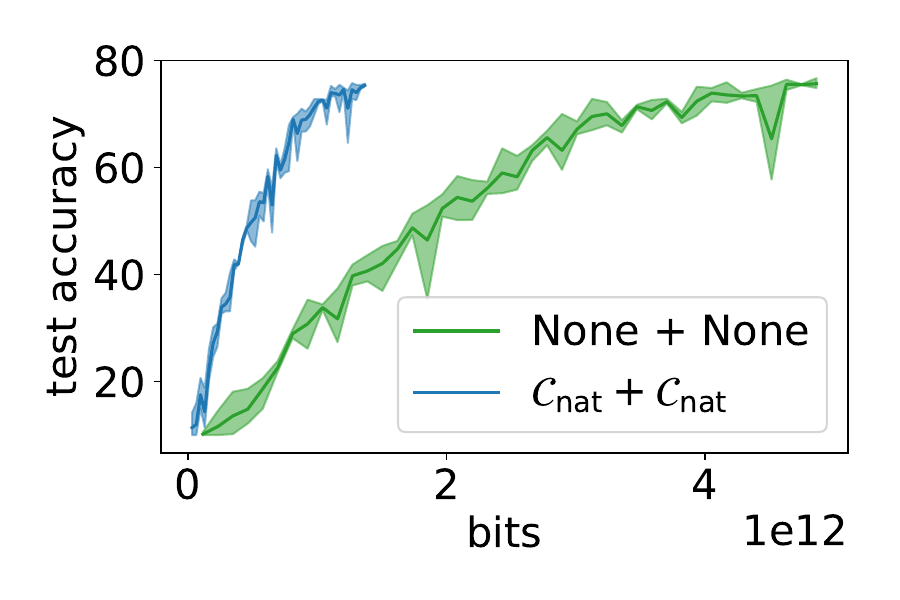}
\includegraphics[width=0.245\textwidth]{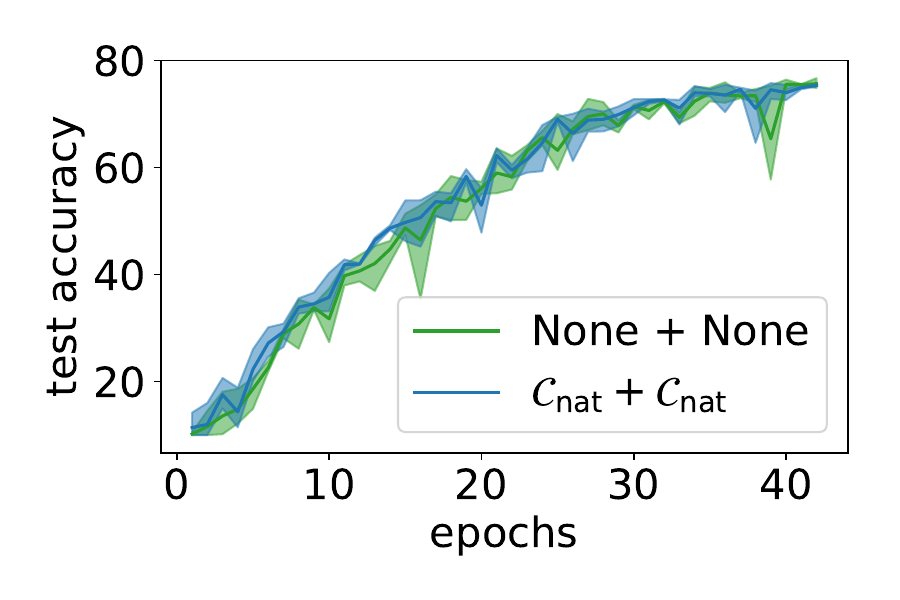}
\includegraphics[width=0.245\textwidth]{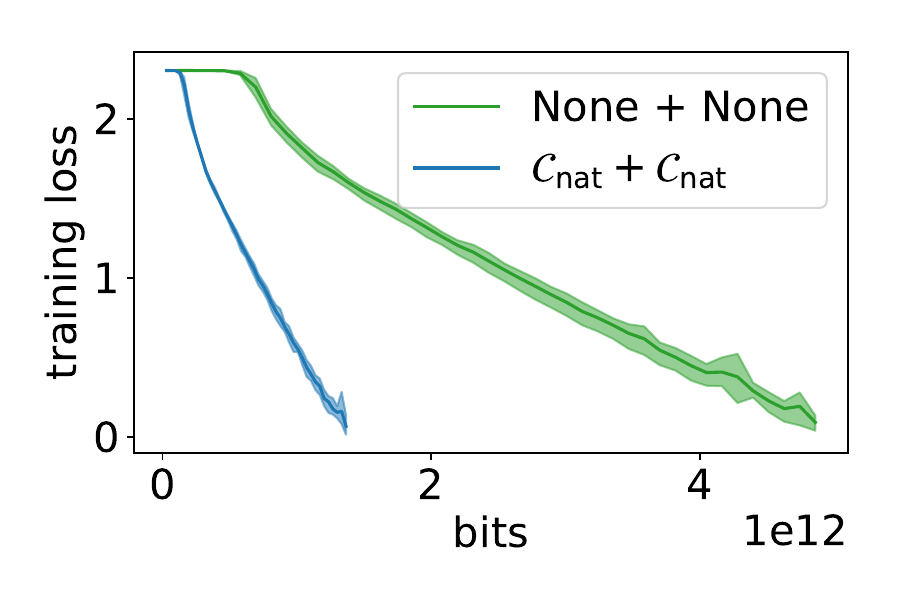}
\includegraphics[width=0.245\textwidth]{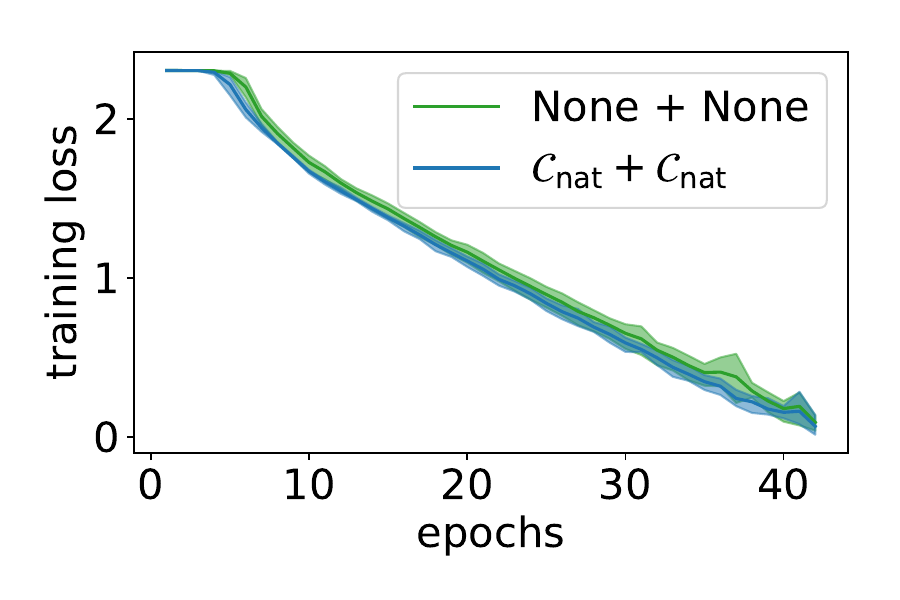}
} \\
\subfigure[Random sparsification, step size $0.04$, sparsity $10\%$.]
{
\includegraphics[width=0.245\textwidth]{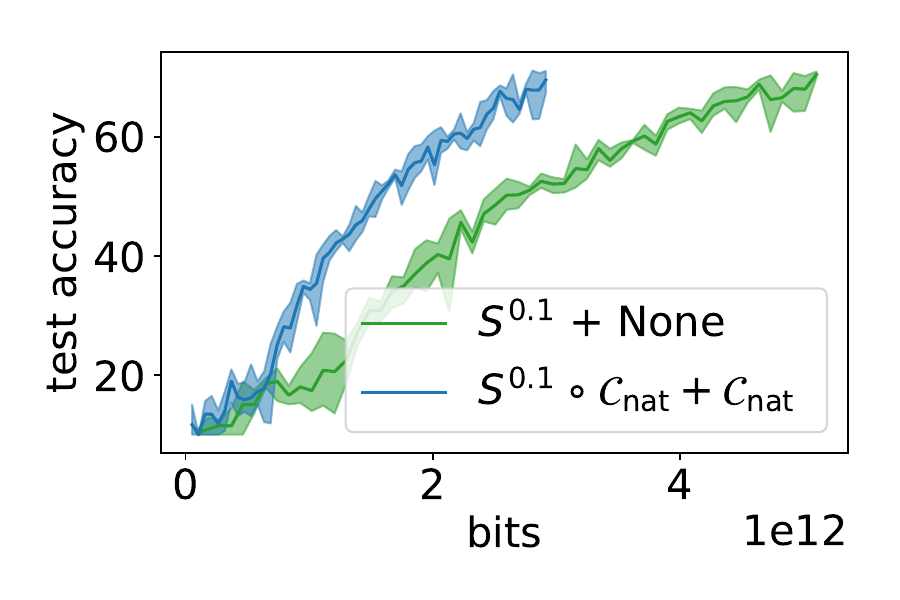}
\includegraphics[width=0.245\textwidth]{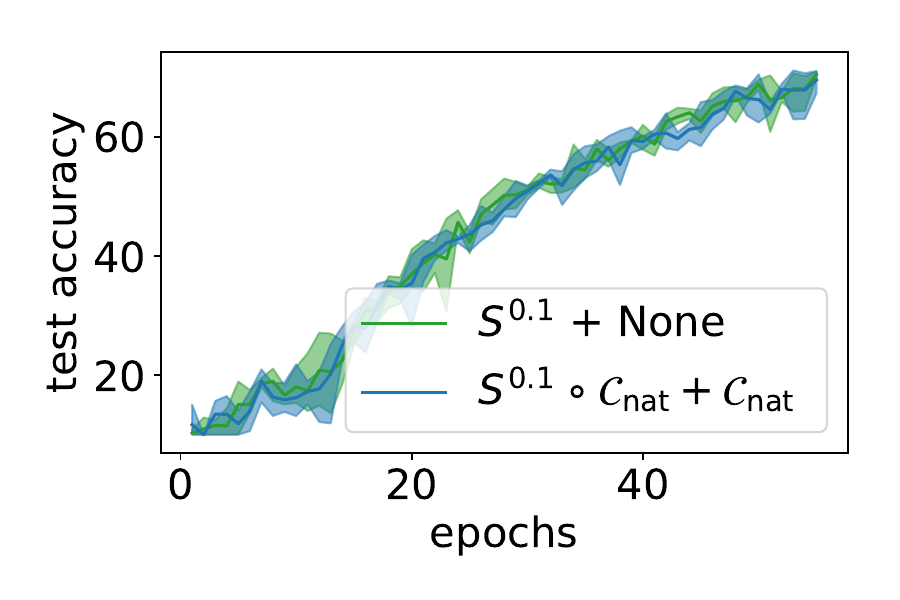}
\includegraphics[width=0.245\textwidth]{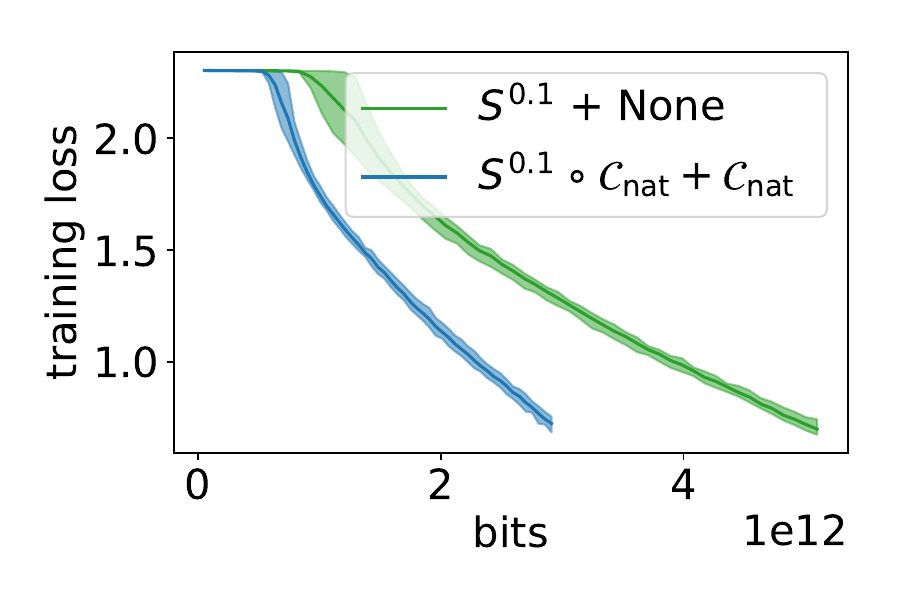}
\includegraphics[width=0.245\textwidth]{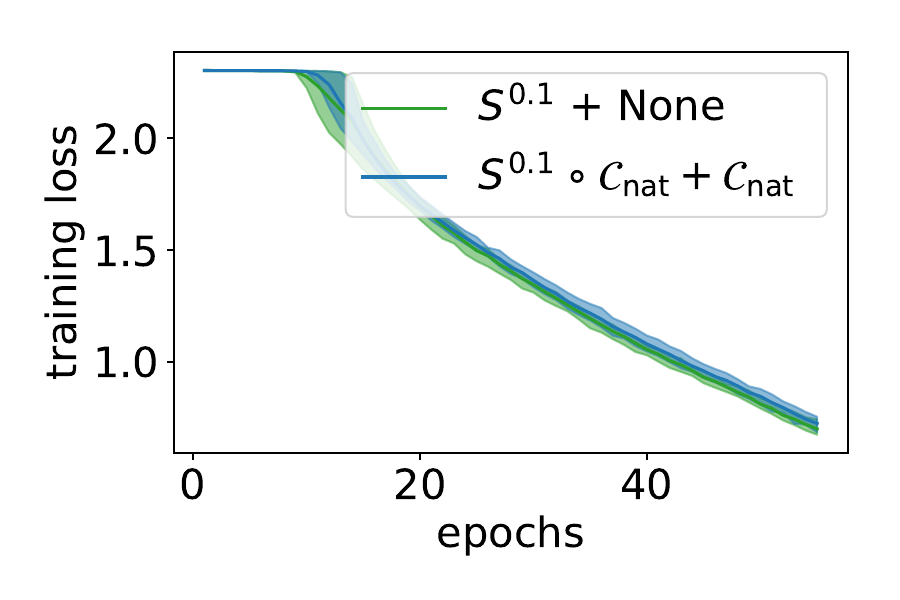} 
} \\
\subfigure[Random sparsification with non-uniform probabilities~\citep{tonko}, step size $0.04$, sparsity $10\%$.]
{
\includegraphics[width=0.245\textwidth]{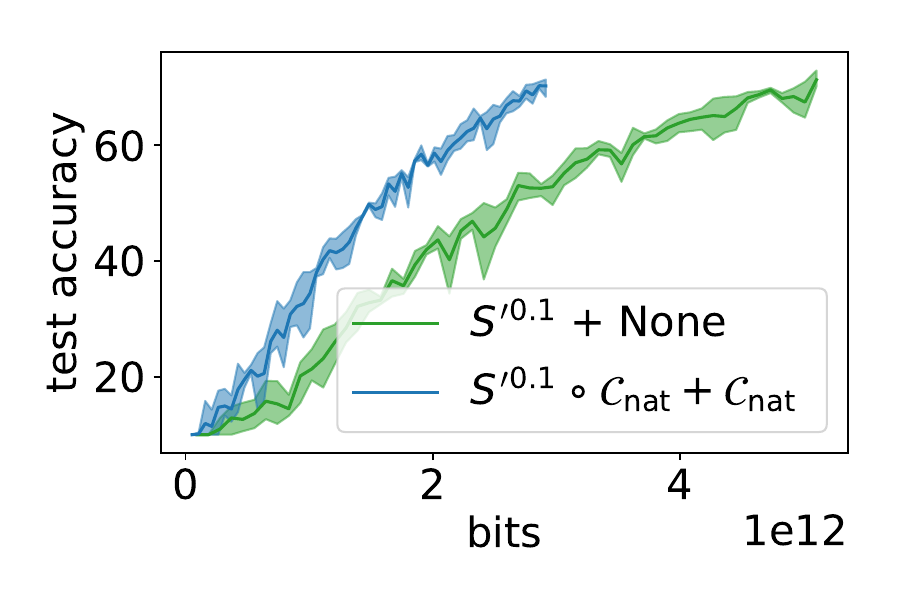}
\includegraphics[width=0.245\textwidth]{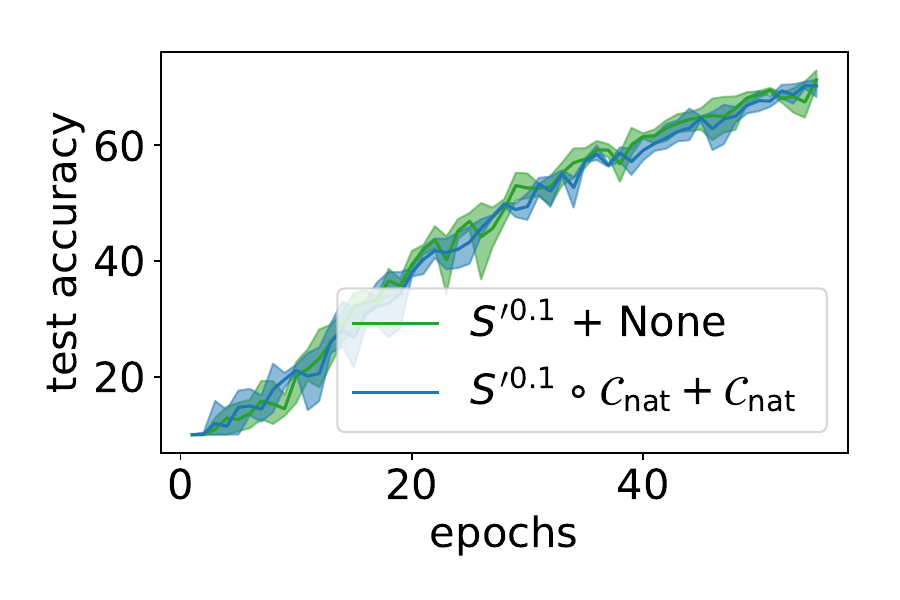}
\includegraphics[width=0.245\textwidth]{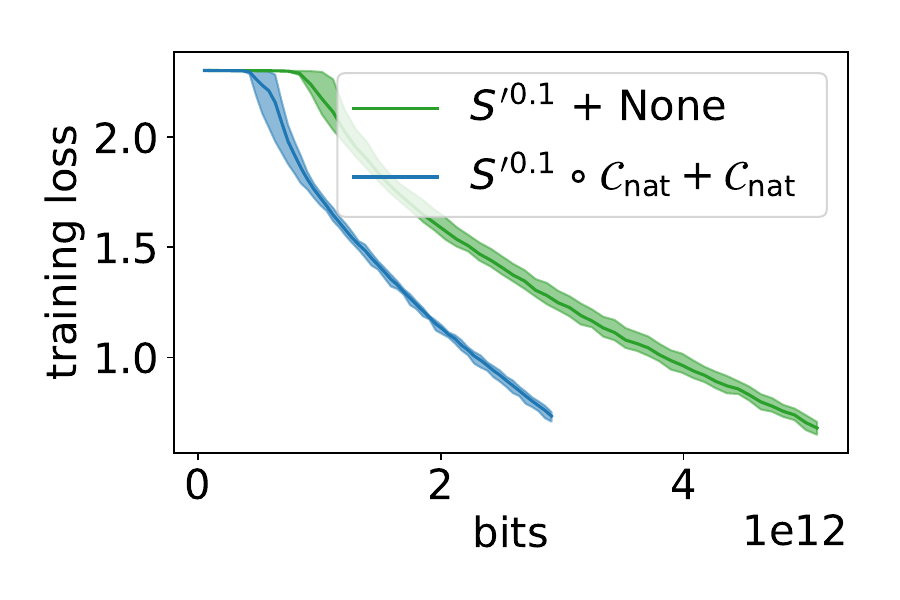}
\includegraphics[width=0.245\textwidth]{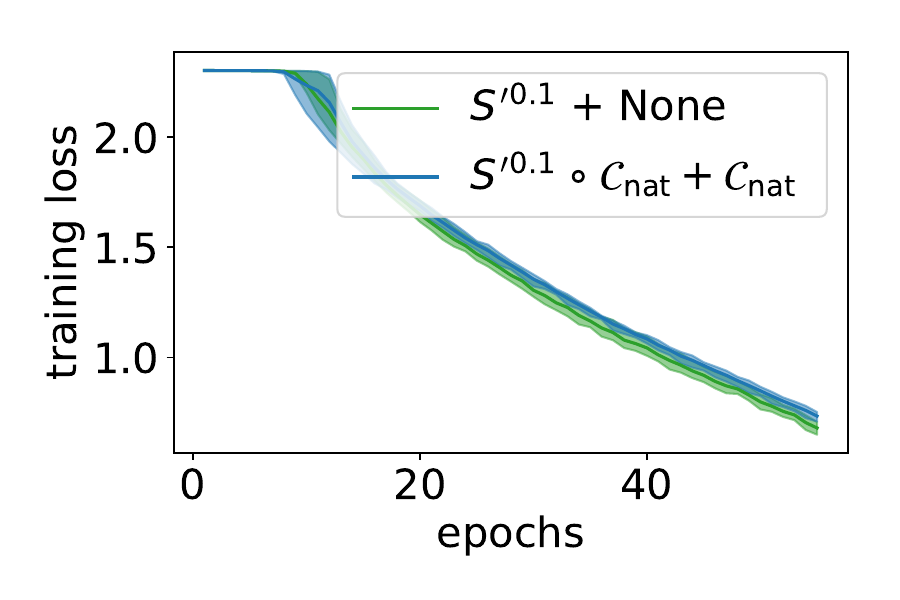} 
} \\
\subfigure[Random dithering, step size $0.08$, $s = 8$, $u = 2^7$, second norm.]
{
\includegraphics[width=0.245\textwidth]{{plots_sim/cifar_VGG11_dithering_test_acc_0.08_4_60_bits_}.pdf}
\includegraphics[width=0.245\textwidth]{{plots_sim/cifar_VGG11_dithering_test_acc_0.08_4_60_epochs_}.pdf}
\includegraphics[width=0.245\textwidth]{{plots_sim/cifar_VGG11_dithering_train_loss_0.08_4_60_bits_}.pdf}
\includegraphics[width=0.245\textwidth]{{plots_sim/cifar_VGG11_dithering_train_loss_0.08_4_60_epochs_}.pdf} 
} 
\caption{CIFAR10 with VGG11.}
\label{fig:bin_comparison_cifar}
\end{figure}

\begin{figure}[H]
\centering
\hfill
\subfigure[No Additional compression, step size $0.1$.]
{
\includegraphics[width=0.245\textwidth]{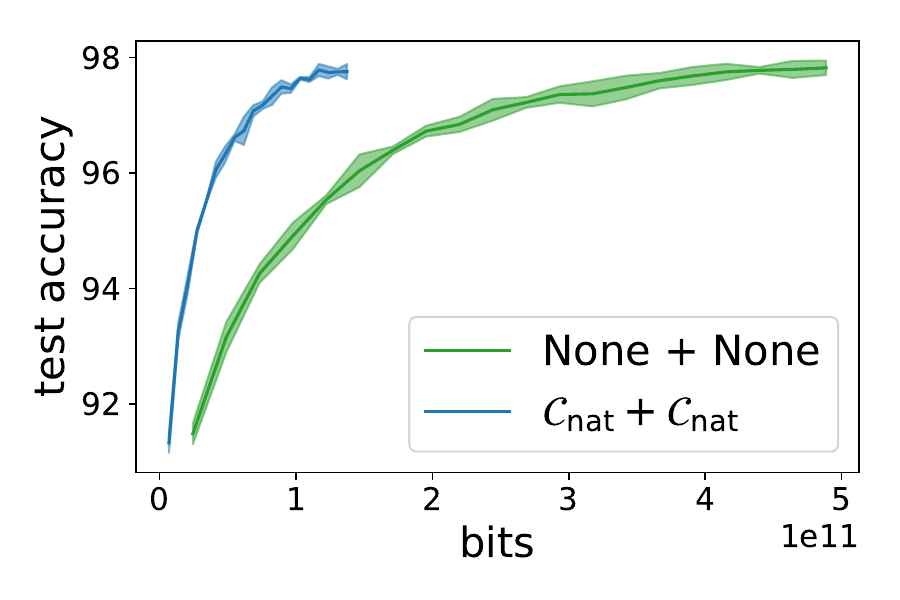}
\includegraphics[width=0.245\textwidth]{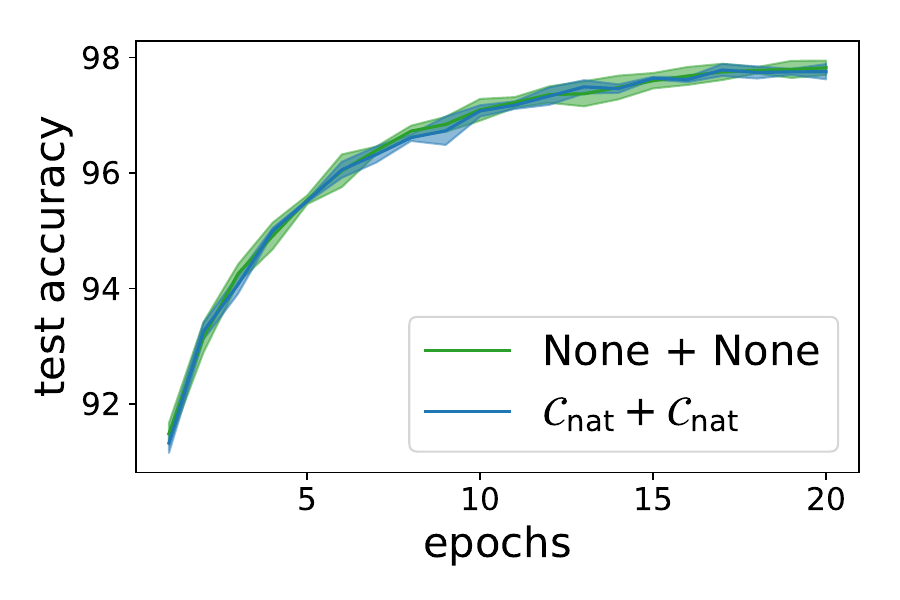}
\includegraphics[width=0.245\textwidth]{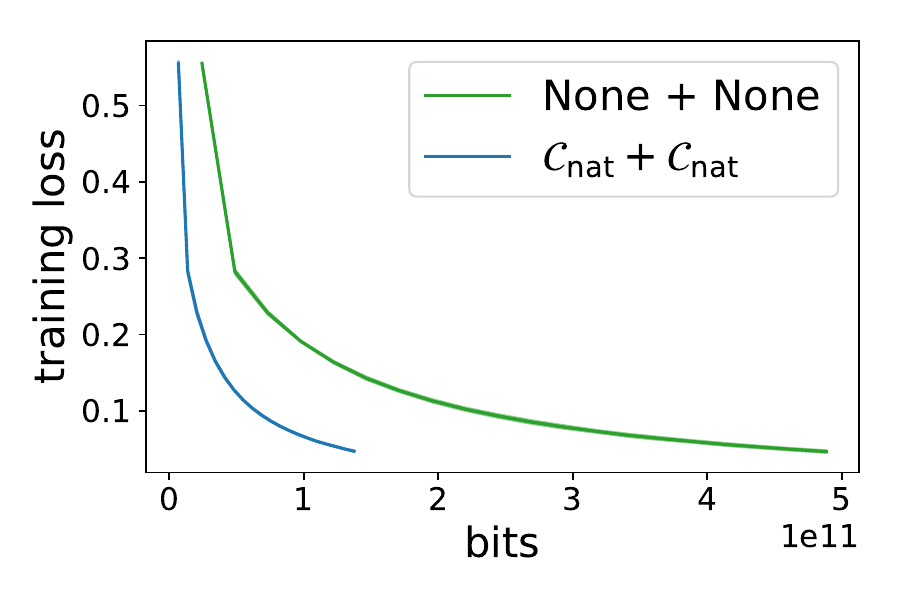}
\includegraphics[width=0.245\textwidth]{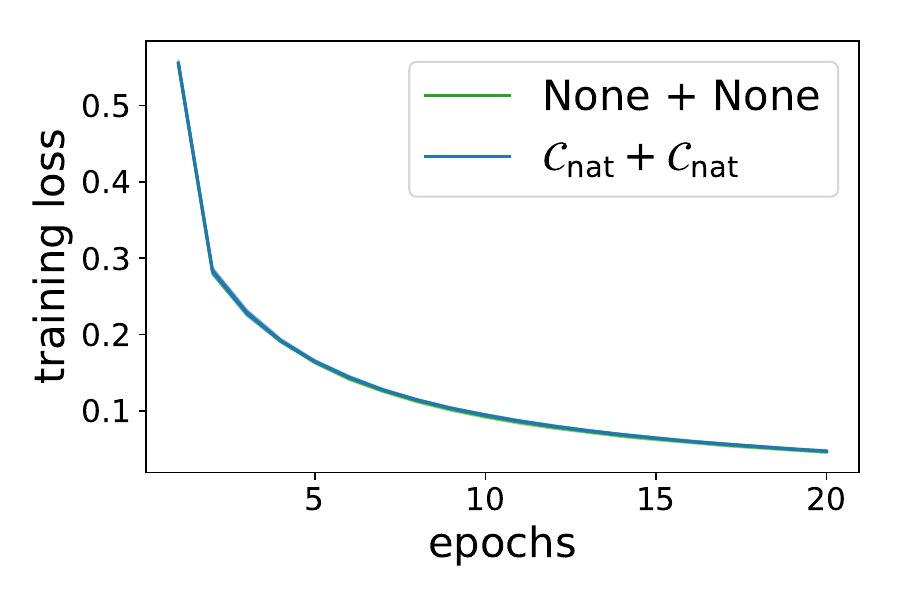}
} \\
\subfigure[Random sparsification, step size $0.04$, sparsity $10\%$.]
{
\includegraphics[width=0.245\textwidth]{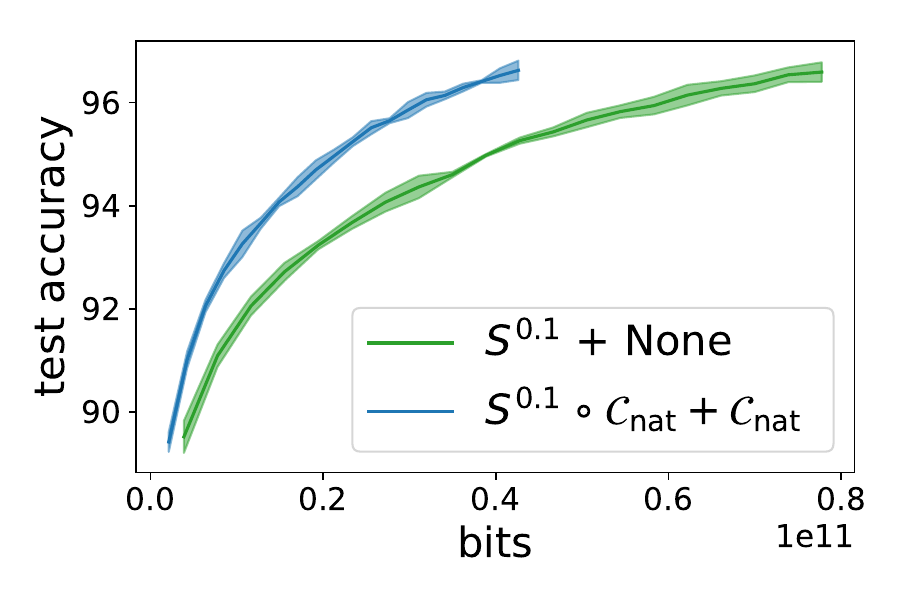}
\includegraphics[width=0.245\textwidth]{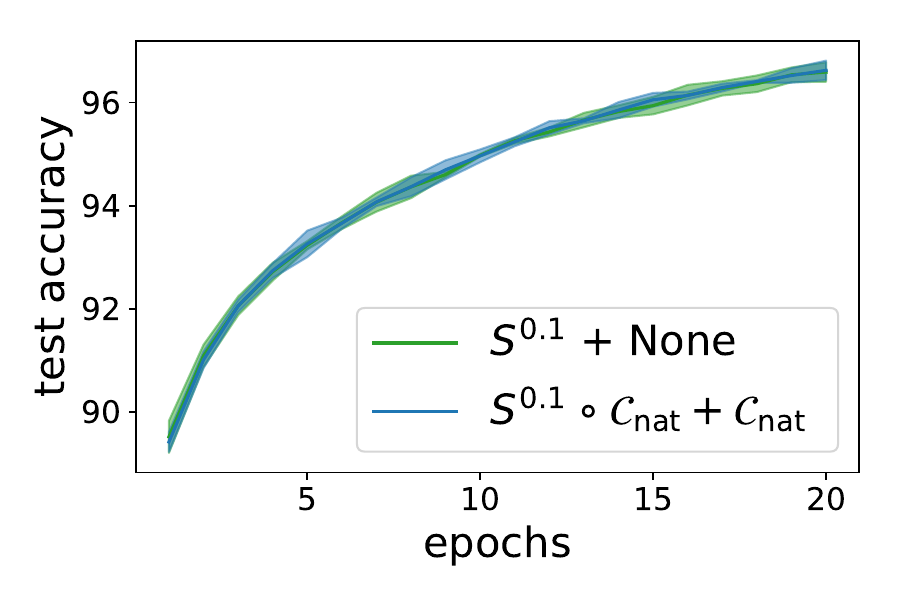}
\includegraphics[width=0.245\textwidth]{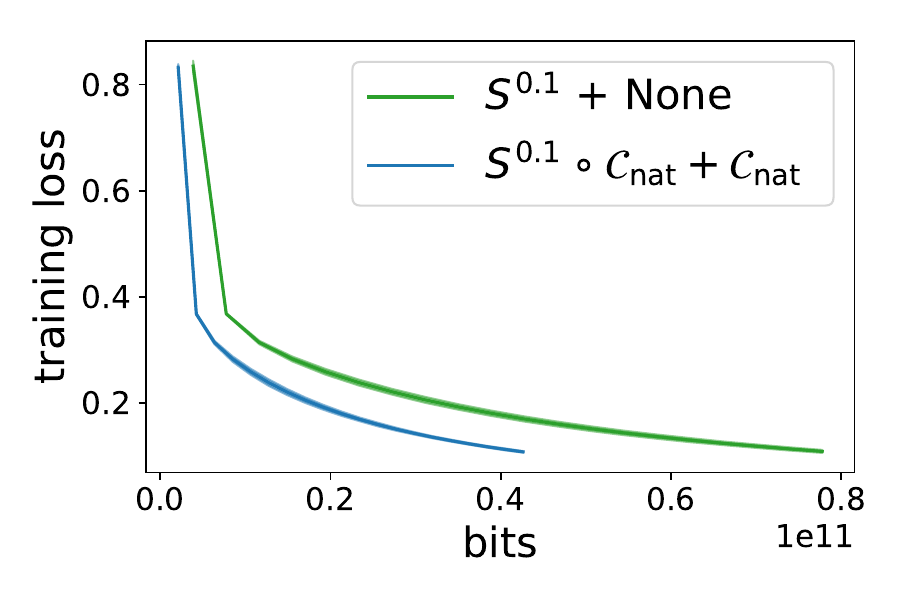}
\includegraphics[width=0.245\textwidth]{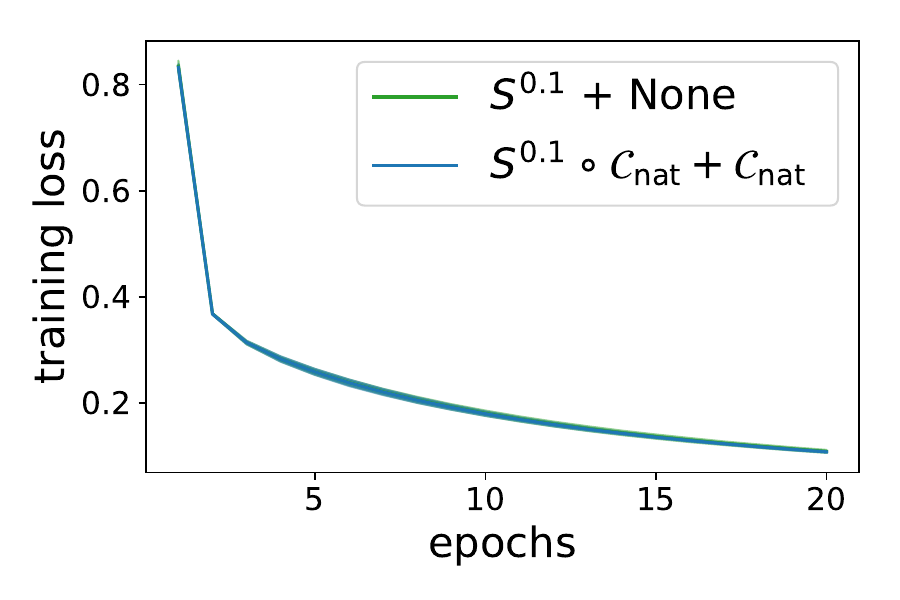} 
} \\
\subfigure[Random sparsification with non-uniform probabilities~\citep{tonko}, step size $0.04$, sparsity $10\%$.]
{
\includegraphics[width=0.245\textwidth]{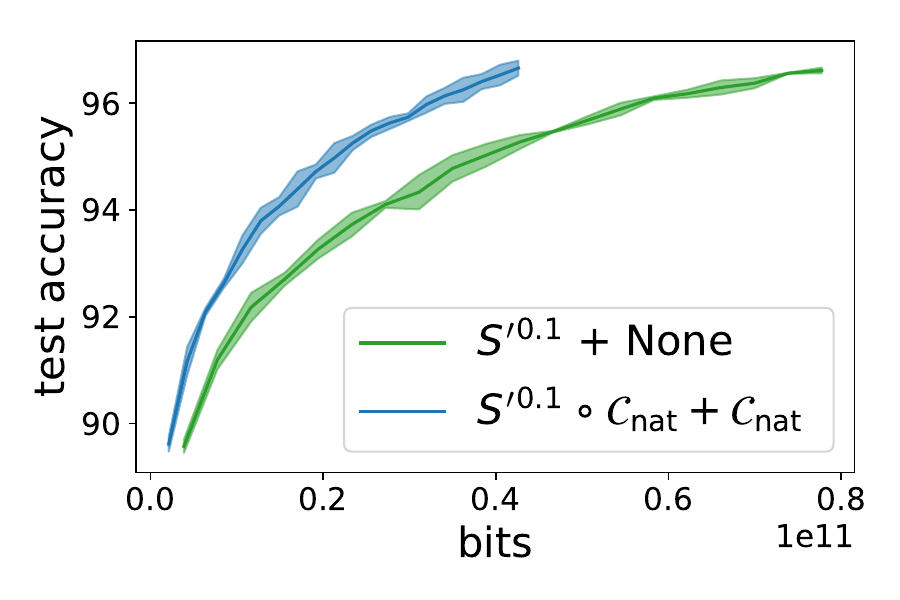}
\includegraphics[width=0.245\textwidth]{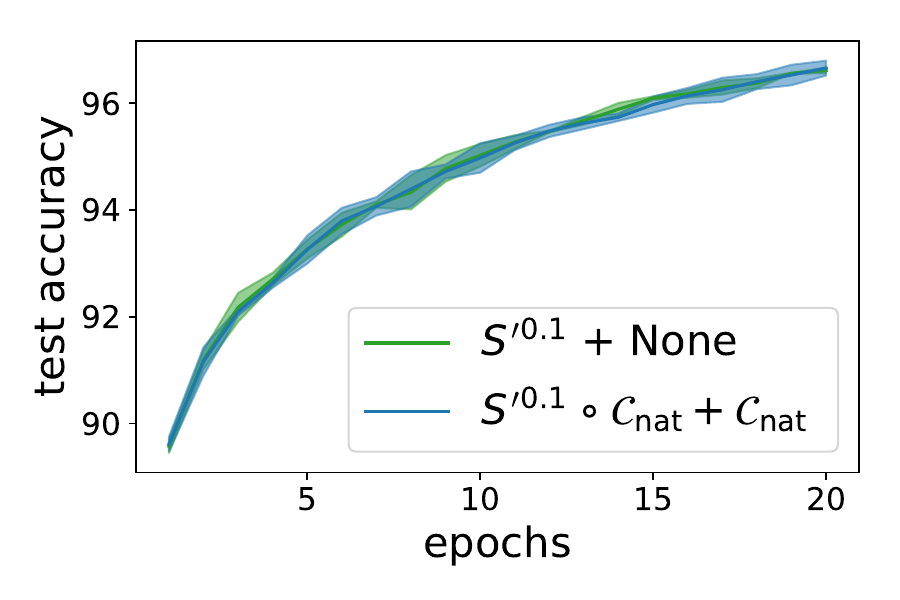}
\includegraphics[width=0.245\textwidth]{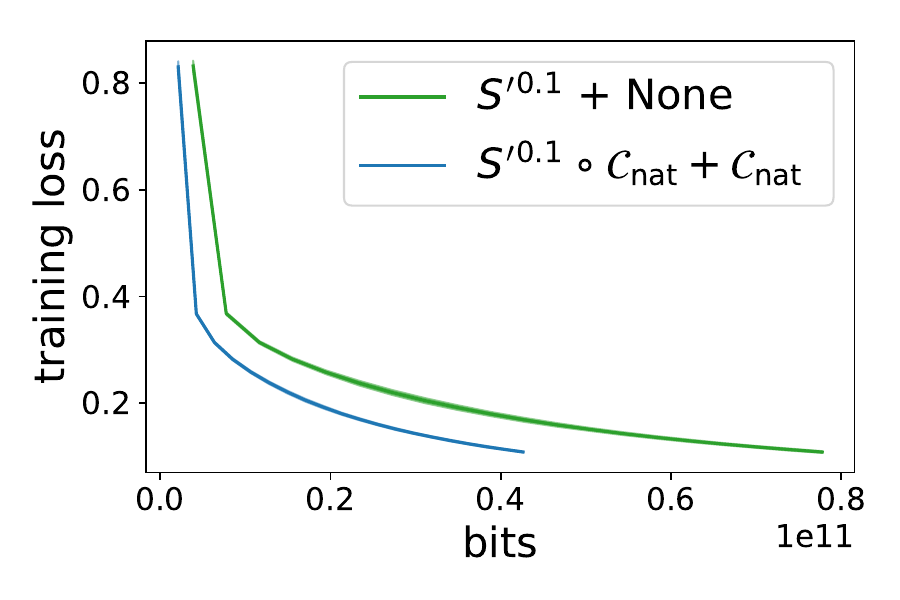}
\includegraphics[width=0.245\textwidth]{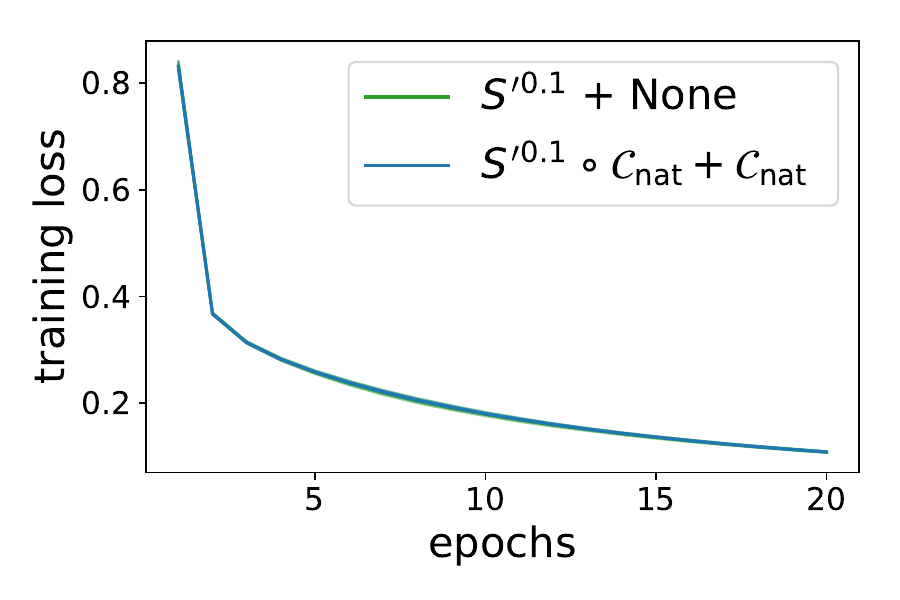} 
} \\
\subfigure[Random dithering, step size $0.01$, $s = 8$, $u = 2^7$, second norm.]
{
\includegraphics[width=0.245\textwidth]{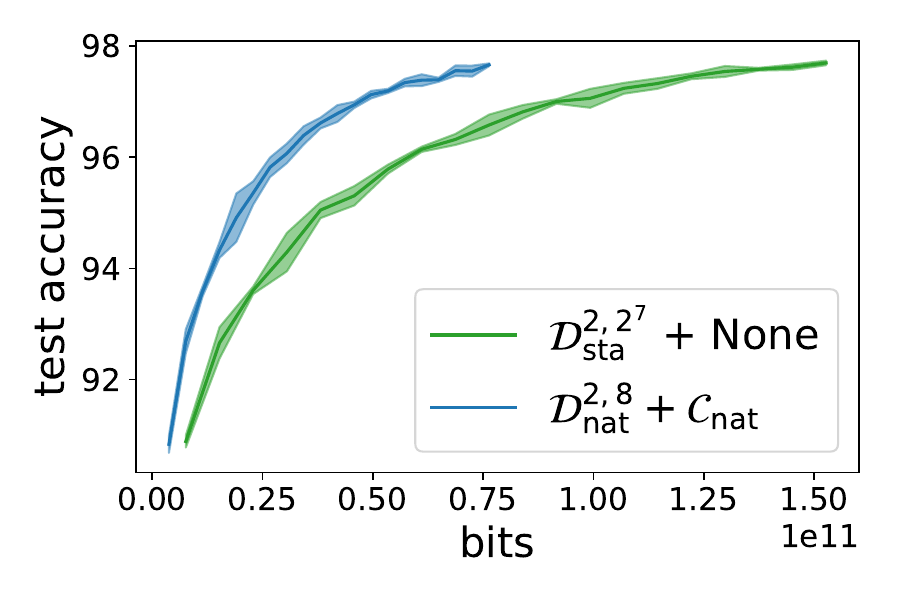}
\includegraphics[width=0.245\textwidth]{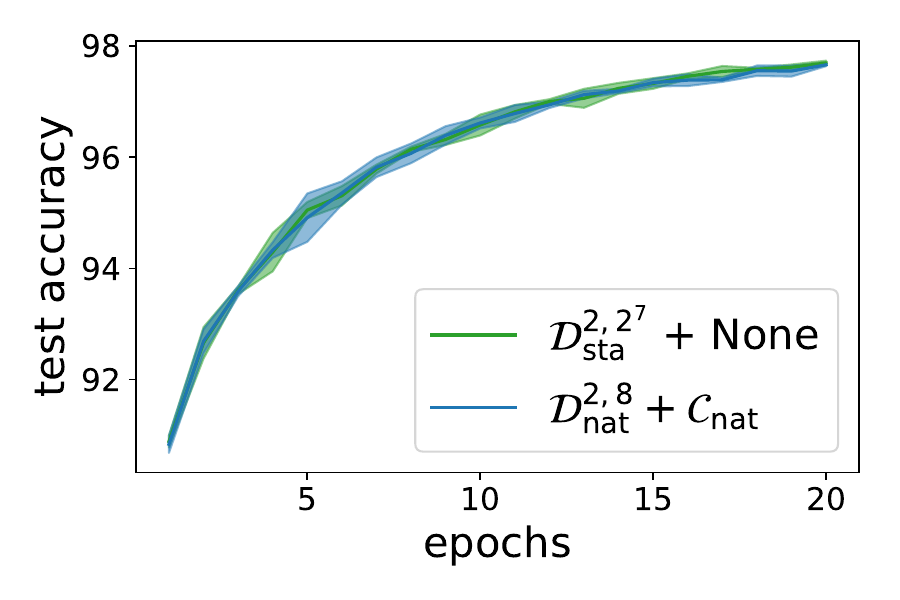}
\includegraphics[width=0.245\textwidth]{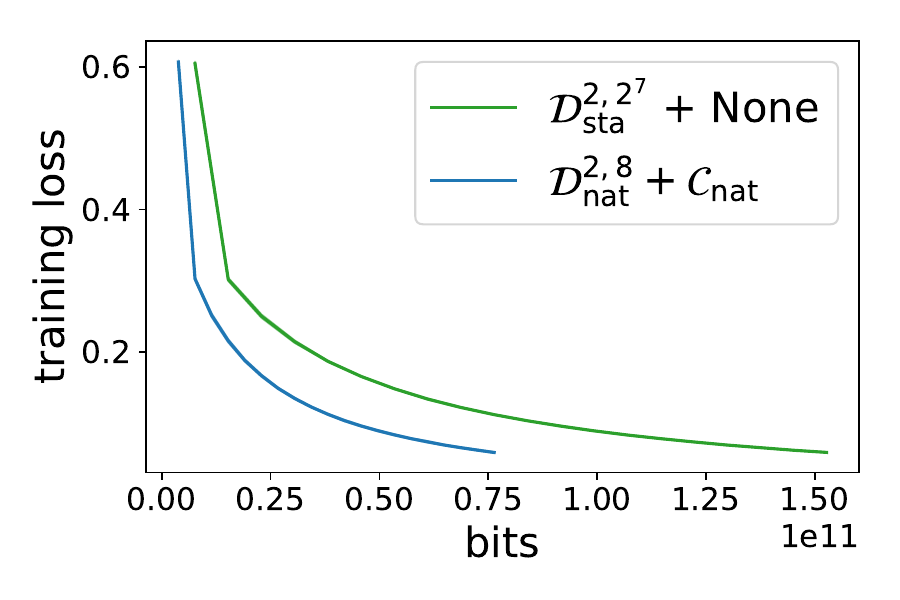}
\includegraphics[width=0.245\textwidth]{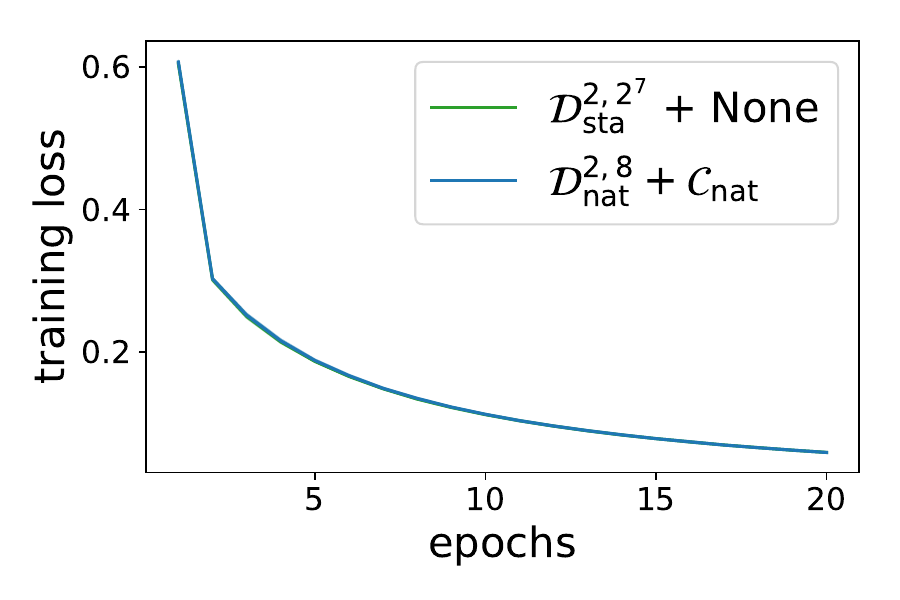} 
} 
\caption{MNIST with $2$ fully conected layers.}
\label{fig:bin_comparison_mnist}
\end{figure}

\section{Experimental setup}
\label{sec:exp_setup}

Our experiments execute the standard CNN benchmark\footnote{\url{https://github.com/tensorflow/benchmarks}}.
We summarize the hyperparameters setting in Appendix~\ref{sec:hyper}.
We further present results for two more variations of our implementation: one without compression (providing the baseline for In-Network Aggregation~\citep{switchML}), and the other with deterministic rounding to the nearest power of 2 to emphasize that there exists a performance overhead of sampling in natural compression. 

We implement the natural compression operator within the Gloo communication library\footnote{\url{https://github.com/facebookincubator/gloo}}, as a drop-in replacement for the ring all-reduce routine. Our implementation is in C++. We integrate our communication library with Horovod and, in turn, with TensorFlow.
We follow the same communication strategy introduced in SwitchML~\citep{switchML}, which aggregates the deep learning model's gradients using In-Network Aggregation on programmable network switches. We choose this strategy because natural compression is a good fit for the capabilities of this class of modern hardware, which only supports basic integer arithmetic, simple logical operations and limited storage.

A worker applies the natural compression operator to quantize gradient values and sends them to the aggregator component.
As in SwitchML, an aggregator is capable of aggregating a fixed-length array of gradient values at a time. Thus, the worker
sends a stream of network packets, each carrying a chunk of compressed values.
For a given chunk, the aggregator awaits all values from every worker; then, it restores the compressed values as integers, aggregates them and applies compression to quantize the aggregated values. Finally, the aggregator multicasts back to the workers a packet of aggregated values.

For implementation expedience, we prototype the In-Network Aggregation as a server-based program implemented atop DPDK\footnote{\url{https://www.dpdk.org}} for fast I/O performance. We leave to future work a complete P4 implementation for programmable switches; however, we note that all operations (bit shifting, masking, and random bits generation) needed for our compression operator are available on programmable switches.

{\bf Implementation optimization.} We carefully optimize our implementation using modern x86 vector instructions (AVX2) to minimize the overheads in doing compression.
To fit the byte length and access memory more efficiently, we compress a 32-bit floating point numbers to an 8-bit representation, where 1 bit is for the sign and 7 bits are for the exponent.
The aggregator uses 64-bit integers to store the intermediate results, and we choose to clip the exponents in the range of $-50\sim10$.
As a result, we only use 6 bits for exponents.
The remaining one bit is used to represent zeros.
Note that it is possible to implement 128-bit integers using two 64-bit integers, but we found that, in practice, the exponent values never exceed the range of $-50\sim10$ (Figure \ref{fig:exp}).

Despite the optimization effort, we identify non-negligible $10\sim15\%$ overheads in doing random number generation used in stochastic rounding, which was also reported in \citep{hubara2017quantized}.
We include the experimental results of our compression operator without stochastic rounding as a reference.
There could be more efficient ways to deal with stochastic rounding, but we observe that doing deterministic rounding gives nearly the same training curve in practice meaning that computational speed up is neutralized by slower convergence due to biased compression operator.

{\bf Hardware setup.}
We run the workers on 8 machines configured with 1 NVIDIA P100 GPU, dual CPU Intel Xeon E5-2630 v4 at 2.20GHz, and 128 GB of RAM. The machines run Ubuntu (Linux kernel 4.4.0-122) and  CUDA 9.0.
Following~\citep{switchML}, we balance the workers with 8 aggregators (4 aggregators in the case of 4 workers) running on machines configured with dual Intel Xeon Silver 4108 CPU at 1.80 GHz. Each machine uses a 10 GbE network interface and has CPU frequency scaling disabled. The chunks of compressed gradients sent by workers are uniformly distributed across all aggregators. This setup ensures that workers can fully utilize their network bandwidth and match the performance of a programmable switch. We leave the switch-based implementation for future work.

\begin{figure}[t]
  \centering
  \includegraphics[width=0.495\textwidth]{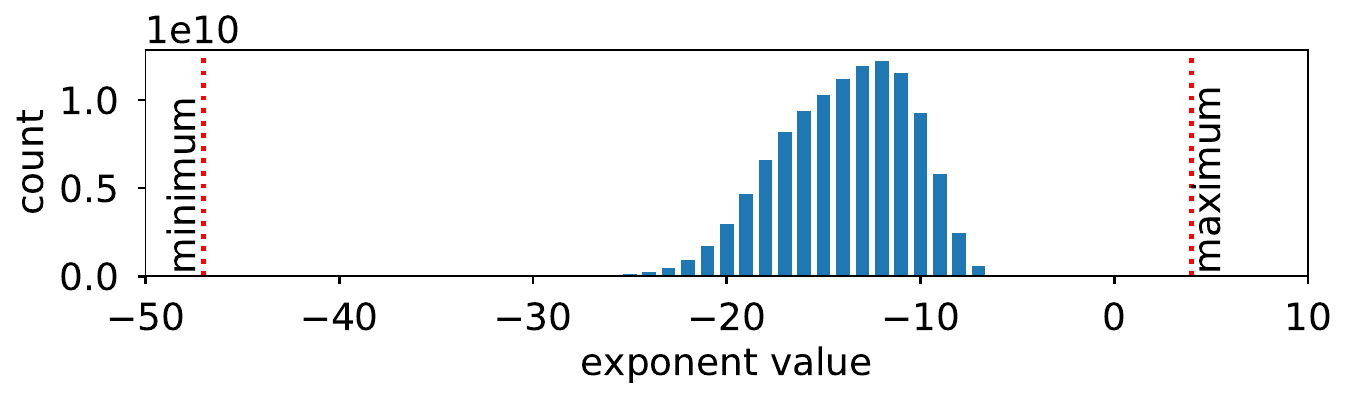}
  \includegraphics[width=0.495\textwidth]{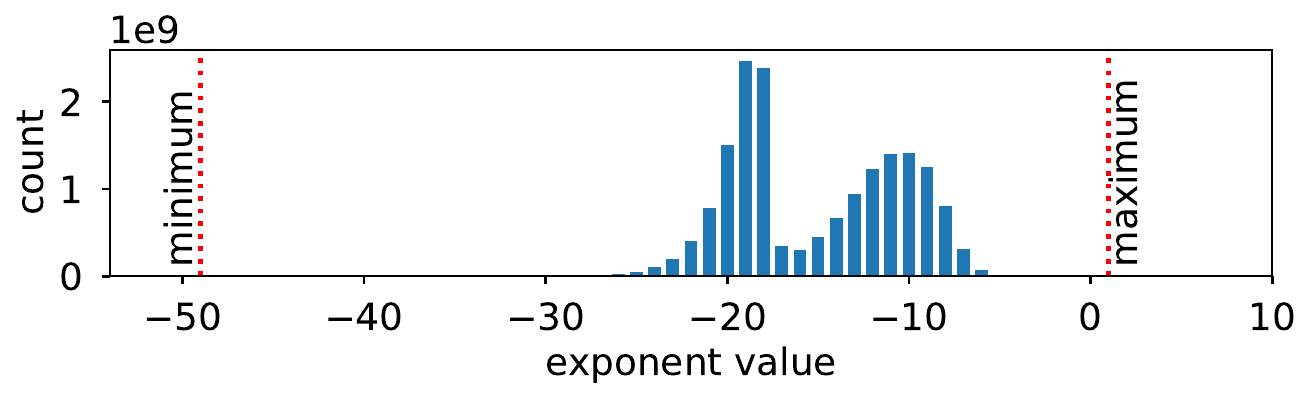}
  \caption{Histogram of exponents of gradients exchanged during the entire training process for ResNet110 (left) and Alexnet (right). Red lines denote the minimum and maximum exponent values of all gradients.}
  \label{fig:exp}
\end{figure}

\subsection{Hyperparameters}
\subsubsection{DenseNet Hyperparameters:}
We trained DenseNet40 $(k=12)$ and followed the same training procedure as described in  \citep{huang2017densely}. We used a weight decay of $10^{-4}$ and the optimizer as vanilla SGD. We trained for a total of 300 epochs. The initial learning rate was $0.1$, which was decreased by a factor of 10 at 150 and 225 epoch. 

\subsubsection{AlexNet Hyperparameters:} \label{sec:hyper}

For AlexNet, we chose the optimizer as SGD with momentum, with a momentum of $0.9$. We trained on three minibatch sizes:   $256, 512 \text{ and } 1024$ for 200 epochs. The learning rate was initially set to be $0.001$, which was decreased by a factor of $10$ after every $30$ epoch.
\subsubsection{ResNet Hyperparameters for CIFAR10:}
All the ResNets followed the training procedure as described in \citep{resnet}. We used a weight decay of $10^{-4}$ and the optimizer was chosen to be vanilla SGD.  The minibatch size was fixed to be 128 for ResNet 20, and 256 for all the others. We train for a total of 64K iterations. We start with an initial learning rate of $0.1$, and multiply it by $0.1$ at $32K$ and $48K$ iterations.  
\subsubsection{ResNet Hyperparameters for ImageNet:} \label{sec:hyper-imagenet}
The model trains for $50$ epochs on 8 and 16 workers, with default ResNet50 setup: SGD optimizer with $0.875$ momentum, cosine learning rate schedule with $0.256$ initial learning rate and linear warmup during the first $8$ epochs.
The weight decay is set to $\nicefrac{1}{32768}$ and is not applied on Batch Norm trainable parameters.
Furthermore, $0.1$ label smoothing is used.
\subsubsection{NCF Hyperparameters} \label{sec:hyper-ncf}
Neural Collaborative Filtering is a big recommendation model with $\mathtt{\sim} 32$ million parameters.
The model trains for $20$ epochs on 8 workers with ADAM optimizer  \citep{adam} (lr$=4.5\times10^{-3}$,  $\beta_1=0.25$, $\beta_2=0.5$), a global batch-size of $2^{20}$, and a dropout ratio of $0.5$. No weigt decay is applied.

\newpage

\section{Details and Proofs for Sections~\ref{SEC:NAT_COMPRESSION} and \ref{SEC:ND}}

\subsection{Proof of Theorem~\ref{LEM:BR_QUANT}} \label{sec:Proof_main_thm}

By linearity of expectation, the unbiasedness condition and the  second moment condition \eqref{def:Q-unbiased_sec_moment} have the form
\begin{equation} \E{(\cC(x))_i}=x_i, \qquad \forall x\in \R^d, \quad \forall i\in [d] \label{def:Q-unbiased_proof}\end{equation}
and
\begin{equation}
 \sum_{i=1}^d \E{ (\cC(x))_i^2 } \leq (\omega+1) \sum_{i=1}^d x_i^2, \qquad \forall x \in \R^d. \label{def:Q-second_moment_proof}
\end{equation}

Recall that  $\NC(t)$ can be written in the form
 \begin{equation}
 \label{eq:ef_impl}
 \NC(t)= \signum (t) \cdot 2^{\floor{\log_2 \abs{t}}}(1+\lambda(t)).
 \end{equation}
 where the last step follows since $p(t)=\frac{2^{\ceil{\log_2 \abs{t}}}-\abs{t}}{2^{\floor{\log_2 \abs{t}}}}$.
Hence,
\begin{eqnarray*}
\E{\NC(t)} &\overset{\eqref{eq:ef_impl}}{=}& \E{ \signum (t) \cdot 2^{\floor{\log_2 \abs{t}}}(1+\lambda(t))}  =    \signum (t) \cdot 2^{\floor{\log_2 \abs{t}}} \left(1+\E{\lambda(t) } \right) \\
&=&    \signum (t) \cdot 2^{\floor{\log_2 \abs{t}}} \left(1+1 - p(t) \right) = t,
\end{eqnarray*}
This establishes unbiasedness \eqref{def:Q-unbiased_proof}.

In order to establish \eqref{def:Q-second_moment_proof}, it suffices to show that $\E{ (\NC(x))_i^2 } \leq (\omega+1) x_i^2$ for all $x_i\in \R$. Since by definition $(\NC(x))_i = \NC(x_i)$ for all $i\in [d]$, 
it suffices to show that
\begin{equation} \label{eq:to_prove}\E{ (\NC(t))^2 } \leq (\omega+1) t^2, \qquad \forall t\in \R. \end{equation}
If $t=0$ or $t=\signum (t) 2^\alpha$ with $\alpha$ being an integer, then $\NC(t)=t$, and \eqref{eq:to_prove} holds as an identity with $\omega=0$, and hence inequality \eqref{eq:to_prove}  holds for $\omega = \nicefrac{1}{8}$. Otherwise $t= \signum (t) 2^\alpha$ where $a \eqdef \lfloor \alpha \rfloor <\alpha < \lceil \alpha \rceil = a+1$.
With this notation, we can write
\begin{eqnarray*}
\E{(\NC(t))^2}  &=& 2^{2a}\frac{2^{a+1}-\abs{t}}{2^{a}} + 2^{2(a+1)}\frac{\abs{t}-2^a}{2^{a}} \quad = \quad  2^{a}(3\abs{t} - 2^{a+1}). \\ 
\end{eqnarray*}
So,
\begin{eqnarray*}
\frac{\E{(\NC(t))^2} }{t^2} &=& \frac{2^{a}(3\abs{t} - 2^{a+1})}{t^2} \quad \leq \quad \sup_{2^a<t < 2^{a+1}}\frac{2^{a}(3\abs{t} - 2^{a+1})}{t^2}\\
&= & \sup_{1< \theta < 2}\frac{2^{a}(3 \cdot 2^a \theta - 2^{a+1})}{(2^a \theta )^2} \quad =\quad  \sup_{1< \theta < 2}\frac{3 \theta - 2}{\theta^2}.
\end{eqnarray*}
The optimal solution of the last maximization problem is  $\theta = \frac{4}{3}$, with optimal objective value $\frac{9}{8}$. This implies that \eqref{eq:to_prove} holds with  $\omega = \frac{1}{8}$.

\subsection{Proof of Theorem~\ref{PROP:INTROUDING_NEGATIVE}}

Let assume that there exists some $\omega < \infty$ for which $\Qint$ is the $\omega$ quantization.  Unbiased rounding to the nearest integer can be defined in the following way
\begin{eqnarray*}
\Qint(x_i) \eqdef \begin{cases} \floor{x_i} , & \quad \text{with probability} \quad p(x_i),\\
\ceil{x_i} , & \quad  \text{with probability} \quad 1-p(x_{i}),
\end{cases}
\end{eqnarray*}
where $p(x_i) = \ceil{x_i} - x_i$. Let's take $1$-D example, where $x \in (0,1)$, then 
\begin{eqnarray*}
\E{\Qint(x^2)} = (1 - x) 0^2 + x 1^2 = x \leq \omega x^2,
\end{eqnarray*} 
which implies $\omega \geq 1/x$, thus taking $x \rightarrow 0^+$, one obtains $\omega \rightarrow \infty$, which contradicts the existence of finite $\omega$.

\subsection{Proof of Theorem~\ref{THM:COMP}}

The main building block of the proof is the tower property of mathematical expectation. The tower property says: If $X$ and $Y$ are random variables, then
$
\E{X} =\E{ \E{X \;|\; Y}}.
$
Applying it to the composite compression operator $\cC_{1}\circ \cC_2$, we get
\begin{align*}
\E{\left(\cC_1\circ \cC_2\right)(x)} = \E{\E{\cC_1(\cC_2(x)) \;|\; \cC_2(x)}} \overset{\eqref{def:Q-unbiased_sec_moment}}{=} \E{\cC_2(x)} \overset{\eqref{def:Q-unbiased_sec_moment}}{=} x\;.
\end{align*} 

 For the second moment, we have 
 \begin{align*}
 \E{\norm{\left(\cC_1\circ \cC_2\right)(x)}^2} &=  \E{\E{\norm{\cC_1(\cC_2(x))}^2\;|\; \cC_2(x)}}\\
 &\overset{\eqref{def:Q-unbiased_sec_moment}}{\leq} (\omega_2+1) \E{\norm{\cC_1(x)}^2} \\
 & \overset{\eqref{def:Q-unbiased_sec_moment}}{\leq} (\omega_1+1)(\omega_2+1) \norm{x}^2,
 \end{align*}
 which concludes the proof.

\subsection{Proof of Theorem~\ref{THM:ND}}

Unbiasedness of $\ND$ is a direct consequence of unbiasedness of $\GD$.

For the second part, we first establish a  bound on the second moment of $\xi$:
\begin{eqnarray*}
\E{\xi\left(\frac{x_i}{\norm{x}_p}\right)^2} &\leq& \mathds{1}\left(\frac{\abs{x_i}}{\norm{x}_p} \geq 2^{1-s}\right) \frac{9}{8} \frac{\abs{x_i}^2}{\norm{x}^2_p}  + \mathds{1}\left(\frac{\abs{x_i}}{\norm{x}_p} < 2^{1-s}\right) \frac{\abs{x_i}}{\norm{x}_p}2^{1-s} \notag  \\
&\leq &\frac{9}{8} \frac{\abs{x_i}^2}{\norm{x}^2_p}  + \mathds{1}\left(\frac{\abs{x_i}}{\norm{x}_p} < 2^{1-s}\right) \frac{\abs{x_i}}{\norm{x}_p}2^{1-s}  \; .
\label{eq:exp_xi}
\end{eqnarray*}

Using this bound, we have
\begin{eqnarray*}
\E{\norm{\ND(x)}^2} &=& \E{\norm{x}^2_p}\sum_{i = 1}^d \E{\xi\left(\frac{x_i}{\norm{x}_p}\right)^2} \\
&\overset{\eqref{eq:exp_xi}}{\leq}&  \norm{x}^2_p \left( \frac{9\norm{x}^2}{8\norm{x}^2_p} +\sum_{i = 1}^d  \mathds{1}\left(\frac{\abs{x_i}}{\norm{x}_p} < 2^{1-s}\right) \frac{\abs{x_i}}{\norm{x}_p}2^{1-s}  \right) \\
&\leq &   \frac{9}{8}\norm{x}^2 + \min\left\{2^{1-s}\norm{x}_p\norm{x}_1,  2^{2-2s}d  \norm{x}^2_p \right\}  \\
&\leq &   \frac{9}{8}\norm{x}^2 + \min\left\{d^{1/2}2^{1-s}\norm{x}_p\norm{x},  2^{2-2s}d \norm{x}^2_p\right\}  \\
&\leq &   \left( \frac{9}{8} + d^{1/\min\{p,2\}}2^{1-s}\min\left\{1, d^{1/\min\{p,2\}}2^{1-s}\right\} \right) \norm{x}^2\; ,
\end{eqnarray*}
where the second inequality follows from $\sum\min\{a_i, b_i\}  \leq \min\{\sum a_i,\sum b_i\}$ and the last two inequalities follow from the following consequence of H\"{o}lder's inequality $\norm{x}_p \leq d^{1/p-1/2}\norm{x} $ for $1 \leq p < 2$ and from the fact that $\norm{x}_p \leq \norm{x}$ for $p \geq 2$. This concludes the proof.

\subsection{Proof of Theorem~\ref{THM:EXPON_BETTER}}

The  main building block of the proof is useful connection between $\ND$ and $\SDs{2^{s-1}}$, which can be formally written as
\begin{equation}
\ND(x) \overset{D}{=} \norm{x}_p \cdot \signum(x) \cdot \NC(\xi(x)) \; ,
 \label{eq:brd_2}
\end{equation}
where  $(\xi(x))_i = \xi(\nicefrac{x_i}{\norm{x}_p})$ with levels $0,\nicefrac{1}{2^{s-1}},\nicefrac{2}{2^{s-1}}, \cdots, 1$ . Graphical visualization can be found in Figure~\ref{fig:brrd_with_rd}. 

\begin{figure}[t]
\centering
\includegraphics[width=0.8\textwidth]{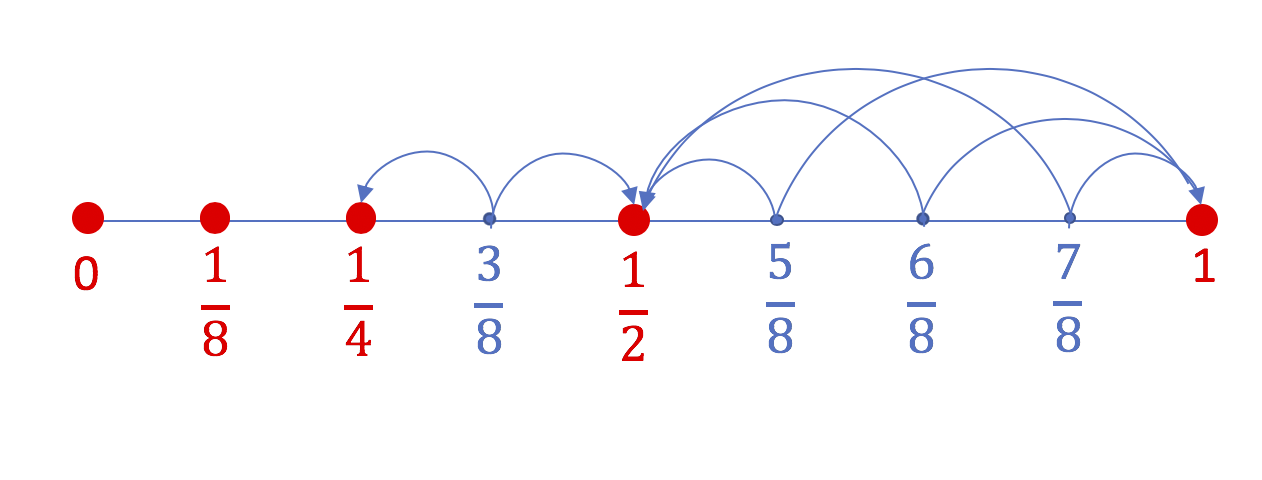}
\caption{1D visualization of the workings of natural dithering $\ND$  and standard dithering $\SDs{u}$ with $u = 2^{s-1}$, with $s=4$. Notice that the numbers standard dithering rounds to, i.e., $0,\nicefrac{1}{8}, \nicefrac{2}{8}, \dots,\nicefrac{7}{8}, 1$,  form a {\em superset} of the numbers  natural dithering rounds to, i.e., $0,2^{-3}, 2^{-2}, 2^{-1}, 1$. Importantly,  while standard dithering uses $u=2^{4-1} = 8$ levels (i.e., intervals) to achieve a certain fixed variance, natural dithering only needs $s=4$ levels to achieve the same variance. This is an exponential improvement in compression (see Theorem~\ref{THM:EXPON_BETTER} for the formal statement). }
\label{fig:brrd_with_rd}
\end{figure}

Equipped with this, we can proceed with

\begin{eqnarray*}
\E{\norm{\xi(\nicefrac{x_i}{\norm{x}_p})}^2} &\overset{\eqref{eq:brd_2}}{=} &\E{\norm{ \norm{x}_p \cdot \signum(x) \cdot \NC(\xi(x))}^2} \\
&=&\E{ \norm{x}_p^2} \cdot \E{\norm{\NC(\xi(x))}^2} \\
&\overset{\text{Theorem.~\ref{LEM:BR_QUANT}}}{\leq}& \frac{9}{8} \E{\norm{\norm{x}_p\signum(x)\xi(x)}^2} \\
&= &\frac{9}{8} \E{\norm{\SDs{2^{s-1}}}^2(x)} \\
&\leq & \frac{9}{8}(\omega+1),
\end{eqnarray*}
which concludes the proof.

\subsection{Natural Compression and Dithering Allow for Fast Aggregation}

Besides communication savings, our new compression operators $\NC$ (natural compression) and $\ND$ (natural dithering) bring another advantage, which is {\em ease of aggregation}. Firstly, our updates allow in-network aggregation on a primitive switch, which can speed up training by up to $300\%$~\citep{switchML} itself. Moreover, our updates are so simple that if one uses integer format on the master side for update aggregation, then our updates have just one non-zero bit, which leads to additional speed up. For this reason, one needs to operate with at least $64$ bits during the aggregation step, which is the reason why we also do $\NC$ compression on the master side; and hence we need to transmit just exponent to workers. Moreover, the translation from floats to integers and back is computation-free due to structure of our updates. Lastly, for $\ND$ compression we obtain additional speed up with respect to standard randomized dithering $\SD$ as our levels are computationally less expensive due to their natural compatibility with floating points. In addition, for  effective communication one needs to communicate signs, norm and levels as a tuple for both $\ND$ and $\SD$, which needs to be then multiplied back on the master side. For $\ND$, this is   just the summation of exponents rather than actual multiplication as is the case for $\SD$.

\section{Details and Proofs for Section~\ref{SEC:SGD}}
\label{sec:gen_sgd}

\subsection{Algorithm}
\label{sec:algorithm}

\begin{algorithm}[!h]
\begin{algorithmic}
 \STATE {\bfseries Input:} learning rates $\{\eta^k\}_{k=0}^{T} > 0$, initial vector $x^0$
 \FOR{$k=0,1,\dots T$}
	\STATE {\bfseries Parallel: Worker side}
  	\FOR{$i=1,\dots,n$ }
		\STATE compute  a stochastic gradient $g_i(x^k)$ (of $f_i$ at $x^k$) 
		\STATE compress it $\Delta_i^k = \cC_{W_i}(g_i(x^k))$\;
 	\ENDFOR
 	\STATE {\bfseries Master side}
 	\STATE aggregate $ \Delta^k = \sum_{i=1}^n \Delta_i^k$
 	\STATE compress $g^k = \cC_M(\Delta^k)$ and broadcast to each worker
 	\STATE {\bfseries Parallel: Worker side}
  	\FOR{$i=1,\dots,n$ }
		\STATE $x^{k+1} = x^k - \tfrac{\eta^{k}}{n}g^k$\; 
 	\ENDFOR
 \ENDFOR

\end{algorithmic}  
\caption{Distributed SGD with bidirectional compression}
\label{ALG:ARBSGD}
\end{algorithm}

\subsection{Assumptions and Definitions}

Formal definitions of some concepts used in Section~ follows:
\begin{definition}
\label{def:stoch_grad}
Let $f_i: \reals^d \to \R$ be fixed function.
A \emph{stochastic gradient} for $f_i$ is a random function $g_i(x)$ so that $\E{g_i(x)} = \nabla f_i(x)$.
\end{definition}

In order to obtain the rate, we introduce additional assumptions on $g_i(x)$ and $\nabla f_i(x)$.

\begin{assumption}[Bounded Variance]
\label{as:bounded_variance}
We say the stochastic gradient has variance at most $\sigma_i^2$ if $\E{\norm{g_i(x) - \nabla f_i(x) }^2} \leq \sigma_i^2$ for all $x \in \reals^d$. Moreover, let $\sigma^2 = \frac{1}{n} \sum_{i=1}^n \sigma_i^2$.
\end{assumption}

\begin{assumption}[Similarity]
\label{as:sim}
We say the variance of gradient among nodes is  at most $\zeta_i^2$ if $$\norm{\nabla f_i(x) - \nabla f(x) }^2 \leq \zeta_i^2$$ for all $x \in \reals^d$. Moreover, let $\zeta^2 = \frac{1}{n} \sum_{i=1}^n \zeta_i^2$.
\end{assumption}

Moreover, we assume that $f$ is $L$-smooth (gradient is $L$-Lipschitz). These are classical assumptions for non-convex SGD \citep{ghadimi2013stochastic,jiang,mishchenko2019distributed} and comparing to some previous works \citep{qsgd}, our analysis does not require bounded iterates and bounded the second moment of the stochastic gradient. Assumption~\ref{as:sim} is automatically satisfied with $\zeta^2 = 0$ if every worker has access to the whole dataset. If one does not like Assumption~\ref{as:sim} one can use the DIANA algorithm \citep{diana2} as a base algorithm instead of SGD, then there is no need for this assumption. For simplicity, we decide to pursue just SGD analysis and we keep Assumption~\ref{as:sim}.

\subsection{Description of Algorithm~\ref{ALG:ARBSGD}}

Let us describe Algorithm~\ref{ALG:ARBSGD}.  First, each worker computes its own stochastic gradient $g_i(x^k)$, this is then compressed using a compression operator $\cC_{W_i}$ (this can be different for every node, for simplicity, one can assume that they are all the same) and send to the master node. The master node then aggregates the updates from all the workers, compress with its own operator $\cC_M$ and broadcasts update back to the workers, which update their local copy of the solution parameter $x$. 

Note that the communication of the updates can be also done in all-to-all fashion, which implicitly results in $\cC_M$ being the identity operator. Another application, which is one of the key motivations of our natural compression and natural dithering operators, is {\em in-network aggregation} \citep{switchML}. In this setup, the master node is a {\em network switch}. However, current network switches can only perform addition (not even average) of integers. 

\subsection{Three Lemmas Needed for the Proof of Theorem~\ref{THM:ARBSGD_SHORT}}

Before we proceed with the theoretical guarantees for Algorithm~\ref{ALG:ARBSGD} in smooth non-convex setting, we first state three lemmas which are used to bound the variance of $g^k$ as a stochastic estimator of the true gradient $\nabla f(x^k)$. In this sense compression at the master-node has the effect of injecting additional variance into the gradient estimator. Unlike in SGD, where stochasticity is used to speed up computation, here we use it to reduce communication.

\begin{lemma}[Tower property + Compression]
\label{lem:tow_prop}
If $\cC \in \mathbb{B}(\omega)$ and $z$ is a  random vector independent of $\cC$, then
\begin{equation}
\label{eq:tow_prop}
\E{\norm{\cC(z) - z}^2} \leq\omega\E{\norm{z}^2}; \qquad  \E{\norm{\cC(z)}^2} \leq (\omega + 1) \E{\norm{z}^2}\;.
\end{equation}
\end{lemma}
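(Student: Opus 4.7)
The plan is to reduce to the deterministic input case, where the bound follows directly from the definition of $\mathbb{B}(\omega)$, and then apply the tower property of conditional expectation to lift the result to random $z$.

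First I would record the deterministic version of both inequalities. The second moment bound $\E{\norm{\cC(x)}^2}\leq (\omega+1)\norm{x}^2$ for deterministic $x$ is exactly condition \eqref{def:Q-second_moment} in Definition~\ref{def:omegaquant}. For the variance bound, I would combine this with unbiasedness $\E{\cC(x)}=x$ via the identity $\Esimple\norm{Y-\Esimple Y}^2 = \Esimple\norm{Y}^2 - \norm{\Esimple Y}^2$ applied to $Y=\cC(x)$, which gives
\begin{equation*}
\E{\norm{\cC(x)-x}^2} \;=\; \E{\norm{\cC(x)}^2} - \norm{x}^2 \;\leq\; \omega\norm{x}^2.
\end{equation*}
This is the same fact already mentioned in the remark immediately after Definition~\ref{def:omegaquant}.

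Next I would extend both bounds to random $z$ by conditioning. Since the internal randomness of $\cC$ is independent of $z$, conditional on $z$ the vector $\cC(z)$ is exactly the compression operator applied to a deterministic input. Hence almost surely
\begin{equation*}
\E{\norm{\cC(z)-z}^2 \mid z} \leq \omega\norm{z}^2, \qquad \E{\norm{\cC(z)}^2 \mid z} \leq (\omega+1)\norm{z}^2.
\end{equation*}
Taking outer expectation in each inequality and using the tower property $\E{X}=\E{\E{X\mid z}}$ yields the two claims in \eqref{eq:tow_prop}.

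This lemma is really just a bookkeeping statement, so I do not anticipate any genuine obstacle. The only subtle point worth flagging explicitly is that the compression randomness is independent of the randomness generating $z$; without this, conditioning on $z$ would not freeze $\cC(z)$ into the deterministic-input regime of Definition~\ref{def:omegaquant}, and the tower-property step would not go through.
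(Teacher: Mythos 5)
Your proposal is correct and follows essentially the same route as the paper: derive the deterministic-input variance bound from unbiasedness and the second-moment condition (the paper simply cites the remark after Definition~\ref{def:omegaquant} for this), condition on $z$, and lift via the tower property. Your explicit flagging of the independence between the compression randomness and $z$ is a detail the paper leaves implicit, but the argument is the same.
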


\begin{proof}
Recall from the discussion following Definition~\ref{def:omegaquant} that the variance of a compression operator $\cC\in \mathbb{B}(\omega)$ can be bounded as
\[
\E{\norm{\cC(x) - x}^2} \leq \omega \norm{x}^2, \qquad \forall x\in \R^d.
\]
Using this with $z=x$, this can be written in the form
\begin{align}
\E{\norm{\cC(z) - z}^2 \;|\; z} \leq \omega \norm{z}^2, \qquad \forall x\in \R^d\; , \label{def:omega_proof_ug}
\end{align}
which we can use in our argument:
\begin{eqnarray*}
\E{\norm{\cC(z) - z}^2} &=&  \E{\E{\norm{\cC(z) - z}^2 \;|\; z}} \\
&\overset{\eqref{def:omega_proof_ug}}{\leq} & \E{ \omega\norm{z}^2 }  \\
&= & \omega  \E{\norm{z}^2}.
\end{eqnarray*}
The second inequality can be proved exactly same way.
\end{proof}

\begin{lemma}[Local compression variance]
\label{lem:local_var}
Suppose $x$ is fixed, $\cC \in \mathbb{B}(\omega)$, and $g_i(x)$ is an unbiased estimator of $\nabla f_i(x)$. Then
\begin{equation}
\label{eq:local_var}
\E{\norm{\cC(g_i(x)) - \nabla f_i(x)}^2} \leq (\omega + 1)\sigma_i^2 + \omega\norm{\nabla f_i(x)}^2.
\end{equation}
\end{lemma}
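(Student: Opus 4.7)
The plan is to use the bias–variance decomposition combined with the tower property of conditional expectation, leveraging Lemma~\ref{lem:tow_prop} that was just established.

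First I would observe that $\cC(g_i(x))$ is itself an unbiased estimator of $\nabla f_i(x)$. Indeed, conditioning on $g_i(x)$ and using unbiasedness of $\cC$,
\begin{equation*}
\E{\cC(g_i(x))} = \E{\E{\cC(g_i(x)) \mid g_i(x)}} = \E{g_i(x)} = \nabla f_i(x),
\end{equation*}
where the inner equality uses \eqref{def:Q-unbiased} and the outer equality uses Definition~\ref{def:stoch_grad}. This lets me rewrite the quantity of interest via the identity $\Esimple\norm{Z - \Esimple Z}^2 = \Esimple\norm{Z}^2 - \norm{\Esimple Z}^2$ applied to $Z = \cC(g_i(x))$:
\begin{equation*}
\E{\norm{\cC(g_i(x)) - \nabla f_i(x)}^2} = \E{\norm{\cC(g_i(x))}^2} - \norm{\nabla f_i(x)}^2.
\end{equation*}

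Next I would bound $\E{\norm{\cC(g_i(x))}^2}$ by applying the second inequality of Lemma~\ref{lem:tow_prop} with $z = g_i(x)$, yielding
\begin{equation*}
\E{\norm{\cC(g_i(x))}^2} \leq (\omega+1)\, \E{\norm{g_i(x)}^2}.
\end{equation*}
Then I would expand $\E{\norm{g_i(x)}^2}$ using the same bias–variance identity together with Assumption~\ref{as:bounded_variance} and unbiasedness of $g_i(x)$:
\begin{equation*}
\E{\norm{g_i(x)}^2} = \E{\norm{g_i(x) - \nabla f_i(x)}^2} + \norm{\nabla f_i(x)}^2 \leq \sigma_i^2 + \norm{\nabla f_i(x)}^2.
\end{equation*}

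Combining the displays gives
\begin{equation*}
\E{\norm{\cC(g_i(x)) - \nabla f_i(x)}^2} \leq (\omega+1)(\sigma_i^2 + \norm{\nabla f_i(x)}^2) - \norm{\nabla f_i(x)}^2 = (\omega+1)\sigma_i^2 + \omega \norm{\nabla f_i(x)}^2,
\end{equation*}
which is exactly \eqref{eq:local_var}. There is no genuine obstacle here; the only subtlety is to correctly sequence the two independent sources of noise (stochastic gradient sampling and compression) by conditioning on $g_i(x)$ before invoking the compression bound, so that the tower property applies cleanly.
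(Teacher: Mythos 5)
Your proof is correct and takes essentially the same route as the paper's: both arguments rest on Lemma~\ref{lem:tow_prop}, the bias--variance identity $\Esimple\norm{Z-\Esimple Z}^2=\Esimple\norm{Z}^2-\norm{\Esimple Z}^2$, and Assumption~\ref{as:bounded_variance}. The only cosmetic difference is that you subtract the squared mean of $\cC(g_i(x))$ at the outset and invoke the second inequality of Lemma~\ref{lem:tow_prop}, whereas the paper first splits the error at the intermediate point $g_i(x)$ into compression noise plus sampling noise and uses the first inequality; the resulting algebra is identical.
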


\begin{proof}
\begin{eqnarray*}
\E{\norm{\cC(g_i(x)) - \nabla f_i(x)}^2} &\overset{\text{Def.}~\ref{def:stoch_grad} + \eqref{def:Q-unbiased_sec_moment}}{=}& \E{\norm{\cC(g_i(x)) - g_i(x)}^2} + \E{\norm{g_i(x) - \nabla f_i(x)}^2} \\
&\overset{\eqref{eq:tow_prop}}{\leq}& \omega\E{\norm{g_i(x)}^2} +  \E{\norm{g_i(x) - \nabla f_i(x)}^2} \\
&\overset{\text{Def.}~\ref{def:stoch_grad} + \eqref{def:Q-unbiased_sec_moment}}{=}& (\omega + 1) \E{\norm{g_i(x) - \nabla f_i(x)}^2} + \omega \norm{\nabla f_i(x)}^2 \\
&\overset{\text{Assum.}~\ref{as:bounded_variance}}{\leq}&  (\omega + 1)\sigma_i^2 + \omega\norm{\nabla f_i(x)}^2.
\end{eqnarray*}
\end{proof}

\begin{lemma}[Global compression variance]
\label{lem:global_var}
Suppose $x$ is fixed, $\cC_{W_i} \in \mathbb{B}(\omega_{W_i})$ for all $i$, $\cC_M \in\mathbb{B}(\omega_M)$, and $g_i(x)$ is an unbiased estimator of $\nabla f_i(x)$ for all $i$. Then
\begin{equation}
\label{eq:global_var}
\E{\norm{\frac{1}{n}\cC_M\left(\sum_{i=1}^n \cC_{W_i}(g_i(x))\right)}^2} \leq 
\alpha + \beta  \norm{\nabla f(x)}^2,
\end{equation}
where $\omega_W = \max_{i \in [n]}\omega_{W_i}$ and 
\begin{align}
\label{eq:alpha_beta}
\alpha = \frac{(\omega_M+1)(\omega_W+1)}{n}\sigma^2 + \frac{(\omega_M+1)\omega_W}{n} \zeta^2\,, &   &
\beta = 1+ \omega_M + \frac{(\omega_M+1)\omega_W}{n}  \,.
\end{align}
\end{lemma}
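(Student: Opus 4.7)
The plan is to peel off the two layers of compression one at a time via the tower property, introducing three variance terms (master compression, worker compression, stochastic gradient) plus a signal term involving $\nabla f(x)$. Throughout I would rely on the standard identity $\E{\|Z\|^2} = \E{\|Z - \E Z\|^2} + \|\E Z\|^2$ applied conditionally, and on the fact that the random draws of $\cC_M$, of the $\cC_{W_i}$'s, and of the stochastic gradients $g_i(x)$ are mutually independent.

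First, let $\Delta \eqdef \sum_{i=1}^n \cC_{W_i}(g_i(x))$. Conditioning on $\Delta$ and applying the master-side step, unbiasedness of $\cC_M$ gives
\begin{equation*}
\E{\norm{\cC_M(\Delta)}^2} = \E{\norm{\cC_M(\Delta) - \Delta}^2} + \E{\norm{\Delta}^2} \leq (\omega_M+1)\,\E{\norm{\Delta}^2},
\end{equation*}
where the inequality uses Lemma~\ref{lem:tow_prop} conditionally on $\Delta$.

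Second, I would bound $\E{\norm{\Delta}^2}$ by conditioning on $(g_1(x),\dots,g_n(x))$. Since the $\cC_{W_i}$'s are independent and unbiased, the cross terms in $\E{\norm{\sum_i (\cC_{W_i}(g_i(x))-g_i(x))}^2 \mid g_1,\dots,g_n}$ vanish and the squared norm splits as a sum, so
\begin{equation*}
\E{\norm{\Delta}^2} = \sum_{i=1}^n \E{\norm{\cC_{W_i}(g_i(x))-g_i(x)}^2} + \E{\norm{\textstyle\sum_i g_i(x)}^2} \leq \omega_W \sum_{i=1}^n \E{\norm{g_i(x)}^2} + \E{\norm{\textstyle\sum_i g_i(x)}^2}.
\end{equation*}
For the first sum I use $\E{\norm{g_i(x)}^2} \leq \sigma_i^2 + \norm{\nabla f_i(x)}^2$ from Assumption~\ref{as:bounded_variance}. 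For the second term, independence of the $g_i$'s across workers and the mean-zero decomposition $g_i - \nabla f_i$ gives
\begin{equation*}
\E{\norm{\textstyle\sum_i g_i(x)}^2} = \sum_{i=1}^n \E{\norm{g_i(x) - \nabla f_i(x)}^2} + \norm{\textstyle\sum_i \nabla f_i(x)}^2 \leq n\sigma^2 + n^2 \norm{\nabla f(x)}^2.
\end{equation*}

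Third, the key algebraic step is to handle $\sum_i \norm{\nabla f_i(x)}^2$. Here I would expand around $\nabla f(x)$; the crucial observation is that $\sum_i (\nabla f_i(x) - \nabla f(x)) = 0$, so the cross term cancels exactly and yields
\begin{equation*}
\sum_{i=1}^n \norm{\nabla f_i(x)}^2 = \sum_{i=1}^n \norm{\nabla f_i(x) - \nabla f(x)}^2 + n\norm{\nabla f(x)}^2 \leq n\zeta^2 + n\norm{\nabla f(x)}^2
\end{equation*}
by Assumption~\ref{as:sim}. Assembling the pieces gives
\begin{equation*}
\E{\norm{\Delta}^2} \leq n(\omega_W+1)\sigma^2 + n\omega_W \zeta^2 + n(\omega_W + n)\norm{\nabla f(x)}^2,
\end{equation*}
and multiplying by $(\omega_M+1)/n^2$ produces precisely $\alpha + \beta\norm{\nabla f(x)}^2$ with the stated constants (using $(\omega_M+1)(\omega_W+n)/n = 1+\omega_M + (\omega_M+1)\omega_W/n$). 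The only subtle point, and the one I would be careful to state explicitly, is the independence across workers that makes the cross terms vanish in both the worker-compression and stochastic-gradient expansions; without it the bound would degrade by constant factors.
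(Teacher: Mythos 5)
Your proof is correct and follows essentially the same route as the paper's: tower property plus the conditional bias--variance identity $\E{\norm{Z}^2}=\E{\norm{Z-\E{Z}}^2}+\norm{\E{Z}}^2$, independence across workers to kill cross terms, and the exact cancellation $\sum_i(\nabla f_i(x)-\nabla f(x))=0$, arriving at identical constants. The only difference is bookkeeping: the paper centers everything at $\nabla f(x)$ first and routes the worker layer through Lemma~\ref{lem:local_var}, whereas you bound $\E{\norm{\cC_M(\Delta)}^2}\leq(\omega_M+1)\E{\norm{\Delta}^2}$ up front and then expand $\E{\norm{\Delta}^2}$ directly --- a reorganization, not a different argument.
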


\begin{proof}
For added clarity, let us denote 
\begin{align*}
\Delta = \sum_{i=1}^n \cC_{W_i}(g_i(x)).
\end{align*}
Using this notation, the proof proceeds as follows:
\begin{eqnarray*}
\E{\norm{\frac{1}{n}\cC_M\left(\Delta \right)}^2}  &\overset{\text{Def.}~\ref{def:stoch_grad} + \eqref{def:Q-unbiased_sec_moment}}{=}& \E{\norm{\frac{1}{n}\cC_M\left(\Delta \right) - \nabla f(x)}^2} + \norm{\nabla f(x)}^2 \\
&\overset{\text{Def.}~\ref{def:stoch_grad} + \eqref{def:Q-unbiased_sec_moment}}{=}&  \frac{1}{n^2}\E{\norm{ \cC_M \left(\Delta \right) - \Delta}^2} + \E{\norm{\frac{1}{n}\Delta - \nabla f(x)}^2} + \norm{\nabla f(x)}^2 \\
&\overset{\eqref{eq:tow_prop}}{\leq}& \frac{\omega_M}{n^2}\E{\norm{\Delta}^2} +  \E{\norm{\frac{1}{n} \Delta - \nabla f(x)}^2} + \norm{\nabla f(x)}^2 \\
&\overset{\text{Def.}~\ref{def:stoch_grad} + \eqref{def:Q-unbiased_sec_moment}}{=}& (\omega_M+1)\E{\norm{\frac{1}{n} \Delta - \nabla f(x)}^2} + (\omega_M + 1)\norm{\nabla f(x)}^2 \\
&=& \frac{\omega_M + 1}{n^2} \sum_{i=1}^n\E{\norm{\cC_{W_i}(g_i(x)) - \nabla f_i(x)}^2} + (\omega_M + 1) \norm{\nabla f(x)}^2 \\
&\overset{\eqref{eq:local_var}}{\leq}& \frac{(\omega_M+1)(\omega_W+1)}{n}\sigma^2 + \frac{(\omega_M+1)\omega_W}{n} \frac{1}{n}\sum_{i=1}^n \norm{\nabla f_i(x)}^2 \\
&& \quad + (\omega_M + 1) \norm{\nabla f(x)}^2 \\
&=& \frac{(\omega_M+1)(\omega_W+1)}{n}\sigma^2 + \frac{(\omega_M+1)\omega_W}{n} \frac{1}{n}\sum_{i=1}^n \norm{\nabla f_i(x) - \nabla f(x)}^2\\
&& \quad + \left( 1+ \omega_M + \frac{(\omega_M+1)\omega_W}{n} \right)  \norm{\nabla f(x)}^2  \\
&\overset{\text{Assum.}~\ref{as:sim}}{\leq}& \frac{(\omega_M+1)(\omega_W+1)}{n}\sigma^2 + \frac{(\omega_M+1)\omega_W}{n} \zeta^2 \\
&& \quad + \left( 1+ \omega_M + \frac{(\omega_M+1)\omega_W}{n} \right)  \norm{\nabla f(x)}^2  .
\end{eqnarray*}
\end{proof}

\subsection{Proof of Theorem~\ref{THM:ARBSGD_SHORT}}

Using $L$-smoothness of $f$ and then applying Lemma~\ref{lem:global_var}, we get
\begin{eqnarray*}
\E{f(x^{k+1})} &\leq & \E{f(x^k)} + \E{\dotprod{\nabla f(x^k), x^{k+1} - x^k}} + \frac{L}{2}\E{ \norm{x^{k+1} - x^k}^2} \\
&\leq & \E{f(x^k)} - \eta_k\E{\norm{\nabla f(x^k)}^2} + \frac{L}{2}\eta_k^2\E{ \norm{\frac{g^k}{n}}^2}  \\
&\overset{\eqref{eq:global_var}}{\leq}& \E{f(x^k)} - \left(\eta_k - \frac{L}{2}\beta\eta_k^2\right)\E{\norm{\nabla f(x^k)}^2} + \frac{L}{2}\alpha\eta_k^2\,.
\end{eqnarray*}
Summing these inequalities for $k=0, ..., T-1$, we obtain
\begin{align*}
\sum_{k=0}^{T-1}\left(\eta_k - \frac{L}{2}\beta\eta_k^2\right)\E{\norm{\nabla f(x^k)}^2} \leq f(x^0) - f(x^{\star}) +  \frac{TL\alpha\eta_k^2}{2} \,.
\end{align*}
Taking $\eta_k = \eta$ and assuming \begin{equation}\label{eq:bn989gd8f} \eta < \frac{2}{L \beta },\end{equation} one obtains
\begin{align*}
\E{\norm{\nabla f(x^a)}^2}  \leq \frac{1}{T}\sum_{k=0}^{T-1} \E{\norm{\nabla f(x^k)}^2} \leq \frac{2(f(x^0) - f(x^{\star}))}{T\eta \left(2 - L \beta\eta\right)} +  \frac{L\alpha\eta}{2 - L\beta\eta} \eqdef \delta(\eta,T)\,.
\end{align*}
It is easy to check that if we choose $\eta = \frac{\varepsilon}{L(\alpha + \varepsilon \beta )}$ (which satisfies \eqref{eq:bn989gd8f} for every $\varepsilon>0$), then for any $T \geq \frac{2L(f(x^0) - f(x^{\star})) (\alpha + \epsilon \beta)}{\epsilon^2}$ we have 
$\delta(\eta,T) \leq \varepsilon$, concluding the proof.

\subsection{A Different Stepsize Rule for Theorem~\ref{THM:ARBSGD_SHORT}}

Looking at Theorem~\ref{THM:ARBSGD_SHORT}, one can see that setting step size \[\eta_k = \eta = \sqrt{\frac{2(f(x^0)-f(x^{\star}))}{LT\alpha}} \]  with \[T \geq \frac{L\beta^2(f(x^0)-f(x^{\star}))}{\alpha}\] (number of iterations), we have iteration complexity \[\cO\left( \sqrt{\frac{(\omega_W+1)(\omega_M+1)}{Tn}}\right),\] which will be essentially same as doing no compression on master and using $\cC_W\circ \cC_M$ or $\cC_W \circ \cC_M$ on the workers' side. Our rate generalizes to the rate in \citep{ghadimi2013stochastic} without compression and dependency on the compression operator is better comparing to the linear one in \citep{jiang}\footnote{\citep{jiang}  allows compression on the worker side only.}. Moreover, our rate enjoys linear speed-up in the number of workers $n$, the same as in \citep{ghadimi2013stochastic}. In addition, if one introduces mini-batching on each worker of size $b$ and assuming each worker has access to the whole data, then $\sigma^2 \rightarrow \sigma^2/b$ and $\zeta^2 \rightarrow 0$, which implies \[\cO\left( \sqrt{\frac{(\omega_W+1)(\omega_M+1)}{Tn}}\right) \rightarrow \cO\left( \sqrt{\frac{(\omega_W+1)(\omega_M+1)}{Tbn}}\right),\] and hence one can also obtain linear speed-up in terms of mini-batch size, which matches with the results in \citep{jiang}.

\subsection{SGD with Bidirectional Compression:  Four Models}
\label{sec:dif_regimes}

It is possible to consider several different regimes for our distributed optimization/training setup, depending on factors such as:
\begin{itemize}
\item The relative speed of communication (per bit) from workers to the master and from the master to the workers,
\item The intelligence of the master, i.e., its ability or the lack thereof of the master to perform aggregation of real  numbers (e.g., a switch can only perform integer aggregation),
\item Variability of various resources (speed, memory, etc) among the workers.
\end{itemize}

For simplicity, we will consider four situations/regimes only, summarized in Table~\ref{tbl:4regimes}.

\begin{table}[!t]
\begin{tabular}{|c|cc|}
\hline
 &  \begin{tabular}{c} Master can aggregate \\ real numbers \\ (e.g., a workstation) \end{tabular} & \begin{tabular}{c}Master can aggregate \\ integers only \\ (e.g., SwitchML~\citep{switchML})\end{tabular}\\
 \hline
Same communication speed both ways & MODEL 1 & MODEL 3  \\
Master communicates infinitely fast &  MODEL 2 & MODEL 4 \\
\hline
\end{tabular}
\caption{Four theoretical models.}
\label{tbl:4regimes}
\end{table}

\textbf{Direct consequences of Theorem~\ref{THM:ARBSGD_SHORT}}:
 Notice that \eqref{eq:main_thm_SGD} posits a $\cO(\nicefrac{1}{T})$ convergence of the gradient norm to the value $\tfrac{\alpha L \eta }{2-\beta L \eta}$, which depends linearly on $\alpha$. In view of \eqref{eq:alpha_beta_out}, the more compression we perform, the larger this value. More interestingly, assume now that the same compression operator is used at each worker: $\cC_W =\cC_{W_i}$. Let $\cC_W \in \mathbb{B}(\omega_W)$ and  $\cC_M \in \mathbb{B}(\omega_M)$ be the compression on master side. Then,  $T(\omega_M, \omega_W) \eqdef 2L(f(x^0)-f(x^{\star}))\varepsilon^{-2}(\alpha + \varepsilon \beta)$  is its iteration complexity.  In the special case of equal data on all nodes, i.e., $\zeta=0$, we get $\alpha =  \nicefrac{(\omega_M+1)(\omega_W+1) \sigma^2}{n}$ and $\beta = (\omega_M+1)\left(1+\nicefrac{\omega_W}{n}\right)$. If no compression is used, then $\omega_W = \omega_M = 0$ and $\alpha +\varepsilon \beta = \nicefrac{\sigma^2}{n} +\varepsilon$. So, the {\em relative slowdown} of Algorithm~\ref{ALG:ARBSGD} used {\em with} compression compared to Algorithm~\ref{ALG:ARBSGD} used {\em without} compression is given by
\begin{equation}  \tfrac{T(\omega_M, \omega_W)}{T(0,0)} 
= \frac{\left(\nicefrac{(\omega_W+1)\sigma^2}{n} +(1+\nicefrac{\omega_W}{n})\varepsilon\right)}{\nicefrac{\sigma^2}{n} +\varepsilon} (\omega_M+1) \in \left( \omega_M+1, (\omega_M+1)(\omega_W+1) \right].\end{equation}
The upper bound is achieved for $n=1$ (or for any $n$ and $\varepsilon\to 0$), and the lower bound is achieved in the limit as $n\to \infty$. So, {\em the slowdown caused by compression on worker side decreases with $n$.} More importantly, {\em the savings in communication due to compression can outweigh the iteration slowdown, which leads to an overall speedup!}

\subsubsection{Model 1}
First, we start with the comparison, where we assume that transmitting one bit from worker to node takes the same amount of time as from master to worker. 
\begin{table}[H]
\begin{center}
\footnotesize
\begin{tabular}{|c|c|c|c|}
\hline
 \begin{tabular}{c} Compression\\
 $\cC \in \mathbb{B}(\omega)$ \end{tabular}  &   \begin{tabular}{c} No.\ iterations  \\
 $T(\omega) = \cO((\omega+1)^{1+\theta})$ \end{tabular} &  \begin{tabular}{c} Bits per iteration \\  $W_i \mapsto M + M \mapsto W_i$
 \end{tabular}  &  \begin{tabular}{c} Speedup \\
 $\frac{T(0)B(0)}{T(\omega)B(\omega)}$ \end{tabular} \\
\hline 
None & $1$ & $2 \cdot 32d $  & $1$   \\
 {\color{blue}$\NC$}  &  {\color{blue}$(\frac{9}{8})^{1+\theta}$ }&   {\color{blue}$2\cdot 9d$} &   {\color{blue}$2.81\times $--$3.16\times$} \\
$S^q$  & $(\frac{d}{q})^{1+\theta}$ &  $2\cdot(33+\log_2d)q$ &   $0.06\times$--$0.60\times$ \\
 {\color{blue}$S^q \circ \NC$ } &  {\color{blue}$(\frac{9d}{8q})^{1+\theta}$} &   {\color{blue}$2\cdot (10+\log_2d)q$} &   {\color{blue}$0.09\times$--$0.98\times$} \\
$\SDs{2^{s-1}}$ &  $ \left( 1  + \sqrt{d} 2^{1-s} \kappa\right)^{1+\theta}$ &  $2\cdot (32+ d(s+2))$ & $1.67\times$--$1.78\times$ \\
 {\color{blue}$\ND$} &  {\color{blue} $ \left( \frac{81}{64}   + \frac{9}{8} \sqrt{d} 2^{1-s} \kappa\right)^{1+\theta}$ }&   {\color{blue}$2\cdot(8+ d(\log_2 s+2))$ }& {\color{blue}$3.19\times$--$4.10\times$} \\
\hline
\end{tabular}
\end{center} 
\caption{Our compression techniques can speed up the overall runtime  (number of iterations $T(\omega)$ times the bits sent per iteration) of distributed SGD. We assume {\em binary $32$} floating point representation, bi-directional compression using $\cC$, and the same speed of communication from worker to master ($W_i \mapsto M$) and back ($M \mapsto W_i$). The relative number of iterations (communications) sufficient to guarantee $\varepsilon$ optimality is
$T'(\omega) \eqdef (\omega+1)^\theta$, where  $\theta\in (1,2]$ (see Theorem~\ref{THM:ARBSGD_SHORT}). Note that big $n$ regime leads to a better iteration bound $T(\omega)$ since for big $n$ we have $\theta\approx 1$, while for small $n$ we have $\theta\approx 2$. For dithering, $\kappa =  \min \{1, \sqrt{d }   2^{1-s}\}$. The $2.81\times $ speedup for $\NC$ is obtained for $\theta=1$, and the $3.16\times$ speedup for $\theta=0$. The speedup was calculated for $d=10^6$, $p=2$ (dithering),optimal choice of $s$ (dithering), and $q = 0.1d$ (sparsification).}
\label{tab:algo-comparison}
\end{table}
\subsubsection{Model 2}

For the second model, we assume that the master communicates much faster than workers thus communication from workers is the bottleneck and we don't need to compress updates after aggregation, thus $\cC_M$ is identity operator with $\omega_M = 0$. This is the case  we mention in the main paper. For completeness, we provide the same table here.

\begin{table}[!h]
\begin{center}
\footnotesize
\begin{tabular}{|c|c|c|c|c|}
\hline
Approach &$\cC_{W_i}$  &  No. iterations  &  Bits per $1$ iter.  & Speedup  \\
 &   & $T'(\omega_W) = \cO((\omega_W+1)^\theta)$  &  $W_i \mapsto M$ & Factor  \\
\hline 
Baseline & identity  & $1$  & $32d$  & 1   \\
\bf {\color{blue} New} & {\color{blue}$\NC$ } & {\color{blue}$(\nicefrac{9}{8})^{\theta}$} &  {\color{blue}$9d$} & {\color{blue}$3.2 \times$--$3.6\times$ } \\
\hline
Sparsification & $\cS^q$ & $(\nicefrac{d}{q})^{\theta}$ & $(33 +\log_2d)q$ & $0.6\times$--$6.0\times$   \\
\bf {\color{blue} New} & ${\color{blue}\NC} \circ \cS^q$  &  {\color{blue}$(\nicefrac{9d}{8q})^{\theta}$} & {\color{blue}$(10+\log_2d)q$} & {\color{blue}$1.0\times$--$10.7\times$ } \\
\hline
Dithering & $\SDs{2^{s-1}}$  & $(1 +\kappa d^{\nicefrac{1}{r}}2^{1-s})^{\theta}$  & $31+d(2+s)$ & $1.8\times$--$15.9\times$  \\
\bf {\color{blue} New} & {\color{blue}$\ND$}  & {\color{blue}$(\nicefrac{9}{8} +  \kappa d^{\frac{1}{r}}2^{1-s})^\theta$} &  {\color{blue}$31+d(2+\log_2s  )$} & {\color{blue}$4.1\times$--$16.0\times$} \\
\hline
\end{tabular}
\end{center} 
\caption{
The overall speedup of distributed SGD with compression on nodes via $\cC_{W_i}$ compared to the baseline variant without compression. Speed is measured by multiplying the \# communication rounds (i.e., iterations  $T(\omega_W)$) by the bits sent from worker to master ($W_i \mapsto M$) per 1 iteration. We neglect $M \mapsto W_i$ communication as this is much faster in practice. We assume {\em binary $32$} floating point representation. The relative \# iterations sufficient to guarantee $\varepsilon$ optimality is $T'(\omega_W) \eqdef(\omega_W+1)^\theta$, where  $\theta\in (0,1]$ (see Theorem~\ref{THM:ARBSGD_SHORT}). Note that in the big $n$ regime, the iteration bound $T(\omega_W)$ is better due to $\theta\approx 0$ (however, this is not very practical as $n$ is usually relatively small), while for small $n$ we have $\theta\approx 1$. For dithering, $r=\min\{p,2\}$,  $\kappa =  \min \{1, \sqrt{d}   2^{1-s}\}$. The lower bound for the speedup factor is obtained for $\theta=1$, and the upper bound for $\theta=0$. The speedup factor $\left(\frac{T(\omega_W)\cdot\text{\# Bits}}{T(0)\cdot 32d}\right)$ was calculated for $d=10^6$, $q=0.1d$, $p=2$ and the optimal choice of $s$ with respect to speedup.}
\label{tab:algo-comparison_2_copy}
\end{table}

\subsubsection{Model 3}
Similarly to previous sections, we also do the comparison for methods that might be used for In-Network Aggregation. Note that for INA, it is useful to do compression also from master back to workers as the master works just with integers, hence in order to be compatible with floats, it needs to use bigger integers format. Moreover,  $\NC$ compression guarantees free translation to floats. For the third model,  we assume we have the same assumptions on communication as for Model 1.  As a baseline, we take SGD with $\NC$ as this is the most simple analyzable method, which supports INA.

\begin{table}[!h]
\begin{center}
\footnotesize
\begin{tabular}{|c|c|c|c|c|c|}
\hline
Approach &$\cC$  &  Slowdown  &    Bits per iter. & Speedup  \\
 &  & (iters / baseline)  &  $W_i \mapsto M + M \mapsto W_i$ & factor \\
\hline 
\bf {\color{blue} Baseline} & {\color{blue}$\NC$ }  &  {\color{blue}$1$} &  {\color{blue}$2\cdot 9d$}  & {\color{blue}$1$ } \\
\hline
\bf {\color{blue} Sparsification} & $\cS^q \circ {\color{blue}\NC} $ &  {\color{blue}$(\nicefrac{d}{q})^{1+\theta}$} & {\color{blue}$2\cdot(10+\log_2 d)q$} &   {\color{blue}$0.03\times$--$0.30\times$} \\
\hline
\bf {\color{blue} Dithering} & {\color{blue}$\ND$} & {\color{blue}$(\nicefrac{9}{8} +\kappa d^{\frac{1}{r}}2^{1-s})^{1+\theta}$} &  {\color{blue}$2\cdot(8+d(2 +\log_2 s))$} &   {\color{blue}$1.14\times$--$1.30\times$} \\
\hline
\end{tabular}
\end{center} 
\caption{Overall speedup (number of iterations $T$ times the bits sent per iteration ($W_i \mapsto M + M \mapsto W_i$) of distributed SGD. We assume {\em binary $32$} floating point representation, bi-directional compression using the same compression $\cC$. The relative number of iterations (communications) sufficient to guarantee $\varepsilon$ optimality is displayed in the third column, where $\theta\in (0,1]$ (see Theorem~\ref{THM:ARBSGD_SHORT}). Note that the big $n$ regime leads to a smaller slowdown since we have $\theta\approx 0$, while for small $n$, we have $\theta\approx 1$. For dithering, we chose $p=2$ and $\kappa =  \min \{1, \sqrt{d }   2^{1-s}\}$. The speedup factor was calculated for $d=10^6$, $p=2$ (dithering), the optimal choice of $s$ (dithering), and $q = 0.1d$ (sparsification).
}
\label{tab:algo-comparison_3}
\end{table}

\subsubsection{Model 4}

Here, we do the same comparison as for Model 3. In contrast, for communication we use the same assumptions as for Model 2.

\begin{table}[!h]
\begin{center}
\footnotesize
\begin{tabular}{|c|c|c|c|c|c|c|}
\hline
Approach &$\cC_{W_i}$ & $\cC_M$ &  Slowdown  &  $W_i \mapsto M$  commun. & Speedup  \\
 &  &  & (iters / baseline)  &  (bits / iteration) & factor \\
\hline 
\bf {\color{blue} Baseline} & {\color{blue}$\NC$ } & {\color{blue}$\NC$ }  &  {\color{blue}$1$} &  {\color{blue}$9d$}  & {\color{blue}$1$ } \\
\hline
\bf {\color{blue} Sparsification} & $\cS^q \circ {\color{blue}\NC} $  & {\color{blue}$\NC$ } &  {\color{blue}$(\nicefrac{d}{q})^\theta$} & {\color{blue}$(10+\log_2 d)q$} &   {\color{blue}$0.30\times$--$3.00\times$} \\
\hline
\bf {\color{blue} Dithering} & {\color{blue}$\ND$}  & {\color{blue}$\NC$ } & {\color{blue}$(\nicefrac{9}{8} +\kappa d^{\frac{1}{r}}2^{1-s})^\theta$} &  {\color{blue}$(8+d(2 +\log_2 s ))$} &   {\color{blue}$1.3\times$--$4.5\times$} \\
\hline
\end{tabular}
\end{center} 
\caption{Overall speedup (number of iterations $T$ times the bits sent per iteration ($W_i \mapsto M$) of distributed SGD. We assume {\em binary $32$} floating point representation, bi-directional compression using $\cC_{W_i}$, $\cC_M$. The relative number of iterations (communications) sufficient to guarantee $\varepsilon$ optimality is
displayed in the third column, where $\theta\in (0,1]$ (see Theorem~\ref{THM:ARBSGD_SHORT}). Note that the big $n$ regime leads to a smaller slowdown since we have $\theta\approx 0$, while for small $n$, we have $\theta\approx 1$. For dithering, we chose $p=2$ and $\kappa =  \min \{1, \sqrt{d }   2^{1-s}\}$. The speedup factor was calculated for $d=10^6$, $p=2$ (dithering), the optimal choice of $s$ (dithering), and $q = 0.1d$ (sparsification).
}
\label{tab:algo-comparison_4}
\end{table}

\subsubsection{Communication strategies used in Tables~\ref{tab:algo-comparison_2}, \ref{tab:algo-comparison}, \ref{tab:algo-comparison_3}, \ref{tab:algo-comparison_4}}

{\bf No Compression or $\NC$.}  Each worker has to communicate a (possibly dense) $d$ dimensional vector of scalars, each represented by $32$ or $9$ bits, respectively. 

{\bf Sparsification $\cS^q$ with or without $\NC$.} Each worker has to communicate a sparse vector of $q$ entries with full $32$  or limited $9$ bit precision. We assume that $q$ is small, hence one would prefer to transmit positions of non-zeros, which takes $q(\log_2 (d) + 1)$ additional bits for each worker.

{\bf Dithering ($\SD$ or $\ND$).}  Each worker has to communicate $31$($8$ -- $\ND$) bits (sign is always positive, so does not need to be communicated) for the norm, and $\log_2(s)+1$ bits for every coordinate for level encoding (assuming uniform encoding) and $1$ bit for the sign.

\subsection{Sparsification - Formal Definition}

Here we give a  formal definition of  the sparsification operator $\cS^q$ used in Tables~\ref{tab:algo-comparison_2}, \ref{tab:algo-comparison},\ref{tab:algo-comparison_3},\ref{tab:algo-comparison_4}.

\begin{definition}[Random sparsification]
Let $1\leq q \leq d$ be an integer, and let $\circ$ denote the Hadamard (element-wise) product. The random sparsification operator $\cS^q: \R^d\to \R^d$ is defined as follows: \[\cS^q(x) = \frac{d}{q} \cdot \xi \circ x,\] where $\xi \in \R^d$ is a random vector chosen uniformly from the collection of all binary vectors $y \in  \{0,1\}^d$ with exactly  $q$ nonzero entries (i.e., $ \norm{y}_0 = q\}$).
\end{definition}

The next result describes the variance of $\cS^q$:
\begin{theorem}\label{thm:sparsification_omega}
$\cS^q \in \mathbb{B}(\nicefrac{d}{q}-1)$.
\end{theorem}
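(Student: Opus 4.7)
The plan is to verify the two defining conditions of membership in $\mathbb{B}(\omega)$ by direct calculation, exploiting the facts that (i) each coordinate of the random vector $\xi$ has a simple marginal distribution, and (ii) the entries of $\xi$ are $\{0,1\}$-valued, so $\xi_i^2 = \xi_i$. Because the squared Euclidean norm $\|\xi \circ x\|^2 = \sum_i \xi_i^2 x_i^2$ is diagonal in $\xi$, we will not even need to reason about the (negative) correlations between distinct coordinates $\xi_i, \xi_j$, which is what makes this computation short.

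First, I would establish unbiasedness. By the symmetry of the uniform distribution over all binary vectors with exactly $q$ nonzero entries, each coordinate satisfies $\Pr[\xi_i = 1] = q/d$, hence $\E{\xi_i} = q/d$ for every $i \in [d]$. Therefore
\begin{equation*}
\E{(\cS^q(x))_i} \;=\; \frac{d}{q}\,\E{\xi_i}\,x_i \;=\; \frac{d}{q}\cdot\frac{q}{d}\cdot x_i \;=\; x_i,
\end{equation*}
which gives $\E{\cS^q(x)} = x$ and verifies \eqref{def:Q-unbiased}.

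Next, for the second-moment bound, I would write
\begin{equation*}
\E{\|\cS^q(x)\|^2} \;=\; \frac{d^2}{q^2}\sum_{i=1}^d x_i^2\,\E{\xi_i^2} \;=\; \frac{d^2}{q^2}\sum_{i=1}^d x_i^2\,\E{\xi_i} \;=\; \frac{d^2}{q^2}\cdot\frac{q}{d}\,\|x\|^2 \;=\; \frac{d}{q}\,\|x\|^2,
\end{equation*}
where the second equality uses $\xi_i \in \{0,1\}$. This is exactly $(\omega+1)\|x\|^2$ with $\omega = d/q - 1$, verifying \eqref{def:Q-second_moment} and completing the proof.

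There is no real obstacle here; the only mild subtlety is noticing that, although the coordinates $\xi_i$ are not independent (they satisfy the global constraint $\sum_i \xi_i = q$), the dependence is irrelevant for both checks because unbiasedness only requires the marginals $\E{\xi_i}$ and the squared norm involves no cross terms $\xi_i\xi_j$ with $i \neq j$.
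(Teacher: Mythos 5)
Your proof is correct, and it is the standard direct computation: the paper itself does not spell out an argument but simply defers to Lemma A.1 of \cite{stich2018sparse}, which proceeds along exactly these lines. Your two observations --- that unbiasedness needs only the marginals $\E{\xi_i}=\nicefrac{q}{d}$, and that $\|\xi\circ x\|^2$ contains no cross terms so the dependence among coordinates is irrelevant --- are precisely what make the calculation go through, and your equality $\E{\|\cS^q(x)\|^2}=\nicefrac{d}{q}\,\|x\|^2$ also explains the paper's remark that the bound is tight (e.g.\ at $q=d$).
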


Notice that in the special case $q=d$, $\cS^q$ reduces to the identity operator (i.e., no compression is applied), and Theorem~\ref{thm:sparsification_omega} yields a tight variance  estimate:  $\nicefrac{d}{d}-1=0$.

\begin{proof}
See e.g.~\citep{stich2018sparse}(Lemma A.1).
\end{proof}

Let us now compute the variance of the composition  $\NC \circ \cS^q$. Since $\NC\in \mathbb{B}(\nicefrac{1}{8})$ (Theorem~\ref{LEM:BR_QUANT}) and $\cS^q \in \mathbb{B}(\nicefrac{d}{q}-1)$ (Theorem~\ref{thm:sparsification_omega}),  in view of the our composition result (Theorem~\ref{THM:COMP}) we have 
\begin{equation}\label{eq:nb98fg98gfds}
 \cC_{W} = \NC \circ \cS^q \in \mathbb{B}(\omega_{W}), \qquad \text{where} \qquad \omega_{W} = \frac{1}{8} \left(\frac{d}{q}-1\right) + \frac{1}{8} + \frac{d}{q} -1 = \frac{9d}{8q} -1.
 \end{equation}

\section{Limitations and Extensions}
\label{sec:lim_ext}
 Quantization techniques can be divided into two categories: biased~\citep{Alistarh2018:topk,stich2018sparse} and unbiased~\citep{qsgd,terngrad,tonko}. While the focus of this paper was mainly on unbiased quantizations, it is possible to combine our natural quantization mechanisms in conjunction with biased techniques, such as the Top$K$ sparsifier proposed in~\citep{Dryden2016:topk,Aji2017:topk} and recently  analyzed in~\citep{Alistarh2018:topk,stich2018sparse}, and still obtain convergence guarantees.  We showcase that this combination leads to superior practical performance in our experiments.

\end{document}